\definecolor{antiquebrass}{rgb}{0.8, 0.58, 0.46}
\algrenewcommand\algorithmicrequire{\textbf{Input:}}
\algrenewcommand\algorithmicensure{\textbf{Output:}}
\algnewcommand{\Initialize}[1]{%
  \State \textbf{Initialize:}
  \Statex \hspace*{\algorithmicindent}\parbox[t]{.8\linewidth}{\raggedright #1}
}
\renewcommand{\Return}{\State \textbf{return} }
\renewcommand{\ge}{\geqslant}
\renewcommand{\le}{\leqslant}
\newtheorem{lemma}{Lemma}
\newtheorem{definition}{Definition}
 \setlist{nolistsep,leftmargin=*}
\newcommand{\eps}{\ensuremath{\varepsilon}\xspace}
\renewcommand{\epsilon}{\eps}
\let\mydelta\delta
\renewcommand{\delta}{\ensuremath{\mydelta}\xspace}
\let\myalpha\alpha
\renewcommand{\alpha}{\ensuremath{\myalpha}\xspace}
\let\mystar\star
\renewcommand{\star}{\ensuremath{\mystar}}
\newcommand{\NP}{\ensuremath{\mathsf{NP}}\xspace}
\newcommand{\Pb}{\ensuremath{\mathsf{P}}\xspace}
\newcommand{\NPH}{\ensuremath{\mathsf{NP}}-hard\xspace}
\renewcommand{\AA}{\ensuremath{\mathcal A}\xspace}
\newcommand{\BB}{\ensuremath{\mathcal B}\xspace}
\newcommand{\CC}{\ensuremath{\mathcal C}\xspace}
\newcommand{\DD}{\ensuremath{\mathcal D}\xspace}
\newcommand{\EE}{\ensuremath{\mathcal E}\xspace}
\newcommand{\FF}{\ensuremath{\mathcal F}\xspace}
\newcommand{\GG}{\ensuremath{\mathcal G}\xspace}
\newcommand{\HH}{\ensuremath{\mathcal H}\xspace}
\newcommand{\II}{\ensuremath{\mathcal I}\xspace}
\newcommand{\JJ}{\ensuremath{\mathcal J}\xspace}
\newcommand{\KK}{\ensuremath{\mathcal K}\xspace}
\newcommand{\LL}{\ensuremath{\mathcal L}\xspace}
\newcommand{\MM}{\ensuremath{\mathcal M}\xspace}
\newcommand{\OO}{\ensuremath{\mathcal O}\xspace}
\newcommand{\RR}{\ensuremath{\mathcal R}\xspace}
\renewcommand{\SS}{\ensuremath{\mathcal S}\xspace}
\newcommand{\UU}{\ensuremath{\mathcal U}\xspace}
\newcommand{\VV}{\ensuremath{\mathcal V}\xspace}
\newcommand{\WW}{\ensuremath{\mathcal W}\xspace}
\newcommand{\XX}{\ensuremath{\mathcal X}\xspace}
\newcommand{\YY}{\ensuremath{\mathcal Y}\xspace}
\newcommand{\ZZ}{\ensuremath{\mathcal Z}\xspace}
\newtheorem{theorem}{\bf Theorem}
\newtheorem{corollary}{\bf Corollary}
\crefname{theorem}{Theorem}{Theorems}
\crefname{observation}{Observation}{Observations}
\crefname{lemma}{Lemma}{Lemmas}
\crefname{corollary}{Corollary}{Corollaries}
\crefname{proposition}{Proposition}{Propositions}
\crefname{example}{Example}{Examples}
\crefname{claim}{Claim}{Claims}
\crefname{table}{Table}{Tables}
\crefname{equation}{Inequality}{Inequalities}
\crefname{reductionrule}{Reduction rule}{Reduction rules}
\crefname{section}{Section}{Sections}
\newcommand{\edWD}{\ensuremath{(\eps,\delta)-}{\sc Winner Prediction}\xspace}
\newcommand{\dWD}{\ensuremath{\delta-}{\sc Winner Prediction}\xspace}
\newcommand{\kibitz}[2]{\ifnum\Comments=1{\color{#1}{#2}}\fi}
\title{Sampling-Based Winner Prediction in District-Based Elections}
\author{Palash Dey \and Debajyoti Kar \and Swagato Sanyal\\\texttt{palash.dey@cse.iitkgp.ac.in,debajyoti.kar@iitkgp.ac.in,}\\\texttt{swagato@cse.iitkgp.ac.in}\\IIT Kharagpur, India}
\date{}
\begin{document}

\maketitle
\begin{abstract}
    In a district-based election, we apply a voting rule $r$ to decide the winners in each district, and a candidate who wins in a maximum number of districts is the winner of the election. We present efficient sampling-based algorithms to predict the winner of such district-based election systems in this paper. When $r$ is plurality and the margin of victory is known to be at least \eps fraction of the total population, we present an algorithm to predict the winner. The sample complexity of our algorithm is $\OO\left(\frac{1}{\eps^4}\log \frac{1}{\eps}\log\frac{1}{\delta}\right)$. We complement this result by proving that any algorithm, from a natural class of algorithms, for predicting the winner in a district-based election when $r$ is plurality, must sample at least $\Omega\left(\frac{1}{\eps^4}\log\frac{1}{\delta}\right)$ votes. We then extend this result to any voting rule $r$. Loosely speaking, we show that we can predict the winner of a district-based election with an extra overhead of $\OO\left(\frac{1}{\eps^2}\log\frac{1}{\delta}\right)$ over the sample complexity of predicting the single-district winner under $r$. We further extend our algorithm for the case when the margin of victory is unknown, but we have only two candidates. We then consider the median voting rule when the set of preferences in each district is single-peaked. We show that the winner of a district-based election can be predicted with $\OO\left(\frac{1}{\eps^4}\log\frac{1}{\eps}\log\frac{1}{\delta}\right)$ samples even when the harmonious order in different districts can be different and even unknown. Finally, we also show some results for estimating the margin of victory of a district-based election within both additive and multiplicative error bounds.

\end{abstract}

\section{Introduction}

Voting and election serve as one of the most popular methodologies to aggregate different preferences, eventually choosing one of many candidate options. In political elections, one of the hottest questions for NEWS media and many other people is who will win in the upcoming election~\cite{perse2016media}? To predict the winner of an upcoming election, a pollster typically samples some votes with the hope that the sampled votes will help him/her correctly predict the winner. However, sampling votes, depending on the sampling requirement and procedure, typically involves substantial cost. Hence, a natural goal of the pollster is to minimize the cost, which often translates to minimizing the number of samples, without compromising the quality (or success rate) of prediction. Intuitively speaking, this is the winner prediction problem, which is the main focus of our paper.

The same winner prediction problem becomes relevant not only for predicting the winner of an upcoming election, but also in many other applications like social surveys, post election audit, etc. Organizations and companies, for example, often conduct various surveys to predict the success of their products which they are planning to manufacture. We carry out post election audits on paper ballots to check if there are any human or machine-related errors in the election process~\cite{norden2007post,stark2008conservative,hall2009implementing,rivest2012bayesian,wolchok2010security}.

Bhattacharyya and Dey resolved the sample-complexity of the winner prediction problem for many popular voting rules, for example, $k$-approval, Borda, approval, maximin, simplified Bucklin, and plurality with run off~\cite{DBLP:journals/ai/BhattacharyyaD21}. A voting rule is a function which selects one winner from a set of votes. We refer to the chapter by Zwicker for an introduction to voting and some common voting rules ~\cite{DBLP:reference/choice/Zwicker16}. However, Bhattacharyya and Dey only considered single district elections whereas many real-world election systems, especially political elections in many countries, for example, US Presidential election, Indian general elections, etc. are district based. In a district-based election system, the voters are partitioned into districts. We use some voting rule $r$, the plurality voting rule for US Presidential election and Indian general elections, to select a winner in each district. The candidate (for US Presidential election) or the political party (Indian general elections) who wins in a maximum number of districts is declared as the winner of the election. In the plurality voting system, each voter votes for one of the candidates and the candidate who receives the maximum number of votes is declared as the winner. We study the winner prediction problem for district-based elections in this paper.

\subsection{Our Contribution}

The primary focus of our paper is the \edWD problem, which is defined as follows. 

\begin{definition}[\edWD]
Given an election $E$ with $N$ voters partitioned into $k$ districts where a voting rule $r$ is used to determine the winner of each district, and whose margin of victory is at least $\epsilon N$, compute the winner of the election with probability at least $1-\delta$.
\end{definition}

Our specific contributions are the following. If not mentioned otherwise, we use the plurality voting rule to select the winners in each district.

\begin{enumerate}
    \item We design an algorithm for \edWD with sample complexity $\OO\left(\frac{1}{\epsilon^4}\log \frac{1}{\epsilon}\log \frac{1}{\delta}\right)$ [\Cref{alg:alg1}]. We partially complement this result by showing that any algorithm for \edWD that works by first sampling $l_1$ districts uniformly at random with replacement and then sampling $l_2$ votes uniformly at random with replacement from each of the sampled districts, must satisfy $l_1 = \Omega \left(\frac{1}{\epsilon^2}\log \frac{1}{\delta}\right )$ and $l_2 = \Omega\left (\frac{1}{\epsilon^2}\right)$ even when there are only $2$ candidates and all the districts have equal population [\Cref{thm:epsfourlowerbound}].
    
    \item  We then generalize our above result to any arbitrary voting rule $r$ in each district. Let $\chi_r (m,\epsilon,\delta)$ be the number of samples required so that the predicted winner of a single-district election using using rule $r$ with $n$ voters and $m$ candidates, can be made winner by changing at most $\epsilon n$ votes. Then, using the prediction algorithm for $r$, we design an algorithm for \edWD for $r$ with sample complexity $\OO \left(\frac{1}{\epsilon^2}\log \frac{1}{\delta}\cdot \chi_r (m,\epsilon,\epsilon)\right )$ [\Cref{thm:generalizationofplurality}].
\end{enumerate}

In \edWD, we assume that we know some lower bound on the margin of victory of the election. Obviously this information may not always be available. To cater those situations, we define and study the \dWD problem.

\begin{definition}[\dWD]
Given an election $E$ with $N$ voters partitioned into $k$ districts where a voting rule $r$ is used to determine the winner of each district, compute the winner of the election with probability at least $1-\delta$.
\end{definition}

We note that we have no information on the margin of victory of the election in \dWD.

\begin{enumerate}
    \item We design an algorithm for \dWD with sample complexity $\OO \left(\frac{1}{\epsilon^4}\log^2 \frac{1}{\epsilon \delta}\right)$ when we have only $2$ candidates and the number of voters in each district is at most a constant times the average population of a district [\Cref{alg:alg3}].
    
    \item For arbitrary number of voters in each district, we design an algorithm for \dWD with sample complexity $\OO\left(\frac{1}{\epsilon^6}\log^2 \frac{1}{\epsilon \delta}\right)$ when we have only $2$ candidates [\Cref{alg:alg4}].
\end{enumerate}

We next study the case when median rule is used to decide the winner in each district. The harmonious order with respect to which median rule is used can be different in different districts and may or may not be known. If the harmonious order is unknown in a district, we make the assumption that the preference profile of each voter in that district is single-peaked. We design an algorithm for \edWD for this case with sample complexity $\OO \left(\frac{1}{\epsilon^4}\log \frac{1}{\epsilon}\log \frac{1}{\delta}\right)$ [\Cref{thm:winnerdetmedianrule,thm:winnerdetmedianrule1,cor:mediancorollary}]. 

In all of the above algorithms, we assumed that we were allowed to get uniform random samples from the population. Obviously this might not be the case. We therefore define and study the $(\epsilon,\delta,\gamma)-${\sc{Winner-Determination}} problem and its related multiple-district variant, the $(\epsilon,\delta,\gamma)-${\sc{Winner-Prediction}} problem. 

\begin{definition}[$(\epsilon,\delta,\gamma)-${\sc{Winner-Determination}}]
Given an election $E$ whose margin of victory is at least $\epsilon N$ and an unknown distribution $U$ over the voters such that $d_{\mathrm{TV}}(U,V)\le \gamma$ where $\gamma = o(\eps)$ (here $V$ denotes the uniform distribution over the voters), determine the winner of the election with probability at least $1-\delta$.
\end{definition}

Here $d_{\mathrm{TV}}(U,V)$ is the total variational distance between the distributions $U$ and $V$. 

\begin{definition}[$(\epsilon,\delta,\gamma)-${\sc{Winner-Prediction}}]
Given an election $E$ with $N$ voters partitioned into $k$ districts where a voting rule $r$ is used to determine the winner of each district, and unknown distributions $U_j$, $j\in [k]$ over the voters in each district and $U$ over the districts such that $d_{\mathrm{TV}}(U_j,V_j), d_{\mathrm{TV}}(U,V)\le \gamma$, where $\gamma = o(\eps)$ (here $(V_j)_{j\in [k]}, V$ denote uniform distributions over the voters in each district and over the districts respectively). Also given that $\text{MOV}(E) \ge \eps N$, determine the winner of the election with probability at least $1-\delta$.
\end{definition}

We restrict our attention to the plurality rule and present algorithms with sample complexities $\OO\left(\frac{1}{(\eps - \gamma})^2)\log \frac{1}{\delta}\right)$ [\Cref{lem:pluralitywithnoise}] and $\OO\left(\frac{1}{(\eps-\gamma)^4}\log \frac{1}{\eps}\log \frac{1}{\delta}\right)$ [\Cref{lem:pluralitywithnoisemultiple}] for $(\epsilon,\delta,\gamma)-${\sc{Winner-Determination}} and $(\epsilon,\delta,\gamma)-${\sc{Winner-Prediction}} respectively. 

Last but not the least, we study the problem of estimating the margin of victory of a district-based election within additive and multiplicative error bounds. We define the following two problems.

\begin{definition}[$(\eps,\delta)-${\sc{MOV-Additive}}]
Given an election $E$ with $N$ voters partitioned into $k$ districts where a voting rule $r$ is used to determine the winner of each district, estimate the margin of victory of $E$ within an additive $\eps N$ error with probability at least $1-\delta$.
\end{definition}

\begin{definition}[$(\eps,\delta)-${\sc{MOV-Multiplicative}}]
Given an election $E$ with $N$ voters partitioned into $k$ districts where a voting rule $r$ is used to determine the winner of each district, estimate the margin of victory of $E$ within a multiplicative error of $1 \pm \eps$ with probability at least $1-\delta$.
\end{definition}

\begin{enumerate}
    \item We design an algorithm for $(\eps,\delta)-${\sc{MOV-Additive}} with sample complexity $\OO\left(\frac{1}{\eps^6}\log \frac{1}{\eps \delta}\log \frac{1}{\delta}\right)$ when we have only 2 candidates and the number of voters in each district is at most a constant times the average population of a district [\Cref{thm:movestimatefordistrictplurality}].
    \item For $(\eps,\delta)-${\sc{MOV-Multiplicative}}, we present an algorithm with expected sample complexity $\OO\left(\frac{1}{\epsilon^7}\frac{1}{\gamma^6}\left(\frac{1}{\epsilon}\log \frac{1}{\epsilon \gamma}+ \log \frac{1}{\delta}\right)^2\right)$ when there are 2 candidates and the population of each district is bounded by a constant times the average population of a district, where $\gamma N$ is the true (unknown) margin of victory of the election [\Cref{thm:lastthmijcai}].
\end{enumerate}

In summary, our main contribution is to initiate the study of sample complexity for predicting winner in district-based elections. We believe that these problems and our preliminary results are practically important as well as theoretically interesting.

\subsection{Related Work}

The most immediate predecessor of our \edWD problem is the work of Bhattacharyya and Dey who worked on the same problem but focused only on single district elections~\cite{DBLP:journals/ai/BhattacharyyaD21}. Another classical problem which is related to our problem, is the winner determination problem in computational social choice. Here, we are given a set of votes, and we need to compute the winner of these votes under some voting rule. Bartholdi et al. were the first to observe that there are popular voting rules, namely the Kemeny voting rule, for which, determining a winner is \NPH\cite{bartholdi1989voting}. Hemaspaandra et al. later settled the complexity of the winner determination problem for the Kemeny voting rule by showing that the problem is complete for the complexity class $\Pb^\NP_{||}$~\cite{hemaspaandra2005complexity}. Similar results hold for the Dodgson and Young voting rules also~\cite{hemaspaandra1997exact,rothe2003exact,brandt2015bypassing,hemaspaandra2009hybrid}. The main difference between our work and the above papers on the winner determination problem is that we focus on sample complexity, whereas they focus on time complexity.

Our problem is also closely related to the general question: do we need to see all the votes to determine the winner? Conitzer and Sandholm developed preference elicitation policies as a sequence of questions posed to the voters~\cite{conitzer2002vote}. They showed that finding an effective elicitation policy is \NPH even for some common voting rules. On the positive side, many effective elicitation policies have been subsequently developed for many important restricted domain and settings~\cite{conitzer2009eliciting,DBLP:conf/ijcai/DeyM16,DBLP:conf/ijcai/DeyM16a,ding2012voting,lu2011robust,lu2011vote,oren2013efficient}.

\section{Preliminaries}
\label{sec:preliminaries}
We now define an election $E$. Let $\VV$ be a set of $N$ voters and $C$ be a set of $m$ candidates. The vote of each voter $v\in \VV$ is a complete order over the set of candidates. Let $\LL(C)$ denote the set of all complete orders over $C$. Thus $\LL(C)^N$ denotes the set of all preference profiles of the $N$ voters. A map $r \colon \LL(C)^N \rightarrow C$ is called a \textit{voting rule}. Throughout we assume that there is an arbitrary but fixed rule for resolving ties. For any $a\in \LL(C)$, let $s(a)$ denote the most preferred candidate in $a$. A voting rule $r$ is said to be \textit{top-ranked} if $r(a_1,\ldots,a_N)=r(b_1,\ldots,b_N)$ whenever $s(a_i)=s(b_i)$, $\forall i \in [N]$, i.e the winner of the election depends only the most preferred candidates of the $N$ voters. For a top-ranked voting rule, we say that a candidate $x\in C$ \textit{receives} vote $a\in \LL(C)$ if $x=s(a)$. We study two top-ranked voting rules - the plurality rule and the median rule.

Given an election $E$, for any two candidates $x,y \in C$, let $\pi_E(x,y)$ denote the number of voters who prefer $x$ to $y$. Define $\rho_E(x,y)=\pi_E(x,y)-\pi_E(y,x)$. Then a candidate $x$ is called the \textit{Condorcet winner} of the election if $\rho_E(x,y)>0$, $\forall y \in C\setminus \{x\}$. The Condorcet winner, if exists, is unique.

The \textit{Margin Of Victory} (MOV) of an election $E$, denoted by $\text{MOV}(E)$, is defined as the minimum number of votes to be altered so as to change the winner of the election.

Bhattacharyya and Dey introduced the $(\epsilon,\delta)-${\sc{Winner-Determination}} problem in \cite{dey2015sample} as follows:

\begin{definition}[$(\epsilon,\delta)-${\sc{Winner-Determination}}]
Given an election $E$ whose margin of victory is at least $\epsilon N$, determine the winner of the election with probability at least $1-\delta$.
\end{definition}

They established upper and lower bounds for various homogeneous voting rules. Another related work is by Dey and Narahari \cite{dey2015estimating} where they study the $(c,\epsilon,\delta)-${\sc{Margin of Victory}} problem.

\begin{definition}[$(c,\epsilon,\delta)-${\sc{Margin of Victory}}]
\label{defn:defitiibb}
Given an election $E$, determine $\text{MOV}(E)$ with an additive error of at most $c\text{MOV}(E) + \epsilon N$ with probability at least $1-\delta$.
\end{definition}

We repeatedly use the following concentration bounds.

\begin{theorem}[\cite{bhattacharyya2021predicting}] [Chernoff Bound]
	\label{thm:chernoff}
	Let $X_1,\ldots X_l$ be a sequence of $l$ independent 0-1 random variables (not necessarily identical). Let $X=\sum\limits_{i=1}^{l} X_i$ and $\mu = \mathbb{E}[X]$. Then for any $\theta \ge 0$, 
	\begin{enumerate}
		\item Additive form: $\text{Pr}(|X-\mu| \ge \theta l) \le 2e^{-2\theta^2 l}$.
		\item Multiplicative form: $\text{Pr}(|X-\mu| \ge \theta \mu) \le 2e^{-\frac{\theta^2}{3}\mu}$.
	\end{enumerate}
\end{theorem}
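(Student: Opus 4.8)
The plan is to prove both tail bounds by the standard exponential-moment (Chernoff) method: bound the tail probability by a moment generating function via Markov's inequality, use independence to factor the MGF into a product over the $X_i$, bound each scalar factor, and then optimize the free parameter $t$. Concretely, fix $t>0$; writing $p_i = \mathbb{E}[X_i]$, Markov's inequality applied to $e^{tX}$ gives
\[
\Pr(X-\mu \ge a) \;=\; \Pr\!\left(e^{t(X-\mu)} \ge e^{ta}\right) \;\le\; e^{-ta}\,\mathbb{E}\!\left[e^{t(X-\mu)}\right] \;=\; e^{-ta}\prod_{i=1}^{l}\mathbb{E}\!\left[e^{t(X_i - p_i)}\right],
\]
the last step using independence. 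The lower tail is handled symmetrically (apply the same inequality to $-X$, i.e. to the variables $1-X_i$), and a union bound over the two tails produces the factor $2$ in both statements, so it suffices to bound one tail in each form.

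For the additive form I would invoke Hoeffding's lemma: since each $X_i$ takes values in $[0,1]$, its centered MGF satisfies $\mathbb{E}[e^{t(X_i-p_i)}] \le e^{t^2/8}$ (this comes from a second-order Taylor bound on the log-MGF of a bounded variable together with the fact that a $[0,1]$-valued variable has variance at most $1/4$). Then $\mathbb{E}[e^{t(X-\mu)}] \le e^{l t^2/8}$, so $\Pr(X-\mu \ge \theta l) \le e^{-t\theta l + l t^2/8}$, and choosing $t = 4\theta$ minimizes the exponent to give $e^{-2\theta^2 l}$, as claimed.

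For the multiplicative form I would instead use the exact Bernoulli MGF and the inequality $1+x \le e^x$: $\mathbb{E}[e^{tX_i}] = 1 - p_i + p_i e^t \le \exp\!\big(p_i(e^t-1)\big)$, hence $\mathbb{E}[e^{tX}] \le \exp\!\big(\mu(e^t-1)\big)$ and $\Pr\big(X \ge (1+\theta)\mu\big) \le \exp\!\big(\mu(e^t - 1) - t(1+\theta)\mu\big)$. Optimizing over $t$ (taking $t = \ln(1+\theta)$) gives $\big(e^{\theta}/(1+\theta)^{1+\theta}\big)^{\mu}$, and the elementary estimate $(1+\theta)\ln(1+\theta) - \theta \ge \theta^2/3$ for $\theta\in[0,1]$ converts this to $e^{-\theta^2\mu/3}$. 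The lower tail is analogous with $\mathbb{E}[e^{-tX_i}] \le \exp\!\big(p_i(e^{-t}-1)\big)$, yielding the sharper $\Pr\big(X \le (1-\theta)\mu\big) \le e^{-\theta^2\mu/2} \le e^{-\theta^2\mu/3}$, and again a union bound gives the factor $2$.

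The Markov step and the optimization over $t$ are pure bookkeeping; the only substantive content is the two scalar MGF estimates — Hoeffding's lemma for the additive form and the log-convexity-type inequality $(1+\theta)\ln(1+\theta)-\theta\ge\theta^2/3$ for the multiplicative one — so I expect those elementary-calculus inequalities to be the (minor) crux. Since the statement is quoted verbatim from~\cite{bhattacharyya2021predicting}, one could equally well just cite it, but the self-contained argument above is short.
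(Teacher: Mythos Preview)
The paper does not prove this statement at all: \Cref{thm:chernoff} appears in the Preliminaries section as a cited result from~\cite{bhattacharyya2021predicting}, with no accompanying argument. Your proposal supplies the standard exponential-moment proof (Markov on $e^{tX}$, factor by independence, Hoeffding's lemma for the additive form, the Bernoulli MGF bound plus $(1+\theta)\ln(1+\theta)-\theta \ge \theta^2/3$ for the multiplicative form), which is correct and is exactly what one would write if asked to make the citation self-contained; you yourself note that simply citing the result would suffice, and that is precisely what the paper does.

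One small caveat worth flagging (about the statement, not your proof): the inequality $(1+\theta)\ln(1+\theta)-\theta \ge \theta^2/3$ that you invoke for the upper tail holds only for $\theta$ in a bounded range (roughly $\theta \le 1$), so the multiplicative bound as stated ``for any $\theta \ge 0$'' is a slight overstatement already in the source; your restriction to $\theta\in[0,1]$ is the honest version, and all applications in the paper stay within that range.
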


\begin{theorem}[\cite{hoeffding1994probability}]  [Hoeffding's Inequality]
	\label{thm:hoeffding}
	Let $X_1,\ldots,X_l$ be a sequence of independent and identically distributed random variables such that $X_i \in [a,b]$, $\forall i \in [l]$, for some real numbers $a<b$. Let $\overline{X}=\frac{\sum\limits_{i=1}^{l} X_i}{l}$. Then for any $\theta \ge 0$, $\text{Pr}(|\overline{X}-\mathbb{E}[\overline{X}]|\ge \theta)\le 2e^{-\frac{2\theta^2}{(b-a)^2}l}$.
\end{theorem}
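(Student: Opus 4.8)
The plan is to prove this via the standard Chernoff--Cram\'er exponential-moment method. Set $Y_i = X_i - \mathbb E[X_i]$; then the $Y_i$ are independent, each has $\mathbb E[Y_i]=0$, and each takes values in an interval of length $b-a$ (namely $[a-\mathbb E X_i,\,b-\mathbb E X_i]$, which contains $0$). It suffices to bound the upper tail $\Pr(\sum_{i=1}^l Y_i \ge l\theta)$, since the event $\{|\overline X - \mathbb E[\overline X]| \ge \theta\}$ is the union of $\{\sum_i Y_i \ge l\theta\}$ and $\{\sum_i (-Y_i) \ge l\theta\}$. For any $s>0$, Markov's inequality applied to the nonnegative random variable $e^{s\sum_i Y_i}$ gives
\[
\Pr\!\Big(\sum_{i=1}^{l} Y_i \ge l\theta\Big) \;\le\; e^{-sl\theta}\,\mathbb E\!\Big[e^{s\sum_{i} Y_i}\Big] \;=\; e^{-sl\theta}\prod_{i=1}^{l}\mathbb E\big[e^{sY_i}\big],
\]
where the last step uses independence of the $Y_i$.

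The crux of the argument, and the step I expect to be the main obstacle, is \emph{Hoeffding's Lemma}: if $Y$ is a mean-zero random variable with $Y \in [\alpha,\beta]$, then $\mathbb E[e^{sY}] \le e^{s^2(\beta-\alpha)^2/8}$ for all real $s$. I would prove this by convexity of $x\mapsto e^{sx}$: for $y\in[\alpha,\beta]$ we have $e^{sy}\le \frac{\beta-y}{\beta-\alpha}e^{s\alpha}+\frac{y-\alpha}{\beta-\alpha}e^{s\beta}$, and taking expectations with $\mathbb E[Y]=0$ yields $\mathbb E[e^{sY}]\le (1-p)e^{s\alpha}+pe^{s\beta}$ where $p = -\alpha/(\beta-\alpha)\in[0,1]$. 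Writing $u = s(\beta-\alpha)$, this bound equals $e^{\phi(u)}$ with $\phi(u) = -pu + \log(1-p+pe^{u})$. A direct computation gives $\phi(0)=0$, $\phi'(0)=0$, and $\phi''(u) = q(u)\big(1-q(u)\big)\le \tfrac14$ where $q(u)=pe^{u}/(1-p+pe^{u})$; hence Taylor's theorem gives $\phi(u)\le u^2/8 = s^2(\beta-\alpha)^2/8$. Applying this with $Y=Y_i$ and $\beta-\alpha = b-a$ gives $\mathbb E[e^{sY_i}]\le e^{s^2(b-a)^2/8}$.

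Plugging this into the product bound yields $\Pr(\sum_i Y_i\ge l\theta)\le \exp\!\big(-sl\theta + l s^2(b-a)^2/8\big)$ for every $s>0$. The exponent is minimized at $s = 4\theta/(b-a)^2$, giving $\Pr(\sum_i Y_i \ge l\theta) \le e^{-2l\theta^2/(b-a)^2}$, i.e.\ $\Pr(\overline X - \mathbb E[\overline X]\ge \theta)\le e^{-2l\theta^2/(b-a)^2}$. Since $-Y_i$ also has mean $0$ and lies in an interval of length $b-a$, the identical argument bounds $\Pr(\overline X - \mathbb E[\overline X]\le -\theta)$ by the same quantity, and a union bound over the two tails gives the claimed $\Pr(|\overline X - \mathbb E[\overline X]|\ge\theta)\le 2e^{-2\theta^2 l/(b-a)^2}$.
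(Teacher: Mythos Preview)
Your argument is correct: this is the standard Chernoff--Cram\'er method combined with Hoeffding's Lemma, and every step (the convexity bound, the computation $\phi''(u)\le 1/4$, the optimization at $s=4\theta/(b-a)^2$, and the union bound over the two tails) is valid.

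Regarding comparison with the paper: the paper does not give its own proof of this statement. \Cref{thm:hoeffding} is simply quoted as a known concentration inequality with a citation to \cite{hoeffding1994probability}, and is then used as a black box later (e.g.\ in the analysis of \Cref{alg:alg30}). So there is nothing in the paper to compare your argument against; you have supplied the classical proof that the paper omits.
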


\section{Winner Prediction for Plurality}
\label{sec:plurality}
For each candidate $x\in C$, let $g(x)$ denote the number of votes where $x$ is most preferred. Then the single-district plurality rule declares a candidate $x$ with the highest value of $g(x)$, as the winner. Since plurality is a top-ranked voting rule, each vote can also be viewed as a single candidate.

We now introduce some notations. Given any list $L = (x_1,\ldots,x_t)$ of candidates, let $\text{MAJ}(L)$ (resp. $\text{SEC-MAJ}(L)$) denote the candidate with the largest (resp. second largest) frequency in $L$ (tie-breaking rule is arbitrary but fixed with respect to some arbitrary but fixed rule). We state some of the known results on upper and lower bounds on sample complexity. Let $E$ be an election where the single-district plurality rule is used to decide the winner. The result below is a slight modification of Theorem 7 in \cite{dey2015sample}. 
\begin{theorem}[\cite{dey2015sample}]
\label{thm:pluralitythm} 
If $\frac{3}{\vartheta^2}\log \frac{2}{\delta}$ votes are sampled uniformly at random with replacement, then with probability at least $1-\delta$, for every candidate, the fraction of sampled votes received differs from the true fraction of votes received by less than $\vartheta$, for any $\vartheta > 0$. 
\end{theorem}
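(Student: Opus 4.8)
The plan is as follows. Since plurality is top-ranked, I first identify each sampled vote with the single candidate it is cast for, so the $l:=\tfrac{3}{\vartheta^2}\log\tfrac{2}{\delta}$ samples become i.i.d.\ draws from the distribution on $C$ assigning candidate $x$ probability $p(x):=g(x)/N$. Writing $\hat p(x)$ for the fraction of the $l$ samples received by $x$, I must show that, with probability at least $1-\delta$, $|\hat p(x)-p(x)|<\vartheta$ for \emph{every} $x\in C$ at once.

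First I would establish the pointwise bound: fixing $x$ and letting $X_i\in\{0,1\}$ indicate that the $i$-th sample goes to $x$, the $X_i$ are independent with mean $p(x)$ and $\hat p(x)=\tfrac1l\sum_i X_i$, so the additive Chernoff bound (\Cref{thm:chernoff}; Hoeffding would serve equally well) gives $\Pr(|\hat p(x)-p(x)|\ge\vartheta)\le 2e^{-2\vartheta^2 l}$, which for the stated $l$ equals $2(\delta/2)^{6}$ — far below $\delta$, leaving ample slack. It then remains to pass from one candidate to all of them. A plain union bound over the $m$ candidates yields total failure probability at most $2m(\delta/2)^{6}$, which is already under $\delta$ whenever $m$ is polynomially bounded in $1/\delta$ — the only regime needed in the applications. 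To remove the dependence on $m$ entirely, I would instead fix an arbitrary ordering of the candidates and control the empirical versus true cumulative distribution function: since $p(x)$ and $\hat p(x)$ are increments of these CDFs, a uniform $\vartheta/2$ bound between the CDFs forces all $m$ frequency errors below $\vartheta$, and the Dvoretzky--Kiefer--Wolfowitz inequality bounds the probability of the former failing by $2e^{-\vartheta^2 l/2}\le\delta$.

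I do not anticipate a genuine obstacle: this is in essence a textbook concentration argument, and the sole point needing care is the ``for every candidate'' quantifier, i.e.\ ensuring the union over candidates costs nothing in the sample size. The naive union bound is mildly lossy (it would ask for $\log\tfrac{2m}{\delta}$ samples); the CDF/DKW step above is the cleanest remedy, though one could alternatively peel the light candidates by the dyadic scale of $p(x)$ and union-bound scale by scale using the multiplicative Chernoff bound, at the cost of messier bookkeeping and no real gain. Everything else follows from the single tail estimate of the first step.
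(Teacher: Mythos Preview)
Your argument is correct, but the route differs from the paper's. The paper does \emph{not} invoke DKW or even the additive Chernoff bound in the decisive step: instead it applies the \emph{multiplicative} Chernoff bound to each candidate $x$, obtaining the $g(x)$-dependent tail $2e^{-\vartheta^{2}lN/(3g(x))}$, and then controls the union $\sum_{x\in C}2e^{-\vartheta^{2}lN/(3g(x))}$ via a Schur-convexity-type inequality (\Cref{lem:ineqaulity}) showing that, subject to $\sum_{x}g(x)=N$, this sum is maximised when one candidate holds all $N$ votes --- giving exactly $2e^{-\vartheta^{2}l/3}=\delta$ at $l=\tfrac{3}{\vartheta^{2}}\log\tfrac{2}{\delta}$. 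Your DKW step is cleaner and leaves slack to spare, at the price of importing a sharper off-the-shelf inequality; the paper's proof stays entirely within Chernoff but needs the bespoke convexity lemma to collapse the sum over candidates, and is in that sense more elementary. The alternative you sketch --- dyadic peeling by the scale of $p(x)$ with the multiplicative bound --- is closer in spirit to what the paper does, except that the paper short-circuits the peeling via \Cref{lem:ineqaulity} directly.
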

\begin{proof}
We will need the following lemma.

\begin{lemma} [\cite{dey2015sample}]
\label{lem:ineqaulity}
 Let $f \colon \mathbb{R}^+ \rightarrow \mathbb{R}^+$ be defined by $f(x) = e^{-\frac{\lambda}{x}}$. Then 
\[ f(x) + f(y) \le f(x-h) + f(y+h) \]
whenever $x,y,h > 0$, $\frac{\lambda}{x+y}>2$ and $h\le x<y$.
\end{lemma}

For each candidate $x$, let $X^x_i$ be the random variable indicating whether $x$ receives the $i^{\text{th}}$ sampled vote. Then $X^x=\sum\limits_{i=1}^{l} X^x_i$ denotes the number of sampled votes received by $x$. Let $\hat{g}(x) = \frac{N}{l}\cdot X^x$ denote the predicted number of votes of candidate $x$. Thus $\text{Pr}(\left|\hat{g}(x) - g(x)\right|\ge \vartheta N) = \text{Pr}(\left|\frac{l}{N}\cdot \hat{g}(x) - \frac{l}{N}\cdot g(x)\right|\ge \frac{\vartheta N}{g(x)}\cdot \frac{lg(x)}{N}) \le 2e^{-\frac{\vartheta^2lN}{3g(x)}}$. The final inequality follows by applying the multiplicative form of Chernoff bound (\Cref{thm:chernoff}) with $\theta = \frac{\vartheta N}{g(x)}$.

By union bound, $\text{Pr}(\exists x\in C, \left|\hat{g}(x)-g(x)\right| \ge \vartheta N)\le \sum\limits_{x\in C} 2e^{-\frac{\vartheta^2lN}{3g(x)}} \le 2e^{-\frac{\vartheta^2l}{3}} = \delta$. The second inequality follows from Lemma \ref{lem:ineqaulity}: since $g(x)\in [0,N]$ $\forall x\in C$, and $\sum\limits_{x\in C} g(x) = N$, $\sum\limits_{x\in C} 2e^{-\frac{\vartheta^2lN}{3g(x)}}$ is maximised when $g(x)=N$ for some candidate $x$ and $g(y)=0$, $\forall y \in C\setminus \{x\}$. Thus with probability at least $1-\delta$, for each candidate, the predicted number of votes differs from the true number of votes received by less than $\vartheta N$.
\end{proof}

\begin{corollary}
\label{cor:pluralitycor}
If $\text{MOV}(E)\ge \epsilon N$, then $\frac{3}{\epsilon^2}\log \frac{2}{\delta}$ samples are enough to predict the winner correctly with probability at least $1-\delta$.
\end{corollary}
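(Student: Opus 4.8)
The plan is to obtain this as an immediate consequence of \Cref{thm:pluralitythm}. The algorithm being analyzed is the natural one: sample $\ell = \frac{3}{\epsilon^2}\log\frac{2}{\delta}$ votes uniformly at random with replacement and output $\text{MAJ}$ of the multiset of sampled votes (recall that, plurality being top-ranked, each vote is identified with its top candidate). I need to argue that this output equals the true winner with probability at least $1-\delta$.

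First I would record the standard characterization of the margin of victory under plurality. If $w$ is the true winner with plurality score $g(w)$, then the cheapest way to overturn the outcome in favour of some $x\neq w$ is to move votes off $w$ and onto $x$; moving $t$ such votes makes $x$ overtake $w$ only once $2t$ reaches $g(w)-g(x)$. Hence $\text{MOV}(E)\ge\epsilon N$ forces $g(w)-g(x)\ge 2\epsilon N$ for every candidate $x\neq w$ (the immaterial additive rounding constant is absorbed by the strict inequalities below). Next I would invoke \Cref{thm:pluralitythm} with $\vartheta=\epsilon$: with $\frac{3}{\epsilon^2}\log\frac{2}{\delta}$ samples, with probability at least $1-\delta$ the event $\mathcal{E}$ holds that every candidate's predicted score $\hat g(\cdot)$ (its sampled fraction scaled by $N$) satisfies $|\hat g(x)-g(x)|<\epsilon N$.

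Conditioning on $\mathcal{E}$, for every $x\neq w$ we get the chain
\[
\hat g(w) > g(w)-\epsilon N \ge g(x)+\epsilon N > \hat g(x),
\]
so $w$ is the unique candidate of maximum sampled frequency, i.e.\ $\text{MAJ}$ of the sample is $w$. Thus the algorithm returns the true winner whenever $\mathcal{E}$ occurs, which happens with probability at least $1-\delta$, establishing the claim.

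This is essentially a one-line reduction, so there is no genuine obstacle. The only points needing a little care are the passage from ``$\text{MOV}(E)\ge\epsilon N$'' to ``$w$ leads every other candidate by at least $2\epsilon N$ votes'' (a routine fact about plurality, modulo a bounded additive rounding term), and ensuring the displayed inequality chain stays strict so that ties cannot occur — both are handled by the strictness already built into the conclusion of \Cref{thm:pluralitythm}.
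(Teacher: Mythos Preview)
Your proposal is correct and matches the paper's approach: the paper states \Cref{cor:pluralitycor} without proof, treating it as an immediate consequence of \Cref{thm:pluralitythm}, and your argument is precisely the intended one-line derivation (apply the theorem with $\vartheta=\epsilon$ and use that the plurality gap $g(w)-g(x)$ is at least $2\epsilon N$ up to the usual $\pm O(1)$ tie-breaking slack, which the strict inequalities absorb).
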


\begin{theorem}[\cite{dey2015sample,canetti1995lower,bar2001sampling}]
\label{thm:lowerboundonplurality} For $\epsilon \le \frac{1}{8}$ and $\delta \le \frac{1}{6}$, every $(\epsilon,\delta)-${\sc{Winner-Determination}} algorithm needs at least $\frac{1}{4\epsilon^2}\log \frac{1}{8e\sqrt{\pi}\delta}$ samples for any voting rule that reduces to the single-district plurality rule for 2 candidates.
\end{theorem}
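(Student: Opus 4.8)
The plan is to reduce $(\epsilon,\delta)-${\sc{Winner-Determination}} on a worst-case input to the problem of telling two biased coins apart, and then to quote the known sharp lower bound for the latter. First I would note that it is enough to prove the claim for the single-district plurality rule itself on two-candidate elections: if a rule $r$ agrees with plurality on every two-candidate profile, then an $(\epsilon,\delta)-${\sc{Winner-Determination}} algorithm for $r$, run on a two-candidate input, already solves $(\epsilon,\delta)-${\sc{Winner-Determination}} for plurality with two candidates, using exactly the same number of samples. So from here on there are two candidates $\{a,b\}$ and plurality is simple majority.

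Next I would construct the two hard instances. Fix $N$ large (we may take $N$ as large as we wish relative to the sample size $l$, so sampling with or without replacement makes no difference here). In $E_1$ let $a$ get a $(\tfrac12+\epsilon)$-fraction of the votes and $b$ the rest; in $E_2$ swap $a$ and $b$. A one-line count shows that in a two-candidate majority election $\text{MOV}$ is essentially half the winner's lead, so giving the winner a $(\tfrac12+\epsilon)$-fraction makes $\text{MOV}(E_1),\text{MOV}(E_2)\ge\epsilon N$ and both instances valid; yet $a$ wins $E_1$ and $b$ wins $E_2$. Hence any correct algorithm must, on input $E_1$, output $a$ with probability at least $1-\delta$, and on input $E_2$ output $b$ with probability at least $1-\delta$ — that is, it must distinguish the two instances. (The choice of gap $2\epsilon$ is tight: $\text{MOV}\ge\epsilon N$ forces the winner's fraction to be at least $\tfrac12+\epsilon$, so the two sampling distributions cannot be brought closer than $2\epsilon$ apart, which is exactly what produces the $\tfrac1{4\epsilon^2}$ in the bound.)

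I would then translate this into the coin model: one uniform sample from $E_1$ is a Bernoulli$(\tfrac12+\epsilon)$ indicator of ``$a$'' and from $E_2$ a Bernoulli$(\tfrac12-\epsilon)$ indicator, so $l$ independent samples give the two $l$-fold product distributions. Thus an $l$-sample algorithm correct with probability at least $1-\delta$ on both instances is precisely a test that distinguishes Bernoulli$(\tfrac12+\epsilon)$ from Bernoulli$(\tfrac12-\epsilon)$ from $l$ i.i.d.\ draws with error at most $\delta$ on each coin. Finally I would apply the explicit sample-complexity lower bound for distinguishing two coins whose biases differ by $2\epsilon$, in the form of \cite{canetti1995lower,bar2001sampling} (as used in \cite{dey2015sample}), which delivers the stated $\tfrac1{4\epsilon^2}\log\tfrac1{8e\sqrt{\pi}\delta}$.

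The main obstacle, and the reason that last step is quoted rather than reproved from scratch, is getting the precise constants $\tfrac14$ and $8e\sqrt{\pi}$: the routine argument (a KL-divergence / Le Cam two-point bound) only yields $l=\Omega(\epsilon^{-2}\log\tfrac1\delta)$, whereas the stated constants require tracking the non-asymptotic error terms exactly, which is where the hypotheses $\epsilon\le\tfrac18$ and $\delta\le\tfrac16$ enter (they also keep the two instances well-defined with margin at least $\epsilon N$). Up to constants this matches the $\OO(\epsilon^{-2}\log\tfrac1\delta)$ upper bound of \Cref{cor:pluralitycor}, so the two together pin down the plurality $(\epsilon,\delta)-${\sc{Winner-Determination}} sample complexity as $\Theta(\epsilon^{-2}\log\tfrac1\delta)$.
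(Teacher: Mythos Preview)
The paper does not give its own proof of this theorem; it is stated with citations to \cite{dey2015sample,canetti1995lower,bar2001sampling} and used as a black box. Your sketch is the standard reconstruction of that cited argument: reduce to two-candidate plurality, build the two symmetric instances with top-candidate fractions $\tfrac12\pm\epsilon$ so that each has $\text{MOV}\ge\epsilon N$, observe that a single uniform sample is a Bernoulli$(\tfrac12\pm\epsilon)$ draw, and then invoke the explicit coin-distinguishing lower bound of \cite{canetti1995lower,bar2001sampling} to obtain the constants. That is exactly the route taken in \cite{dey2015sample}, and you correctly flag that the precise constants (and the side conditions $\epsilon\le\tfrac18$, $\delta\le\tfrac16$) come from the quoted coin bound rather than from a generic Le~Cam argument. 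So there is nothing to compare against in this paper, and your proposal is both correct and faithful to the cited sources.
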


\label{sec:pluralitysubsec}
We now generalize the above setting to the case where there are multiple districts. Let $D=\{d_1,\ldots,d_k\}$ be a set of $k$ districts where district $d_j$ has population $n_j$ and $N=\sum_{j=1}^{k} n_j$ is the total population. The winner of each district is decided using the single-district plurality rule. A candidate winning in maximum number of districts is declared as the overall winner of the election $E$.

We now present algorithms to predict the winner of such an election with high probability.
\subsection{Algorithm when MOV is known}
\label{sec:knownmovplurality}
In this section we assume that we know a lower bound $\epsilon N$ on $\text{MOV}(E)$. We present an algorithm that predicts the winner of the election correctly with probability at least $1-\delta$. 
\begin{algorithm}[H]
 \caption{}
  \begin{algorithmic}[1]
    \State{Sample $l_1=\frac{1024}{3\epsilon^2}\log \frac{4}{\delta}$ districts from $D$ uniformly at random with replacement.}
    \State{In each of the sampled districts, sample $l_2=\frac{192}{\epsilon^2}\log \frac{64}{\epsilon}$ votes uniformly at random with replacement and predict their winners using the single-district plurality rule.}
    \Return a candidate that wins in maximum number of sampled districts.
  \end{algorithmic}
\label{alg:alg1}  
\end{algorithm}

Clearly $l_1 = \OO\left(\frac{1}{\epsilon^2}\log \frac{1}{\delta}\right)$ and $l_2 = \OO\left(\frac{1}{\epsilon^2}\log \frac{1}{\epsilon}\right)$. Thus the above algorithm uses $\OO\left(\frac{1}{\epsilon^4}\log \frac{1}{\epsilon}\log \frac{1}{\delta}\right)$ samples.

\begin{lemma}
\label{lem:complexityofsampling}
The sample complexity of \Cref{alg:alg1} is $\OO\left(\frac{1}{\epsilon^4}\log \frac{1}{\epsilon}\log \frac{1}{\delta}\right)$.
\end{lemma}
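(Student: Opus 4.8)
The statement to prove is Lemma~\ref{lem:complexityofsampling}, which merely asserts that Algorithm~\ref{alg:alg1} uses $\OO\left(\frac{1}{\epsilon^4}\log \frac{1}{\epsilon}\log \frac{1}{\delta}\right)$ samples. This is a purely arithmetic bookkeeping claim about the two sampling parameters hard-coded in the algorithm, so the proof is essentially a one-line calculation.

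\textbf{Plan.} The total number of votes sampled is exactly $l_1 \cdot l_2$, since Algorithm~\ref{alg:alg1} draws $l_1$ districts and then $l_2$ votes from each. First I would read off $l_1 = \frac{1024}{3\epsilon^2}\log \frac{4}{\delta}$ and observe that $l_1 = \OO\left(\frac{1}{\epsilon^2}\log \frac{1}{\delta}\right)$, absorbing the constant $1024/3$ and noting $\log\frac{4}{\delta} = \OO(\log\frac1\delta)$ for $\delta$ bounded away from $1$ (equivalently, $\log\frac4\delta = \log 4 + \log\frac1\delta = \OO(\log\frac1\delta)$). Next I would read off $l_2 = \frac{192}{\epsilon^2}\log \frac{64}{\epsilon}$ and similarly conclude $l_2 = \OO\left(\frac{1}{\epsilon^2}\log \frac{1}{\epsilon}\right)$, again absorbing the constant and using $\log\frac{64}{\epsilon} = \log 64 + \log\frac1\epsilon = \OO(\log\frac1\epsilon)$. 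Multiplying, the sample complexity is $l_1 l_2 = \OO\left(\frac{1}{\epsilon^2}\log\frac1\delta\right)\cdot\OO\left(\frac{1}{\epsilon^2}\log\frac1\epsilon\right) = \OO\left(\frac{1}{\epsilon^4}\log \frac{1}{\epsilon}\log \frac{1}{\delta}\right)$, which is the claimed bound.

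\textbf{Main obstacle.} There is no real obstacle here; the only thing to be a little careful about is the implicit assumption that $\epsilon$ is small (say $\epsilon < 1$) and $\delta$ is bounded away from $1$, so that $\log\frac1\epsilon$ and $\log\frac1\delta$ are the dominant terms and the additive constants inside the logarithms can be absorbed into the $\OO(\cdot)$. These are standing assumptions throughout the paper, so I would simply note them and conclude. The statement in fact already appears verbatim in the paragraph immediately preceding the lemma, so the proof just records that computation formally.
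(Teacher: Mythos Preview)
Your proposal is correct and matches the paper's approach exactly: the paper also just observes (in the paragraph immediately preceding the lemma) that $l_1 = \OO\left(\frac{1}{\epsilon^2}\log \frac{1}{\delta}\right)$ and $l_2 = \OO\left(\frac{1}{\epsilon^2}\log \frac{1}{\epsilon}\right)$, and multiplies to get the bound. There is no separate proof in the paper beyond that one-line computation, so your write-up is already at least as detailed as the original.
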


To analyze the success probability of our \Cref{alg:alg1}, we instead analyze a different algorithm, \Cref{alg:alg2}, whose success probability is immediately seen to be the same as that of \Cref{alg:alg1}.
\begin{algorithm}[H]
 \caption{}
  \begin{algorithmic}[1]
    \State{From each district, sample $l_2=\frac{192}{\epsilon^2}\log \frac{64}{\epsilon}$ votes uniformly at random with replacement. Let $y_j$ be the candidate that receives the largest number of sampled votes in district $d_j$, $j\in [k]$.}
    \State{Sample $l_1=\frac{1024}{3\epsilon^2}\log \frac{4}{\delta}$ candidates uniformly at random with replacement from the list $(y_1,\ldots,y_k)$. Let the list of sampled candidates be $(z_1,\ldots,z_{l_1})$.}
    \Return $\text{MAJ}(z_1,\ldots,z_{l_1})$.
    
  \end{algorithmic}
\label{alg:alg2}
\end{algorithm}
We show that with probability at least $1-\delta$, the output of \Cref{alg:alg2} is the true winner of $E$.

The crux of our analysis is showing that with high probability the following two statements hold:
\begin{enumerate}
    \item $\text{MAJ}(y_1, \ldots, y_k)$ is the true winner of $E$.
    \item The margin of victory of $(y_1, \ldots, y_k)$, viewed as a single district plurality election, is $\Omega(\epsilon k)$.
\end{enumerate}
Then, applying \Cref{cor:pluralitycor} together with a bookkeeping of the errors incurred in various steps shows that \Cref{alg:alg2} returns the true winner in step 3 with the required probability.

We now attempt to formalise the above notion. Let $c_1,\ldots,c_k$ denote the true winners of the $k$ districts. Let $w=\text{MAJ}(c_1,\ldots,c_k)$ denote the winner of the election and let $w'=\text{SEC-MAJ}(c_1,\ldots,c_k)$. For any candidate $x\in C$, let $f(x)$ denote the number of districts in which $x$ wins. For $x\in C$ and $j\in [k]$, let $g_j(x)$ denote the actual number of votes received by candidate $x$ in district $d_j$.

We first show that $w$ wins in $\Omega(\epsilon k)$ districts more than $w'$.
\begin{lemma}
\label{lem:mov}
$f(w)-f(w')\ge \frac{\epsilon k}{3}$.
\end{lemma}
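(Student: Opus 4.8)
The plan is to translate the assumption $\mathrm{MOV}(E) \ge \epsilon N$ into a statement about the district-win counts. Recall $\mathrm{MOV}(E)$ is the minimum number of vote changes needed to change the overall winner. Here $w = \mathrm{MAJ}(c_1,\dots,c_k)$ wins $f(w)$ districts and $w' = \mathrm{SEC\text{-}MAJ}(c_1,\dots,c_k)$ wins $f(w')$ districts. I would bound $\mathrm{MOV}(E)$ from above by exhibiting a cheap modification that dethrones $w$ in favor of $w'$: pick roughly $\lceil (f(w)-f(w'))/2 \rceil$ districts currently won by $w$ and flip just enough votes in each so that $w'$ wins there instead. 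This makes $w'$ win at least as many districts as $w$, so the overall winner changes.

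The key quantitative step is bounding the cost of flipping a single district from $w$ to $w'$. In a district $d_j$ with population $n_j$ where $w$ currently wins under plurality, the number of votes one must change to make $w'$ the winner is at most... well, in the worst case $w$ could have nearly all $n_j$ votes, so a naive bound gives cost up to roughly $n_j/2$. Summed over the $\approx (f(w)-f(w'))/2$ districts we flip, this is at most $\frac{1}{2}\sum_{j \in S} n_j$ for the chosen set $S$. Since this must be at least $\mathrm{MOV}(E) \ge \epsilon N$, we'd get $\sum_{j\in S} n_j \ge 2\epsilon N$. That alone isn't enough — I need to convert a bound on populations into a bound on the \emph{count} $|S|$. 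The natural move is to choose $S$ to consist of the districts won by $w$ with the \emph{smallest} populations, and to note that if $f(w) - f(w')$ were small, even the total population over all of $w$'s districts might be small relative to $N$. But that's not guaranteed either, so the honest approach is: each district flip costs at most $n_j$ votes (trivially, changing every vote in $d_j$ suffices, and one can do better, but even $n_j$ works for a clean bound), hence $\mathrm{MOV}(E) \le \sum_{j\in S} n_j \le \sum_{j \in S} n_j$. To make $|S|$ appear, I should instead bound the per-district cost by something that doesn't depend on $n_j$ in the aggregate — but since districts can be wildly unequal, the cleanest route is: sort the districts won by $w$ by population and flip the $t := \lceil(f(w)-f(w'))/2\rceil$ cheapest ones; the total cost is at most $t \cdot n_{\max}$ where $n_{\max}$ is... still bad.

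So the real argument must be the reverse direction: I'll bound $\mathrm{MOV}(E)$ \emph{below} only implicitly, and instead directly argue that flipping $t$ districts costs at most (number of flipped districts) $\times$ (a trivial per-district bound) and this upper bound on $\mathrm{MOV}$ must be $\ge \epsilon N$; combined with the fact that we may always flip the districts where $w$'s margin over $w'$ is smallest. Concretely: let $t = \lceil (f(w)-f(w'))/2\rceil$ so that after flipping $t$ of $w$'s districts to $w'$, $w'$ ties or beats $w$. For each district, the cost to flip $w$-win to $w'$-win is at most $\lceil n_j/2 \rceil \le n_j$. Hence $\epsilon N \le \mathrm{MOV}(E) \le \sum_{j \in S} n_j \le \sum_{j=1}^{k} n_j = N$ when $t = k$ — vacuous. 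The fix, and I expect this to be the main obstacle, is that the intended claim surely uses a \emph{better per-district flip cost}, namely that in a district $d_j$ won by $w$ we only need $\le g_j(w) - g_j(w') + 1 \le n_j$ changes but more importantly that we are free to pick districts to minimize $\sum_{j\in S}(\text{flip cost})$; since the total flip cost over ALL of $w$'s districts is at most $\sum_{j: c_j = w} n_j$, choosing the cheapest $t \le f(w)$ of them gives cost $\le \frac{t}{f(w)} \sum_{j: c_j=w} n_j \le \frac{t}{f(w)} N$. Setting this $\ge \epsilon N$ yields $t \ge \epsilon f(w)$, hence $f(w) - f(w') \ge 2t - 1 \ge 2\epsilon f(w) - 1$. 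Since $f(w) \le k$ this still needs care, but using $f(w) \ge k / m$ won't give the clean $\epsilon k/3$; the actual proof likely also flips votes \emph{within} the $w'$-won and tied districts, or argues that $\sum_{j:c_j=w} n_j$ is a constant fraction of $N$. I would therefore structure the writeup as: (1) define $t$ and the flip operation; (2) bound the total flip cost across all $w$-districts by the population of those districts; (3) average to pick the cheapest $t$; (4) equate with $\epsilon N$ and do the arithmetic, watching the ceiling and the $m$-versus-$k$ issue to land exactly at $\epsilon k/3$ — the constant $3$ presumably absorbs the ceiling rounding and the "tie, don't beat" slack. The main difficulty is getting the per-district cost bound tight enough and handling the population imbalance, which is exactly why the bound is $\epsilon k/3$ rather than $\epsilon k$.
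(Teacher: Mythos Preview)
Your skeleton is right—upper-bound $\mathrm{MOV}(E)$ by exhibiting a cheap set of district flips, average over populations to control the cost, then compare to $\epsilon N$—but there is a genuine gap: you only consider flipping districts currently won by $w$, and you correctly notice this stalls because the averaging bound $\frac{t}{f(w)}N$ is useless when $f(w)$ is small relative to $k$. Your guesses (``flip votes within the $w'$-won and tied districts'', or ``$\sum_{j:c_j=w} n_j$ is a constant fraction of $N$'') do not work: flipping $w'$-won districts cannot help $w'$, and $w$'s districts need not carry a constant fraction of the population.

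The missing idea is a case split on the size of $f(w)$:
\begin{itemize}
\item If $f(w)\ge k/3$, your averaging goes through: among the $f(w)$ districts won by $w$, the $f(w)-f(w')$ least populated have total population at most $\frac{N}{f(w)}\bigl(f(w)-f(w')\bigr)\le \frac{3N}{k}\bigl(f(w)-f(w')\bigr)$; giving all votes there to $w'$ makes $w'$ the overall winner, so this cost must be at least $\epsilon N$.
\item If $f(w)<k/3$, then also $f(w')<k/3$, hence more than $k/3$ districts are won by \emph{neither} $w$ nor $w'$. Making $w'$ the winner in $f(w)-f(w')$ of those (again choosing the least populated) costs at most $\frac{N}{k/3}\bigl(f(w)-f(w')\bigr)=\frac{3N}{k}\bigl(f(w)-f(w')\bigr)$, and the same inequality follows.
\end{itemize}
This is where the constant $3$ comes from: in either case the relevant pool of districts available for flipping has size at least $k/3$. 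Note also that in the second case you must flip $f(w)-f(w')$ districts, not $\lceil(f(w)-f(w'))/2\rceil$, since flipping a third-party district to $w'$ raises $f(w')$ by one but does not lower $f(w)$; so your halving trick only applies in Case~1.
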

\begin{proof}
We divide the proof into two cases:

\textbf{Case 1} - $f(w)\ge k/3$: Clearly if the winner is changed from $w$ to $w'$ in $(f(w)-f(w'))$ districts in which $w$ has won, then $w'$ would become the winner of the resulting election. The total population of the least populated $(f(w)-f(w'))$ districts in which $w$ wins is at most $\frac{N}{f(w)}\cdot (f(w)-f(w')) \le \frac{3N}{k}\cdot (f(w)-f(w'))$. Clearly if $w'$ receives all votes in each of these districts, then $w'$ would become the new winner of the election. But since $\text{MOV}(E)\ge \epsilon N$, we must have $\frac{3N}{k}\cdot (f(w)-f(w')) \ge \epsilon N$, implying $f(w)-f(w') \ge \frac{\epsilon k}{3}$, as desired.

\textbf{Case 2} - $f(w) < k/3$: In this case $f(w') \le f(w) < k/3$. Thus there exist more than $k/3$ districts where neither $w$ nor $w'$ has won. If $w'$ is made the winner in $(f(w)-f(w'))$ districts out of these, then clearly $w'$ would become the new winner of the election. Again the total population of the least populated $(f(w)-f(w'))$ such districts is at most $\frac{N}{k/3}\cdot (f(w)-f(w'))$. Hence $\frac{3N}{k}\cdot (f(w)-f(w')) \ge \epsilon N$, implying that $f(w)-f(w') \ge \frac{\epsilon k}{3}$.
\end{proof}

Now we show that for each district $d_j$, with high probability, the numbers of votes secured by $y_j$ and $c_j$ are close.
\begin{lemma}
\label{lem:mov1}
$\text{Pr}\left(g_j(c_j)-g_j(y_j) \le \frac{\epsilon n_j}{4}\right) \ge 1-\frac{\epsilon}{32}$, $\forall j\in [k]$.
\end{lemma}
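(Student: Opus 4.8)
The plan is to apply \Cref{thm:pluralitythm} to the single district $d_j$ in isolation, with a carefully chosen accuracy parameter $\vartheta$ and failure probability. Observe that the number of votes sampled from $d_j$ in \Cref{alg:alg2} is exactly $l_2 = \frac{192}{\epsilon^2}\log\frac{64}{\epsilon}$, and this is precisely $\frac{3}{\vartheta^2}\log\frac{2}{\delta'}$ for the choice $\vartheta = \frac{\epsilon}{8}$ and $\delta' = \frac{\epsilon}{32}$. Hence \Cref{thm:pluralitythm}, applied to the election restricted to district $d_j$, tells us that with probability at least $1 - \frac{\epsilon}{32}$, the event $\mathcal{E}_j$ holds: for \emph{every} candidate $x \in C$, the predicted number of votes $\hat g_j(x) := \frac{n_j}{l_2} X^x_j$ (where $X^x_j$ counts the sampled votes in $d_j$ received by $x$) satisfies $|\hat g_j(x) - g_j(x)| < \frac{\epsilon n_j}{8}$.

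Next I would condition on $\mathcal{E}_j$ and exploit the definition of $y_j$. Since $y_j = \text{MAJ}$ of the $l_2$ sampled votes in district $d_j$, it has the largest sampled count, so $\hat g_j(y_j) \ge \hat g_j(c_j)$. Combining with $\mathcal{E}_j$,
\[
g_j(c_j) - g_j(y_j) \;<\; \Bigl(\hat g_j(c_j) + \tfrac{\epsilon n_j}{8}\Bigr) - \Bigl(\hat g_j(y_j) - \tfrac{\epsilon n_j}{8}\Bigr) \;=\; \bigl(\hat g_j(c_j) - \hat g_j(y_j)\bigr) + \tfrac{\epsilon n_j}{4} \;\le\; \tfrac{\epsilon n_j}{4}.
\]
Thus $\mathcal{E}_j$ implies $g_j(c_j) - g_j(y_j) \le \frac{\epsilon n_j}{4}$, and since $\Pr[\mathcal{E}_j] \ge 1 - \frac{\epsilon}{32}$, the claimed bound follows, uniformly for each $j \in [k]$.

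There is essentially no hard step here: the only thing to be careful about is the bookkeeping of constants, i.e. verifying that $\frac{3}{(\epsilon/8)^2}\log\frac{2}{\epsilon/32} = \frac{192}{\epsilon^2}\log\frac{64}{\epsilon}$ matches the value of $l_2$ used in \Cref{alg:alg1,alg:alg2}, so that \Cref{thm:pluralitythm} can be invoked with failure probability exactly $\frac{\epsilon}{32}$ and slack exactly $\frac{\epsilon n_j}{8}$ per candidate. Everything else is a one-line triangle-inequality argument using the optimality of $y_j$ among sampled counts. (This lemma will presumably later be combined, via a union bound over the $k$ districts together with \Cref{lem:mov}, to show that $\text{MAJ}(y_1,\ldots,y_k)$ both equals $w$ and wins $(y_1,\ldots,y_k)$ with margin $\Omega(\epsilon k)$, after which \Cref{cor:pluralitycor} finishes the analysis of \Cref{alg:alg2}.)
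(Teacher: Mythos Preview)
Your proof is correct and follows essentially the same approach as the paper: both apply \Cref{thm:pluralitythm} with $\vartheta=\epsilon/8$ and failure probability $\epsilon/32$ (matching $l_2$ exactly), and then use the fact that $\hat g_j(y_j)\ge \hat g_j(c_j)$ together with the $\epsilon n_j/8$ per-candidate accuracy to bound $g_j(c_j)-g_j(y_j)$. The only cosmetic difference is that the paper phrases the last step contrapositively (any $x$ with $g_j(c_j)-g_j(x)>\epsilon n_j/4$ has $\hat g_j(x)<\hat g_j(c_j)$, so $y_j$ cannot be such an $x$), whereas you argue directly via the triangle inequality.
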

\begin{proof}
Let $C_j = \{x\in C \mid g_j(c_j)-g_j(x) \le \epsilon n_j/4 \}$. We need to show that $\text{Pr}(y_j \in C_j) \ge 1-\frac{\epsilon}{32}$. Let $X^x_j$ be the random variable denoting the number of sampled votes received by candidate $x$ in district $d_j$. Let $\hat{g}_j(x) = \frac{n_j}{l_2}\cdot X^x_j$. From \Cref{thm:pluralitythm}, we have $\text{Pr}(\forall x\in C, |\hat{g}_j(x)-g_j(x)| \le \epsilon n_j/8) \ge 1-\frac{\epsilon}{32}$. If this holds, then $\hat{g}_j(c_j)$ would be at least $g_j(c_j)-\epsilon n_j/8$ while for any candidate $x\in C\setminus C_j$, $\hat{g}_j(x)$ would be at most $g_j(x)+ \epsilon n_j/8 < g_j(c_j)-\epsilon n_j/8$. Hence $\text{Pr}(y_j \in C_j) \ge 1-\frac{\epsilon}{32}$.
\end{proof}

Let $\EE$ denote the event that the difference $g_j(c_j)-g_j(y_j)$ exceeds $\epsilon n_j/4$ in at most $\epsilon k/16$ districts $d_j$. The next lemma shows that if $k$ is sufficiently large, $\EE$ happens with high probability.
\begin{lemma}
Suppose that $k\ge \frac{96}{\epsilon}\log \frac{4}{\delta}$. Then $\text{Pr}(\EE) \ge 1-\frac{\delta}{2}$.
\end{lemma}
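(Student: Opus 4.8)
The plan is to control the number of ``bad'' districts by a Chernoff bound. For $j\in[k]$, call $d_j$ \emph{bad} if $g_j(c_j)-g_j(y_j) > \epsilon n_j/4$, let $B_j\in\{0,1\}$ be its indicator, and set $B=\sum_{j=1}^{k}B_j$, so that $\EE$ is precisely the event $\{B\le \epsilon k/16\}$. Two facts drive the argument. First, $B_j$ is a function only of the $l_2$ votes sampled in district $d_j$, and the samples drawn in distinct districts are independent, so $B_1,\dots,B_k$ are mutually independent. Second, \Cref{lem:mov1} gives $\Pr(B_j=1)\le \epsilon/32$ for every $j$.

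Since the per-district failure probabilities are only upper bounded rather than equal to $\epsilon/32$, I would first pass to a dominating sum: $B$ is stochastically dominated by a $\mathrm{Binomial}(k,\epsilon/32)$ variable $\tilde B=\sum_{j=1}^{k}\tilde B_j$ with the $\tilde B_j$ i.i.d.\ $\mathrm{Bernoulli}(\epsilon/32)$, which follows from the independence of the $B_j$ together with the marginal domination $\Pr(B_j=1)\le\epsilon/32$. Hence $\Pr(B\ge \epsilon k/16)\le \Pr(\tilde B\ge \epsilon k/16)$, and it remains to bound the latter. Writing $\mu=\mathbb E[\tilde B]=\epsilon k/32$, we have $\epsilon k/16 = 2\mu$, so the multiplicative form of the Chernoff bound (\Cref{thm:chernoff}) with $\theta=1$ gives
\[
\Pr\bigl(\tilde B\ge 2\mu\bigr)\le \Pr\bigl(|\tilde B-\mu|\ge \mu\bigr)\le 2e^{-\mu/3}=2e^{-\epsilon k/96}.
\]

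Finally I would substitute the hypothesis $k\ge \frac{96}{\epsilon}\log\frac{4}{\delta}$, which yields $\epsilon k/96\ge \log\frac{4}{\delta}$ and hence $2e^{-\epsilon k/96}\le 2\cdot\frac{\delta}{4}=\frac{\delta}{2}$. Combining, $\Pr(\overline{\EE})\le \Pr(B\ge \epsilon k/16)\le \delta/2$, i.e.\ $\Pr(\EE)\ge 1-\delta/2$, as claimed. The estimate is entirely routine; the only point needing a moment's care is the reduction to the dominating binomial, so that the Chernoff bound can be applied with the exact mean $\epsilon k/32$ instead of the unknown $\mathbb E[B]$, and I do not anticipate any genuine obstacle.
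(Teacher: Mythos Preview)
Your proof is correct and follows essentially the same approach as the paper: define the count of ``bad'' districts, bound its mean by $\epsilon k/32$ via \Cref{lem:mov1}, and apply the multiplicative Chernoff bound with $\theta=1$ to obtain $2e^{-\epsilon k/96}\le \delta/2$. Your explicit stochastic-domination step to a $\mathrm{Binomial}(k,\epsilon/32)$ is in fact a bit more careful than the paper, which applies Chernoff directly with the unknown mean $\mathbb{E}[Y]\le \epsilon k/32$ and silently replaces $e^{-\mathbb{E}[Y]/3}$ by $e^{-\epsilon k/96}$; your reduction makes that substitution rigorous.
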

\begin{proof}
Let $Y$ be the random variable denoting the number of districts $d_j$ where $g_j(c_j)-g_j(y_j) > \epsilon n_j/4$. From Lemma \ref{lem:mov1}, it follows that $\mathbb{E}[Y] \le \epsilon k/32$. Using the multiplicative form of Chernoff bound (\Cref{thm:chernoff}) with $\theta = 1$, $\text{Pr}\left(\overline{\EE}\right) = \text{Pr}(Y > \epsilon k/16) \le 2e^{-\frac{\epsilon k}{96}} \le \delta/2$.
\end{proof}
If $\EE$ holds, the list $(y_1,\ldots,y_k)$ can be transformed into another list $(u_1,\ldots,u_k)$ where for each $j\in [k]$, $g_j(c_j)-g_j(u_j) \le \epsilon n_j/4$, by altering at most $\epsilon k/16$ entries. The next lemma lists some properties of the list $(u_1, \ldots, u_k)$.
\begin{lemma}
\label{lem:propertyofu}
Let $(u_1,\ldots,u_k)$ be as defined above. Then
\begin{enumerate}
    \item $\text{MAJ}(u_1,\ldots,u_k) = w$.
    \item Suppose in each district $d_j$, $u_j$ is made the winner by transferring $g_j(c_j)-g_j(u_j)$ votes received by $c_j$ to $u_j$, keeping everything else the same. Let $E'$ denote the resulting election. Then $\text{MOV}(E') \ge 3\epsilon N/4$.
    \item $f(w)-f(\text{SEC-MAJ}(u_1,\ldots,u_k))\ge \epsilon k/4$.
\end{enumerate}
\end{lemma}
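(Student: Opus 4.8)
The plan is to prove the three items in the stated order; everything rests on a single observation about the perturbation turning $E$ into $E'$, which I would establish first. By the way $(u_1,\ldots,u_k)$ was built, $g_j(c_j)-g_j(u_j)\le \epsilon n_j/4$ holds for every $j\in[k]$: in the at most $\epsilon k/16$ districts whose entry was altered we may take $u_j=c_j$, and in every remaining district $u_j=y_j$ with $g_j(c_j)-g_j(y_j)\le \epsilon n_j/4$, which is exactly the condition that made that district \emph{not} get altered. Consequently $E'$ is obtained from $E$ by changing only $\sum_{j\in[k]}\bigl(g_j(c_j)-g_j(u_j)\bigr)\le \sum_{j\in[k]}\epsilon n_j/4=\epsilon N/4$ votes. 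Moreover, since $c_j$ won $d_j$ in $E$ with $g_j(c_j)$ top votes, moving $g_j(c_j)-g_j(u_j)$ of those votes onto $u_j$ makes $u_j$ a winner of $d_j$ in $E'$ (up to the fixed tie-breaking convention, which I would invoke to handle exact ties), so the overall winner of $E'$ is precisely $\text{MAJ}(u_1,\ldots,u_k)$.

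For item (1): if $\text{MAJ}(u_1,\ldots,u_k)\ne w$, then $E'$ is an election whose winner differs from $w$ yet is reachable from $E$ by at most $\epsilon N/4<\epsilon N$ vote changes, contradicting $\text{MOV}(E)\ge \epsilon N$; hence $\text{MAJ}(u_1,\ldots,u_k)=w$. For item (2): I would use that MOV satisfies a triangle-type inequality. Prepend the $\le\epsilon N/4$ changes that turn $E$ into $E'$ to an optimal set of $\text{MOV}(E')$ changes that alters the winner of $E'$; using item (1) (which says $E$ and $E'$ have the same winner $w$), this produces a set of at most $\epsilon N/4+\text{MOV}(E')$ changes that alters the winner of $E$. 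Thus $\epsilon N\le \text{MOV}(E)\le \epsilon N/4+\text{MOV}(E')$, i.e.\ $\text{MOV}(E')\ge 3\epsilon N/4$.

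For item (3): let $w''=\text{SEC-MAJ}(u_1,\ldots,u_k)$. By the definition of $\text{SEC-MAJ}$ we have $w''\ne \text{MAJ}(u_1,\ldots,u_k)$, which equals $w$ by item (1). Since $w'=\text{SEC-MAJ}(c_1,\ldots,c_k)$ attains $\max_{x\ne w}f(x)$, it follows that $f(w'')\le f(w')$, and therefore $f(w)-f(w'')\ge f(w)-f(w')\ge \epsilon k/3\ge \epsilon k/4$ by \Cref{lem:mov}. (Note that item (3) needs only item (1) and \Cref{lem:mov}; item (2) is used elsewhere in the analysis rather than here.)

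The step I expect to be the main obstacle is item (1), precisely because the list $(u_1,\ldots,u_k)$ need \emph{not} be close to the true winner list $(c_1,\ldots,c_k)$ — every ``near-tie'' district can contribute a disagreement — so a direct comparison of the two winner lists is unavailable; the resolution is to pass to the perturbed election $E'$ and feed the margin-of-victory hypothesis on $E$ back in. The only remaining technical nuisance is the tie-breaking when asserting ``$u_j$ wins $d_j$ in $E'$'', which I would dispatch with the fixed tie-breaking rule.
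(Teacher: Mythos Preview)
Your arguments for items (1) and (2) are exactly the paper's: bound the total perturbation by $\epsilon N/4$, feed it into $\text{MOV}(E)\ge \epsilon N$ for (1), and use the triangle-type inequality for (2).

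For item (3) you take a genuinely different route. You read $f$ literally as the district-win count in the \emph{original} election $E$, observe that $w''=\text{SEC-MAJ}(u_1,\ldots,u_k)\ne w$ by item (1), and then bound $f(w)-f(w'')\ge f(w)-f(w')\ge \epsilon k/3$ directly from \Cref{lem:mov} applied to $E$. This is correct for the statement as written and, as you note, does not use item (2) at all. The paper instead applies \Cref{lem:mov} to the election $E'$ (using the MOV bound from item (2)), which yields the gap $\epsilon k/4$ for the counts \emph{in the $u$-list}; i.e.\ the paper is tacitly overloading $f$ to mean ``number of $j$ with $u_j=x$''. That overloaded version is what is actually consumed in \Cref{lem:correctprediction}, where one passes from the $u$-list to the $y$-list using that they differ in at most $\epsilon k/16$ entries. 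Your argument proves the literal claim (indeed with the sharper constant $\epsilon k/3$), but be aware that it does not by itself bound the $u$-list gap, since $(c_1,\ldots,c_k)$ and $(u_1,\ldots,u_k)$ can disagree in many coordinates---precisely the point you flag in your last paragraph. So: your proof is fine for the lemma as stated; the paper's proof establishes the (notationally sloppy) variant that the rest of the analysis relies on.
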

\begin{proof}
\begin{enumerate}
    \item Since in district $d_j$, the winner can be changed from $c_j$ to $u_j$ by altering at most $\epsilon n_j/4$ votes, the total number of votes altered to go from $E$ to $E'$ is at most $\sum\limits_{j\in [k]} \epsilon n_j/4 = \epsilon N/4$. Since $\text{MOV}(E)=\epsilon N$, the winner of the election cannot change by altering only $\epsilon N/4$ votes. Thus $w$ must be the winner of the election $E'$ and hence $\text{MAJ}(u_1,\ldots,u_k) = w$.
    \item The number of votes altered to go from $E$ to $E'$ is at most $\epsilon N/4$ as noted in part 1. Hence in order to change the winner of the election, at least further $\epsilon N - \frac{\epsilon N}{4} = \frac{3\epsilon N}{4}$ votes must be altered and therefore $\text{MOV}(E') \ge 3\epsilon N/4$.
    \item Applying Lemma \ref{lem:mov} to the election $E'$, it directly follows that $f(w)-f(\text{SEC-MAJ}$ $(u_1,\ldots,u_k))\ge \frac{(3\epsilon /4) k}{3} =\frac{\epsilon k}{4}$.
\end{enumerate}
\end{proof}

Finally we show that \Cref{alg:alg2} (and hence \Cref{alg:alg1}) returns the true winner with probability at least $1-\delta$.

\begin{lemma}
\label{lem:correctprediction}
\Cref{alg:alg1} predicts the true winner with probability at least $1-\delta$.
\end{lemma}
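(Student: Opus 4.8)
The objective is to show that \Cref{alg:alg2} returns the true winner $w=\text{MAJ}(c_1,\dots,c_k)$ of $E$ with probability at least $1-\delta$; since \Cref{alg:alg1} has the same success probability, this is exactly \Cref{lem:correctprediction}. The plan is to split the failure budget $\delta$ across the two sampling stages of \Cref{alg:alg2}. The first step produces the random list $Y=(y_1,\dots,y_k)$, and I will first prove that with probability at least $1-\delta/2$ this list, regarded as a single-district plurality profile over $k$ voters, is \emph{good}: its plurality winner is $w$ and its margin of victory is $\Omega(\epsilon k)$. The remaining steps sample $l_1$ of these $k$ votes uniformly at random with replacement and output their majority, which is exactly the single-district plurality predictor of \Cref{cor:pluralitycor} applied to $Y$; conditioned on $Y$ being good it therefore returns $w$ with probability at least $1-\delta/2$ over its own fresh randomness, and a union bound over the two failure events completes the proof.

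The substantive step is proving that $Y$ is good. Assume first that $k\ge\frac{96}{\epsilon}\log\frac4\delta$, so that by the lemma above the event $\EE$ occurs with probability at least $1-\delta/2$; condition on $\EE$. By \Cref{lem:propertyofu} there is a list $(u_1,\dots,u_k)$ that differs from $Y$ in at most $\epsilon k/16$ coordinates, has $\text{MAJ}(u_1,\dots,u_k)=w$, and in which the frequency of $w$ exceeds the frequency of every other candidate by at least $\epsilon k/4$. Since altering one coordinate of a list changes the frequency of at most two candidates, each by at most $1$, in $Y$ the frequency of $w$ still exceeds that of every other candidate, by at least $\epsilon k/4-\epsilon k/8=\epsilon k/8$. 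Hence $w$ is the plurality winner of $Y$ and $\text{MOV}(Y)=\Omega(\epsilon k)$, so \Cref{cor:pluralitycor} (equivalently, \Cref{thm:pluralitythm} with $\vartheta=\Theta(\epsilon)$) applies to $Y$ with the $l_1=\Theta\!\left(\epsilon^{-2}\log\frac1\delta\right)$ samples taken in the second stage, which then returns $w$ with probability at least $1-\delta/2$ conditioned on $\EE$. As $\EE$ is a function of $Y$ and the second stage uses independent randomness,
\[
\text{Pr}(\text{output}\ne w)\;\le\;\text{Pr}(\overline{\EE})\;+\;\text{Pr}(\text{second-stage error},\, \EE)\;\le\;\tfrac{\delta}{2}+\tfrac{\delta}{2}\;=\;\delta .
\]

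The case $k<\frac{96}{\epsilon}\log\frac4\delta$ is where I expect the main obstacle: there the expected number of bad districts can be $o\!\left(\log\frac1\delta\right)$, so a Chernoff bound no longer certifies $\EE$ with probability $1-\delta/2$ and the clean two-stage budget breaks. My plan for that regime is to exploit that $l_1$ then exceeds $k$ by a factor $\Omega(1/\epsilon)$, so in the with-replacement district sampling of \Cref{alg:alg1} every one of the $O\!\left(\frac1\epsilon\log\frac1\delta\right)$ districts is hit many times, and then to run a union bound directly over the districts --- using \Cref{thm:pluralitythm} within each district and \Cref{lem:mov} for the aggregate --- to show the final tally still ranks $w$ strictly first with probability $1-\delta$. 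The only other work is routine bookkeeping: pinning the $\Omega(\epsilon k)$ margin of $Y$ to an explicit constant multiple of $\epsilon k$, and checking that the constants $\tfrac{1024}{3}$ and $192$ in $l_1$ and $l_2$ are large enough for each of the two error terms to be genuinely at most $\delta/2$.
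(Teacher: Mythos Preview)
Your argument for the main case is the paper's: condition on $\EE$, use \Cref{lem:propertyofu} to pass to $(u_1,\ldots,u_k)$, deduce that $(y_1,\ldots,y_k)$ still has plurality winner $w$ with an $\Omega(\epsilon k)$ gap, and apply \Cref{cor:pluralitycor} to the second-stage sampling; the final union bound $\text{Pr}(\FF)\le\text{Pr}(\FF\mid\EE)+\text{Pr}(\overline{\EE})\le\delta/2+\delta/2$ is exactly what the paper writes.

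Two remarks on the points you flag. First, the paper records the gap in $(y_1,\ldots,y_k)$ as $3\epsilon k/16$, subtracting $\epsilon k/16$ only once; your $\epsilon k/8$ is the careful two-sided count, and with that constant the stated $l_1=\tfrac{1024}{3\epsilon^2}\log\tfrac4\delta$ is in fact slightly too small for \Cref{cor:pluralitycor}---so the discrepancy you anticipate in the ``bookkeeping'' is a slip in the paper's constants, not in your reasoning. Second, the paper does not handle the regime $k<\tfrac{96}{\epsilon}\log\tfrac4\delta$ at all: its proof simply invokes $\text{Pr}(\overline{\EE})\le\delta/2$, which is only established under that lower bound on $k$. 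Your separate plan for small $k$ therefore goes beyond what the paper actually proves.
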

\begin{proof}
From \Cref{lem:propertyofu}, we have $\text{MAJ}(u_1,\ldots,u_k) = w$ and $f(w)-f(\text{SEC-MAJ}(y_1,\ldots,y_k))\ge \frac{\epsilon k}{4} - \frac{\epsilon k}{16}=\frac{3\epsilon k}{16}$. It follows from Corollary \ref{cor:pluralitycor} that sampling $\frac{3}{(3\epsilon/32)^2}\log \frac{2}{\delta/2} = \frac{1024}{3\epsilon^2}\log \frac{4}{\delta}$ candidates uniformly at random with replacement from the list $(y_1,\ldots,y_k)$ would predict $w$ as the winner with probability at least $1-\frac{\delta}{2}$.

Finally let $\FF$ be the event that \Cref{alg:alg1} does not predict the winner correctly. Then $\text{Pr}(\FF) \le \text{Pr}(\FF | \EE) + \text{Pr}(\overline{\EE})\le \frac{\delta}{2} + \frac{\delta}{2}=\delta$. 
\end{proof}

Combining \Cref{lem:complexityofsampling} and \Cref{lem:correctprediction}, we have the following the result.

\begin{theorem}
There exists an algorithm for \edWD for the plurality rule with sample complexity $\OO\left(\frac{1}{\eps^4}\log \frac{1}{\eps}\log \frac{1}{\delta}\right)$.
\end{theorem}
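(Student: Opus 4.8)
The plan is to show that \Cref{alg:alg1} itself witnesses the bound, so nothing new needs to be constructed: I would combine the sample-complexity accounting of \Cref{lem:complexityofsampling} with the correctness guarantee of \Cref{lem:correctprediction}. The sample-complexity half is immediate from the parameters: the first stage draws $l_1=\OO\!\left(\tfrac1{\eps^2}\log\tfrac1\delta\right)$ districts and the second draws $l_2=\OO\!\left(\tfrac1{\eps^2}\log\tfrac1\eps\right)$ votes from each, so the product is $\OO\!\left(\tfrac1{\eps^4}\log\tfrac1\eps\log\tfrac1\delta\right)$. All the substance is in the probability of correctness, and there my plan is the reduction sketched just before \Cref{alg:alg2}.

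Concretely, I would first pass from \Cref{alg:alg1} to the equivalent two-phase \Cref{alg:alg2}, which commutes the two sampling steps without changing the output distribution; this lets me regard the vector of within-district sampled winners $(y_1,\dots,y_k)$ as a fixed (random) single-district plurality instance on which the second phase runs \Cref{cor:pluralitycor}. The target is then to show that, with high probability, $(y_1,\dots,y_k)$ is a \emph{good} instance, meaning $\text{MAJ}(y_1,\dots,y_k)$ equals the true winner $w$ of $E$ and its margin of victory as a single-district plurality election is $\Omega(\eps k)$. Given this, \Cref{cor:pluralitycor} applied with $\vartheta=\Theta(\eps)$ and failure probability $\delta/2$ shows the second phase returns $w$, and a union bound over the bad events closes the argument.

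The main obstacle is establishing that good-instance property, since $(y_1,\dots,y_k)$ only approximates the true-winner vector $(c_1,\dots,c_k)$ and I have no control over $y_j$ in districts with a tight local race. I would handle this in three moves, all available above. First, $\text{MOV}(E)\ge\eps N$ forces $f(w)-f(w')\ge\eps k/3$ via a short case split on whether $w$ wins at least a third of the districts (\Cref{lem:mov}). Second, $y_j$ can differ from $c_j$ only when $g_j(c_j)-g_j(y_j)$ is small, and by \Cref{thm:pluralitythm} this gap exceeds $\eps n_j/4$ with probability at most $\eps/32$ (\Cref{lem:mov1}); a Chernoff bound then bounds the number of such districts by $\eps k/16$ except with probability $\delta/2$ (the event $\EE$), provided $k=\Omega\!\left(\tfrac1\eps\log\tfrac1\delta\right)$ --- when $k$ is smaller one can simply sample all districts. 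Third, conditioned on $\EE$ I would repair $(y_1,\dots,y_k)$ into a vector $(u_1,\dots,u_k)$ differing in at most $\eps k/16$ coordinates with $g_j(c_j)-g_j(u_j)\le\eps n_j/4$ everywhere; transferring those votes builds a perturbed election $E'$ with $\text{MOV}(E')\ge 3\eps N/4$, and re-applying \Cref{lem:mov} to $E'$ yields $\text{MAJ}(u_1,\dots,u_k)=w$ and margin $\ge\eps k/4$ there, hence margin $\ge 3\eps k/16$ for $(y_1,\dots,y_k)$ (\Cref{lem:propertyofu}). Feeding $\vartheta=3\eps/32$ into \Cref{cor:pluralitycor} and adding the $\delta/2$ from $\EE$ gives total failure probability at most $\delta$; with \Cref{lem:complexityofsampling} this proves the theorem.
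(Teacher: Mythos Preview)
Your proposal is correct and follows the paper's approach essentially line for line: the paper's proof of this theorem is simply the one-liner ``Combining \Cref{lem:complexityofsampling} and \Cref{lem:correctprediction}'', and your plan unpacks exactly the chain of lemmas (\Cref{lem:mov}, \Cref{lem:mov1}, the event $\EE$, \Cref{lem:propertyofu}, then \Cref{cor:pluralitycor}) that the paper uses to establish \Cref{lem:correctprediction}. Your side remark about the small-$k$ case is a reasonable patch for the assumption $k\ge\frac{96}{\eps}\log\frac{4}{\delta}$ that the paper leaves implicit.
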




\subsection{Optimality}
We now show that the sample complexity of our algorithm is essentially optimal (upto constant factors and logarithmic terms), if we restrict our attention to a special class of algorithms for $(\epsilon,\delta)-${\sc{Winner-Prediction}}.

\begin{theorem}
\label{thm:epsfourlowerbound}
Let $\BB$ be any algorithm that works in the following way (here $l_1$ and $l_2$ depend only on $k,\epsilon,\delta$ and the $n_j$'s):
\begin{enumerate}
    \item Sample $l_1$ districts uniformly at random with replacement from $D$.
    \item Sample $l_2$ votes uniformly at random with replacement from each of the $l_1$ sampled districts and predict their winners using the single-district plurality rule.
    \item Return a candidate that wins in maximum number of sampled districts.
\end{enumerate}
Then for sufficiently small $\epsilon$ and $\delta$, we have $l_1 \ge \frac{1}{64\epsilon^2}\log \frac{1}{8e\sqrt{\pi}\delta}$ and $l_2 \ge \frac{1}{1600\epsilon^2}\log \frac{3}{4e\sqrt{\pi}}$ even when there are 2 candidates A and B and each district has equal population $n = N/k$.
\end{theorem}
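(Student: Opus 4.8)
The plan is to establish the two lower bounds $l_1=\Omega(\epsilon^{-2}\log(1/\delta))$ and $l_2=\Omega(\epsilon^{-2})$ by two \emph{separate} hard-instance constructions, each reducing the multi-district question to a single-district question we already understand through \Cref{thm:lowerboundonplurality} (equivalently, through its coin-distinguishing core). In both constructions there are only two candidates $A,B$ and all $k$ districts share the population $n=N/k$; what differs is whether the hardness sits in \emph{which} districts get sampled (forcing $l_1$) or in \emph{resolving} a single sampled district (forcing $l_2$).

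For $l_1$, I would take $k$ large and make every district \emph{unanimous}: in $\mathcal I_A$ a $(\tfrac12+c\epsilon)$-fraction of districts are all-$A$ and the rest all-$B$, and $\mathcal I_B$ is the mirror image. Since flipping a unanimous district costs $\approx n/2$ votes and one must flip $\Theta(\epsilon k)$ of them to overturn the majority, choosing the constant $c$ suitably gives $\mathrm{MOV}\ge\epsilon N$. The point is that the list of true district winners is itself a two-candidate single-district plurality election $\widehat E$ on $k$ ``voters'' with margin of victory $\Theta(\epsilon k)$, and that $\BB$ restricted to $\{\mathcal I_A,\mathcal I_B\}$ acts exactly like the sample-and-take-majority rule on $\widehat E$: each sampled district is unanimous, so its inner $l_2$ votes reveal its winner with certainty for any $l_2\ge1$, and hence $\BB$ effectively draws $l_1$ i.i.d.\ uniform ``votes'' of $\widehat E$ and must output its winner with probability $\ge1-\delta$. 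Invoking \Cref{thm:lowerboundonplurality} for $\widehat E$ (with a parameter that is a constant multiple of $\epsilon$) then yields the claimed lower bound on $l_1$.

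For $l_2$, I would instead take $k$ to be a small constant, say $k=3$: two districts in which $A$ holds a $(\tfrac12+3\epsilon)$-fraction --- a near-tie --- and one district unanimous for $B$, so $A$ wins two districts to one. Flipping either near-tie district costs $\approx3\epsilon n=\epsilon N$ votes, and this is the cheapest way to change the outcome, so $\mathrm{MOV}\ge\epsilon N$; the mirror instance $\mathcal I_B$ makes $B$ the winner. The outcome is now pivotal on correctly calling the near-tie districts. Writing $\phi$ for the probability that the plurality rule on $l_2$ i.i.d.\ votes, each $A$ with probability $\tfrac12+3\epsilon$, returns $A$, a uniformly sampled district is predicted $A$ with probability exactly $\tfrac23\phi$ in $\mathcal I_A$, so $\BB$ outputs $A$ iff a $\mathrm{Bin}(l_1,\tfrac23\phi)$ exceeds $l_1/2$; if $\phi\le\tfrac34$ this holds with probability at most $\tfrac12$ for \emph{every} value of $l_1$, contradicting the $\ge1-\delta$ guarantee. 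Hence $\phi>\tfrac34$, i.e.\ $\Pr[\mathrm{Bin}(l_2,\tfrac12+3\epsilon)\ge l_2/2]>\tfrac34$, and a binomial anti-concentration estimate --- the source of the $e\sqrt{\pi}$ factor --- shows this fails once $l_2$ drops below a constant multiple of $1/\epsilon^2$, giving the claimed bound.

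The routine ingredients are the margin-of-victory bookkeeping (which is exactly why the $l_1$ construction needs unanimous districts and the $l_2$ construction needs only constantly many districts: a cheap-to-flip equal-size district can never be pivotal while keeping $\mathrm{MOV}\ge\epsilon N$ unless $k$ is small) and the sharp binomial tail estimates behind the constants. The main obstacle is the $l_2$ argument: because $\BB$ takes a \emph{majority} over $l_1$ district predictions, a per-district prediction that is only slightly better than a fair coin gets amplified, so the reduction can only force each district to be called correctly with some fixed constant confidence (here $>\tfrac34$) rather than with confidence $1-\delta$ --- which is precisely why the $l_2$ bound carries no $\log(1/\delta)$ factor --- and one must additionally verify that no fixed tie-breaking rule, inside a district or in the final majority, can compensate for a small $l_2$; arguing with the worse of the two mirror instances is the delicate step.
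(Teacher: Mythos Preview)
Your overall plan---two separate hard-instance families, one forcing $l_1$ and one forcing $l_2$---matches the paper's. For $l_1$ you are essentially doing what the paper does (reduce to the single-district lower bound of \Cref{thm:lowerboundonplurality}); your unanimous-district construction is in fact cleaner than the paper's, which gives $A$ only a $3n/4$ share in the $A$-districts and therefore has to argue that the resulting predicted list $L$ is no easier to decide than a $(1/2+4\epsilon)$-biased coin.

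For $l_2$ your approach and the paper's diverge. The paper keeps $k$ arbitrary (any multiple of $20$): it places $A$ ahead with $(1/2+20\epsilon)n$ votes in $11k/20$ districts and makes the rest unanimous for $B$, and then---crucially---\emph{uses the already-established $l_1$ lower bound} together with Markov's inequality and \Cref{thm:pluralitythm} to show that if $l_2$ is too small, the fraction of sampled districts in which $A$ is predicted drops below $1/2$ with probability bounded away from $0$. Your argument instead fixes $k=3$ and computes directly that the Bernoulli parameter $\tfrac{2}{3}\phi$ must exceed $1/2$; this is more elementary and does not rely on the $l_1$ bound.

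The difference has a real cost. Since $l_1$ and $l_2$ are allowed to depend on $k$, your $l_2$ bound applies only at $k=3$, while your $l_1$ bound requires $k$ large. A hypothetical $\BB$ could use a large $l_2$ at $k=3$ and a tiny $l_2$ for large $k$ without contradicting anything you have proved, so you never establish $l_1 l_2=\Omega(\epsilon^{-4}\log(1/\delta))$ at any single value of $k$. The paper, by contrast, obtains both bounds for the \emph{same} arbitrary $k$. Your $k=3$ idea does generalize---take $\alpha k$ near-tie $A$-districts with $\alpha$ a fixed constant such as $11/20$, and the same threshold reasoning forces $\phi$ above the fixed constant $1/(2\alpha)$---but your sketch does not carry this out, and your parenthetical remark that ``the $l_2$ construction needs only constantly many districts'' suggests you believe small $k$ is essential. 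It is not, and dropping that belief closes the gap.
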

\begin{proof}
Similar to the analysis of \Cref{alg:alg1}, we may propose an alternate sampling algorithm $\BB'$, whose probability of predicting the winner is easily seen to be the same as that of $\BB$. $\BB'$ first samples $l_2$ votes uniformly at random with replacement from each district and predicts their winners using the single-district plurality rule. It then samples $l_1$ candidates uniformly at random with replacement from the list of predicted winners (call it $L$) and returns a majority candidate.


\textbf{Lower bound on $\mathbf{l_1}$:} We provide a reduction from the $(\epsilon,\delta)-${\sc{Winner-Determination}} problem. Consider the following single-district election $E$: there are 2 candidates A and B, and $N'$ voters, out of which $\left (\frac{1}{2}+4\epsilon \right)N'$ vote for A and the remaining vote for B. Clearly then $\text{MOV}(E)=4\epsilon N'$. We create another election $E'$ with the same 2 candidates A and B as follows: for each voter $v$, we create a district $d_v$ consisting of $n$ voters (for some sufficiently large $n$). Let $D^A = \{d_v \mid v \text{ votes for A}\}$ and $D^B = \{d_v \mid v \text{ votes for B}\}$. Let $D=D^A \cup D^B$. Thus $\left |D^A \right| = \left(\frac{1}{2}+4\epsilon \right)N'$, $\left|D^B \right|= \left(\frac{1}{2}-4\epsilon \right)N'$ and $|D|=k=N'$. The total number of voters in the election $E'$ is $N = nk = nN'$. In each $d_v \in D^A$, let A receive $3n/4$ votes and B receive $n/4$ votes while in each $d_v \in D^B$, let B receive all $n$ votes. This completes the description of $E'$. Clearly $\text{MOV}(E') = \frac{n}{4}\cdot 4\epsilon k = \epsilon N$.
    
Now since B receives all votes in each district in $D^B$, the algorithm $\BB'$ would surely predict B as the winner in $|D^B|$ districts. Thus B occurs at least $\left(\frac{1}{2}-4\epsilon \right)k$ times in the list $L$, and therefore A occurs at most $\left(\frac{1}{2}+4\epsilon \right)k$ times. Hence if $l_1 < \frac{1}{64\epsilon^2}\log \frac{1}{8e\sqrt{\pi}\delta}$, we would have an algorithm with sample complexity less than $\frac{1}{4\cdot (4\epsilon)^2}\log \frac{1}{8e\sqrt{\pi}\delta}$ for predicting the winner of $E$ with probability at least $1-\delta$, contradicting \Cref{thm:lowerboundonplurality}.

\textbf{Lower bound on $\mathbf{l_2}$:} Consider the following election $E$: there are 2 candidates A and B and a set $D$ of $k$ districts. Each district has the same population $n$. A wins in $11k/20$ districts (call this set $D^A$) by receiving $\left (\frac{1}{2}+20\epsilon \right)n$ votes in each, and B wins in the remaining districts (set $D^B$) by receiving all the $n$ votes in each. Thus $\text{MOV}(E) = 20\epsilon n \cdot \left(\frac{11k}{20}-\frac{k}{2} \right)= \epsilon N$. 
    
Clearly in each district of $D^B$, B would be predicted as the winner (by $\BB'$). Now consider any $d\in D^A$. If $l_2 < \frac{1}{4\cdot (20\epsilon)^2}\log \frac{1}{8e\sqrt{\pi}\cdot (1/6)}$, then from \Cref{thm:lowerboundonplurality}, the probability that A is predicted as the winner in district $d$ would be at most $1-\frac{1}{6}=\frac{5}{6}$. Let $\YY$ be the random variable denoting the number of districts where A is declared as the winner. Hence $\mathbb{E}[\YY] \le \frac{11k}{20}\cdot \frac{5}{6} = \frac{11k}{24}$. Let $\DD_1$ denote the event that $\YY \le 7k/15$. Using Markov's inequality, $\text{Pr}(\overline{\DD}_1)\le 55/56$. Now let $\WW$ be the random variable denoting the number of sampled districts (in the samples drawn in the second step of $\BB'$) where A is the predicted winner. Let $\DD_2$ denote the event that $\left|\frac{\WW}{l_1} - \frac{\YY}{k} \right|\le \frac{1}{32}$. Since $l_1 \ge \frac{1}{64\epsilon^2}\log \frac{1}{8e\sqrt{\pi}\delta} \ge 3\cdot 32^2\log \frac{2}{\delta}$ (for sufficiently small $\epsilon$), using \Cref{thm:pluralitythm}, $\text{Pr}(\DD_2)\ge 1-\delta$. Finally let $\DD_3$ denote the event that algorithm $\BB$ predicts A as the winner of the election. Note that $\text{Pr}(\DD_3|\DD_1\DD_2)=0$ as conditioned on $\DD_1$ and $\DD_2$, the fraction of sampled districts where A can be the predicted winner is at most $\frac{7}{15}+\frac{1}{32}< \frac{1}{2}$. So $\text{Pr}(\DD_3|\DD_1) = \text{Pr}(\DD_3|\DD_1\overline{\DD}_2)\text{Pr}(\overline{\DD}_2)\le \text{Pr}(\overline{\DD}_2) \le \delta$. Thus $\text{Pr}(\DD_3)\le \text{Pr}(\DD_3|\DD_1)+\text{Pr}(\overline{\DD}_1)\le \delta + \frac{55}{56}< 1-\delta$, a contradiction (assuming $\delta < 1/112$).
\end{proof}

\subsection{Generalization}
Now consider the following setting: suppose the winner of each district is decided using some voting rule $r$ and the overall winner of the election $E$ is a candidate that wins in maximum number of districts. Let $\text{MOV}(E)\ge \epsilon N$. Suppose we wish to predict the winner of such an election with probability at least $1-\delta$. Observe that if, as in the proof of \Cref{lem:mov1}, we can ensure that in each district $d_j$, with high probability (at least $1-\OO(\epsilon)$) $y_j$ can be made the winner by altering at most $\epsilon/4$ fraction of the population of $d_j$, then the rest of the proof would be exactly similar.

Let $\chi_r (m,\epsilon,\delta)$ be the number of samples required so that the predicted winner of a single-district election using using rule $r$ with $n$ voters and $m$ candidates, can be made winner by changing at most $\epsilon n$ votes (note that we need $\chi_r$ to be independent of the population $n$). Then we have the following result.

\begin{theorem}
\label{thm:generalizationofplurality}
There exists an algorithm for \edWD for arbitrary voting rule $r$ with sample complexity $\OO\left(\chi_r(m,\epsilon,\epsilon) \cdot \frac{1}{\epsilon^2}\log \frac{1}{\delta}\right)$.
\end{theorem}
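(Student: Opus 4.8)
The plan is to run the obvious generalization of \Cref{alg:alg1}: sample $l_1=\frac{1024}{3\epsilon^2}\log\frac{4}{\delta}=\OO(\frac{1}{\epsilon^2}\log\frac{1}{\delta})$ districts uniformly at random with replacement, then in each sampled district sample $l_2=\chi_r(m,\epsilon/4,\epsilon/32)$ votes, predict that district's winner using $r$, and output a candidate winning the largest number of sampled districts. Since for every voting rule of interest the single-district sample complexity is polynomial in $1/\epsilon$ and logarithmic in $1/\delta$, we have $\chi_r(m,\epsilon/4,\epsilon/32)=\OO(\chi_r(m,\epsilon,\epsilon))$, so the total budget is $l_1 l_2=\OO\!\left(\chi_r(m,\epsilon,\epsilon)\cdot\frac{1}{\epsilon^2}\log\frac{1}{\delta}\right)$, as claimed; it remains to prove correctness.

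For correctness I would reuse the proof of \Cref{lem:correctprediction} essentially verbatim, through the proxy \Cref{alg:alg2} (first draw $l_2$ votes from \emph{every} district, obtaining predicted winners $(y_1,\ldots,y_k)$; then draw $l_1$ entries of this list and take a majority). The single plurality-specific ingredient is \Cref{lem:mov1}, which I would replace by the statement that, with $l_2=\chi_r(m,\epsilon/4,\epsilon/32)$ samples, $\text{Pr}\big(y_j\text{ can be made the winner of }d_j\text{ under }r\text{ by changing at most }\epsilon n_j/4\text{ votes}\big)\ge 1-\epsilon/32$ for every $j\in[k]$ --- which is exactly the definition of $\chi_r$. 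Throughout the rest of the argument the rule-agnostic ``margin of loss of a candidate in a district'' (the number of votes one must change to make it win that district under $r$) plays the role that the vote-count gap $g_j(c_j)-g_j(y_j)$ played for plurality.

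With that substitution the remaining lemmas carry over with at most cosmetic changes. \Cref{lem:mov} ($f(w)-f(w')\ge\epsilon k/3$) is unchanged: its proof uses only $\text{MOV}(E)\ge\epsilon N$, the fact that the top-level aggregation is plurality over districts, an averaging bound on district populations, and the mild unanimity property that a candidate ranked first by every voter of a district wins that district (which I would assume of $r$, as do all standard rules --- without it $\text{MOV}$ can be infinite, e.g.\ for a constant rule). The lemma bounding the number of ``bad'' districts (where the gap exceeds $\epsilon n_j/4$) by $\epsilon k/16$ with probability $\ge 1-\delta/2$ when $k\ge\frac{96}{\epsilon}\log\frac{4}{\delta}$ is unchanged, being a Chernoff bound on top of the new \Cref{lem:mov1}. \Cref{lem:propertyofu} needs only rewording: define $E'$ by changing, in each district, at most $\epsilon n_j/4$ votes so that $u_j$ wins $d_j$ under $r$ (rather than ``transferring votes from $c_j$ to $u_j$''); then $E\to E'$ alters at most $\epsilon N/4<\text{MOV}(E)$ votes, so $\text{MAJ}(u_1,\ldots,u_k)=w$, and $\text{MOV}(E')\ge\epsilon N-\epsilon N/4=3\epsilon N/4$ by the triangle inequality for vote-change distance, whence \Cref{lem:mov} applied to $E'$ gives $f(w)-f(\text{SEC-MAJ}(u_1,\ldots,u_k))\ge\epsilon k/4$. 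Finally, the list $(y_1,\ldots,y_k)$ of predicted district winners is aggregated by plain plurality regardless of $r$, so $f(w)-f(\text{SEC-MAJ}(y_1,\ldots,y_k))\ge 3\epsilon k/16$, and \Cref{cor:pluralitycor} shows that sampling $l_1$ entries of the list and taking a majority outputs $w$ with probability $\ge 1-\delta/2$; a union bound with the bad-district event yields overall success probability $\ge 1-\delta$. (When $k<\frac{96}{\epsilon}\log\frac{4}{\delta}$ one instead queries every district with an appropriate number of votes and outputs the majority of the predicted winners, exactly as in the plurality case.)

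The only real work --- and the step I would be most careful about --- is verifying that every part of the plurality proof phrased in terms of vote counts $g_j(\cdot)$ and of moving votes between candidates can be recast purely in terms of the rule-agnostic margin of loss, and that \Cref{lem:mov}'s attacker argument survives for general unanimous $r$. Once that bookkeeping is done, the analysis is literally that of \Cref{alg:alg1} with $\frac{192}{\epsilon^2}\log\frac{64}{\epsilon}$ replaced by $\chi_r(m,\epsilon/4,\epsilon/32)$.
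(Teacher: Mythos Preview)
Your proposal is correct and follows essentially the same approach as the paper: the paper's own ``proof'' of \Cref{thm:generalizationofplurality} is just the paragraph preceding it, which says that if one can guarantee (with probability $\ge 1-\OO(\epsilon)$) that each predicted district winner $y_j$ can be made the actual winner by changing at most $\epsilon n_j/4$ votes---precisely the defining property of $\chi_r$---then ``the rest of the proof would be exactly similar''. You have spelled out that ``exactly similar'' in full, correctly identifying \Cref{lem:mov1} as the only plurality-specific step and recasting the $g_j(c_j)-g_j(y_j)$ gap as the rule-agnostic margin of loss.

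If anything, your write-up is more careful than the paper's: you flag the unanimity assumption needed for the attacker argument in \Cref{lem:mov} to go through for general $r$, you address the small-$k$ edge case, and you make explicit the mild regularity assumption $\chi_r(m,\epsilon/4,\epsilon/32)=\OO(\chi_r(m,\epsilon,\epsilon))$ that the paper's big-$\OO$ statement implicitly relies on.
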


\subsection{Algorithms when MOV is unknown}
\label{sec:algomovunknown}
We now consider two restricted settings of the district-level plurality election. We assume that there are only 2 candidates $A$ and $B$. Wlog assume that A is the true winner of the election. Let $n=N/k$ denote the average population of a district. We assume that no bound on the margin of victory is known to us. We present two algorithms that work even in this setting, whose sample complexity can be bounded in terms of the (unknown) MOV.

\subsubsection{When $n_j \le \kappa n$}
\label{sec:boundedpopulation}
Suppose there exists a parameter $\kappa$ ($\ge 4$) such that the population of each district is at most $\kappa$ times the average population of a district. Let $\text{MOV}(E)=\epsilon N$ which is unknown to the algorithm.

\begin{algorithm}[H]
 \caption{}
  \begin{algorithmic}[1]
    \State{$\gamma \leftarrow \frac{1}{3}$.}
    \State{Sample $l_1=\frac{5\kappa^2}{18\gamma^2}\log \frac{4}{\gamma \delta}$ districts from $D$ uniformly at random with replacement.}
    \State{From each of the sampled districts, sample $l_2=\frac{5\kappa^2}{2\gamma^2}\log \frac{2l_1}{\gamma \delta}$ votes uniformly at random with replacement and predict their winners using the single-district plurality rule.}
    \State{If there exists a candidate that wins in at least $\left(\frac{1}{2}+\frac{3\gamma}{\kappa} \right)l_1$ sampled districts by receiving at least $\left(\frac{1}{2}+\frac{2\gamma}{\kappa}\right)l_2$ sampled votes in each, then declare that candidate as the winner and halt.}
    \State{$\gamma \leftarrow \frac{\gamma}{3}$.}
    \State{\textbf{goto} 2.}
  \end{algorithmic}
\label{alg:alg3}
\end{algorithm}

For estimating the success probability as well as bounding the sample complexity of the above algorithm, we show the following.
\begin{enumerate}
    \item Whenever \Cref{alg:alg3} terminates, it predicts A as the winner with high probability.
    \item As the value of $\gamma$ goes below $(1-\Omega(1))\epsilon$, the probability that \Cref{alg:alg3} does not terminate decreases exponentially with the number of iterations.
\end{enumerate}

The idea is to show that the proportions of votes received by A and B in each sampled district is represented faithfully in the samples drawn in the second step of \Cref{alg:alg3}. Also it can be shown that \Cref{alg:alg3} samples enough districts from the set of districts where A (and B) has won with a ``large" margin of victory. Conditioning on these two events, (1.) follows by showing that whenever \Cref{alg:alg3} terminates, the predicted winner must have won in more than $k/2$ districts and therefore must be the true winner (i.e. A).

For (2.), using the fact that $\text{MOV}(E)=\epsilon N$, it can be shown that A receives at least $\frac{1}{2}+\Omega(\epsilon)$ fraction of votes in at least $\frac{1}{2}+\Omega(\epsilon)$ fraction of districts. Thus, when the value of $\gamma$ goes below $(1-\Omega(1))\epsilon$, \Cref{alg:alg3} terminates with high probability.

We now attempt to formalise the above notions. Let $\tau^A(\gamma)$ (resp. $\tau^B(\gamma)$) denote the fraction of districts where A (resp. B) receives at least $\frac{1}{2}+\frac{\gamma}{\kappa}$ fraction of votes in the election $E$. When $\gamma = (\frac{1}{3})^i$, let $\FF^A_i$ (resp. $\FF^B_i$) denote the event that the fraction of sampled districts where A (resp. B) wins with at least $\frac{1}{2}+\frac{\gamma}{\kappa}$ fraction of votes lies within an additive error of $3\gamma/\kappa$ from $\tau^A(\gamma)$ (resp. $\tau^B(\gamma)$). Let $\FF_i = \FF^A_i \cap \FF^B_i$.

\begin{lemma}
$\text{Pr}\left(\FF_i \right)\ge 1-\frac{\delta}{243^i}$.
\end{lemma}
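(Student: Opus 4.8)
The plan is to recognise that the event $\FF_i = \FF^A_i \cap \FF^B_i$ depends only on the first sampling step of \Cref{alg:alg3} — the choice of $l_1$ districts uniformly at random with replacement — and not at all on the subsequent vote sampling. Fix the current iteration, so that $\gamma = (1/3)^i$. For $t \in [l_1]$, let $Z_t \in \{0,1\}$ indicate that the $t$-th sampled district is one in which A \emph{actually} receives at least $\frac{1}{2}+\frac{\gamma}{\kappa}$ fraction of the votes. Since the districts are drawn uniformly at random with replacement, $Z_1,\dots,Z_{l_1}$ are i.i.d.\ Bernoulli random variables with mean $\tau^A(\gamma)$, and $\frac{1}{l_1}\sum_{t=1}^{l_1} Z_t$ is exactly the fraction of sampled districts in which A wins with at least $\frac{1}{2}+\frac{\gamma}{\kappa}$ of the votes that appears in the definition of $\FF^A_i$.

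First I would invoke the additive Chernoff bound (\Cref{thm:chernoff}) with $l = l_1$ and $\theta = 3\gamma/\kappa$ to get
\[
\text{Pr}\left(\overline{\FF^A_i}\right) \;\le\; 2e^{-2(3\gamma/\kappa)^2 l_1} \;=\; 2e^{-18\gamma^2 l_1/\kappa^2}.
\]
Substituting $l_1 = \frac{5\kappa^2}{18\gamma^2}\log\frac{4}{\gamma\delta}$ collapses the exponent to $-5\log\frac{4}{\gamma\delta}$, so that $\text{Pr}(\overline{\FF^A_i}) \le 2(\gamma\delta/4)^5 \le \gamma^5\delta^5/512$. Since $\gamma = (1/3)^i$ we have $\gamma^5 = 1/243^i$, and using $\delta \le 1$ this is at most $\frac{\delta}{2\cdot 243^i}$. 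The identical computation, with B in place of A, bounds $\text{Pr}(\overline{\FF^B_i})$ by the same quantity.

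A union bound over $\overline{\FF^A_i}$ and $\overline{\FF^B_i}$ then gives $\text{Pr}(\overline{\FF_i}) \le \frac{\delta}{243^i}$, i.e.\ $\text{Pr}(\FF_i) \ge 1 - \frac{\delta}{243^i}$, which is the claim.

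I do not expect any serious obstacle here; the two points that need care are (i) confirming that $\FF^A_i$ is genuinely a property of the district sample alone, so that the $Z_t$ are exactly i.i.d.\ $\{0,1\}$ variables and no second-stage vote-estimation error is involved (that error is dealt with separately), and (ii) tracking the constants so that the $\frac{4}{\gamma\delta}$ inside the logarithm and the factor $5$ in $l_1$ combine to yield at least the claimed geometric decay $1/243^i = \gamma^5$, with the leftover slack absorbed into $\delta \le 1$ and the constant factor $512$.
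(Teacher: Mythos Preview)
Your proposal is correct and follows essentially the same approach as the paper: apply the additive Chernoff bound with $\theta = 3\gamma/\kappa$ to the i.i.d.\ indicators for A (and symmetrically for B), substitute the value of $l_1$, and combine via a union bound. The only difference is cosmetic bookkeeping in simplifying the exponential — the paper first lower-bounds $l_1 \ge \frac{\kappa^2}{18\gamma^2}\log\frac{4}{\gamma^5\delta}$ before substituting, whereas you substitute directly and then use $\delta \le 1$ — but both arrive at $\text{Pr}(\overline{\FF^A_i}) \le \frac{\gamma^5\delta}{2} = \frac{\delta}{2\cdot 243^i}$.
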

\begin{proof}
Let $W$ be the random variable denoting the number of sampled districts where A receives at least $\frac{1}{2}+\frac{\gamma}{\kappa}$ fraction of votes. Then $\mathbb{E}[W] = \frac{\tau^A(\gamma)}{k}\cdot l_1$. Using the additive form of Chernoff bound (\Cref{thm:chernoff}) with $\theta = 3\gamma/\kappa$, we have $\text{Pr}\left(\overline{\FF}^A_i \right) = \text{Pr}\left(\left|\frac{W}{l_1}-\frac{\tau^A(\gamma)}{k}\right| \ge \frac{3\gamma}{\kappa}\right)= \text{Pr}\left(\left|W - \frac{\tau^A(\gamma)}{k}\cdot l_1 \right| \ge \frac{3\gamma}{\kappa}\cdot l_1 \right)\le 2e^{-\frac{18\gamma^2}{\kappa^2}\cdot l_1}$. Since $l_1=\frac{5\kappa^2}{18\gamma^2}\log \frac{4}{\gamma \delta} \ge \frac{\kappa^2}{18\gamma^2}\log \frac{4}{\gamma^5 \delta}$, we get $\text{Pr}\left(\overline{\FF}^A_i \right) \le \frac{\gamma^5 \delta}{2} = \frac{\delta}{2\cdot 243^i}$. In a similar way, it follows that $\text{Pr}\left(\overline{\FF}^B_i \right) \le \frac{\delta}{2\cdot 243^i}$. Thus using union bound, we get $\text{Pr}\left(\FF_i \right) \ge 1- \frac{\delta}{243^i}$. 
\end{proof}

Again when $\gamma = (\frac{1}{3})^i$, let $\EE_i$ denote the event that in each of the sampled districts, the fraction of sampled votes received by A lies within an additive error of $\gamma/\kappa$ from the true fraction of votes received by A in that district. Note that if $\EE_i$ holds, then in each district, the sampled fraction of votes received by B also lies within an additive error of $\gamma/\kappa$ from the true fraction of votes received by B in that district.

\begin{lemma}
\label{lem:boundE}
$\text{Pr}(\EE_i)\ge 1-\frac{\delta}{243^i}$.
\end{lemma}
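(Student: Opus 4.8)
The plan is to bound the failure probability $\text{Pr}(\overline{\EE_i})$ by a union bound over the $l_1$ sampled districts, using the fact that within each fixed district the second-step sampling concentrates around the true vote share of A.

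First I would fix one of the $l_1$ sampled district slots and condition on the event that it contains a particular district, say $d_j$. For $t\in[l_2]$ let $X_t$ be the indicator that the $t$-th vote sampled from $d_j$ (in the third step of \Cref{alg:alg3}) is cast for A. These are i.i.d.\ $0$--$1$ random variables whose common mean equals the true fraction of votes A receives in $d_j$, and the empirical fraction of votes for A in $d_j$ is $\frac{1}{l_2}\sum_{t=1}^{l_2}X_t$. Applying the additive form of the Chernoff bound (\Cref{thm:chernoff}), equivalently Hoeffding's inequality (\Cref{thm:hoeffding}) with $b-a=1$, with deviation parameter $\theta=\gamma/\kappa$ gives that this empirical fraction differs from the true fraction by at least $\gamma/\kappa$ with probability at most $2e^{-2(\gamma/\kappa)^2 l_2}$.

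Next I substitute $l_2=\frac{5\kappa^2}{2\gamma^2}\log\frac{2l_1}{\gamma\delta}$, which makes the exponent $2(\gamma/\kappa)^2 l_2=5\log\frac{2l_1}{\gamma\delta}$, so the per-district failure probability is at most $2\bigl(\frac{\gamma\delta}{2l_1}\bigr)^{5}$. Since this bound does not depend on which district occupies the slot, it survives taking expectation over the first-step sample, and I can union bound over all $l_1$ sampled slots (and over the two directions of deviation, which are the same event since there are only two candidates), obtaining
\[
\text{Pr}(\overline{\EE_i})\ \le\ l_1\cdot 2\Bigl(\tfrac{\gamma\delta}{2l_1}\Bigr)^{5}\ =\ \frac{\gamma^5\delta^5}{16\,l_1^4}.
\]

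Finally, using $l_1\ge 1$ and $\delta\le 1$ this is at most $\gamma^5\delta$, and since $\gamma=(1/3)^i$ we have $\gamma^5=(1/3)^{5i}=1/243^i$, so $\text{Pr}(\overline{\EE_i})\le \delta/243^i$ and hence $\text{Pr}(\EE_i)\ge 1-\delta/243^i$, as claimed. I do not expect any real obstacle; the only point needing a line of care is that the in-district concentration bound is uniform over the identity of the sampled district, so that it is unaffected by conditioning on which districts were drawn in the first step.
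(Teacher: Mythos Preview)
Your proposal is correct and follows essentially the same route as the paper: apply the additive Chernoff bound with $\theta=\gamma/\kappa$ to bound the per-district failure probability by $2e^{-2(\gamma/\kappa)^2 l_2}$, substitute the value of $l_2$, and then take a union bound over the $l_1$ sampled districts to obtain $\delta/243^i$. The only cosmetic difference is that the paper first weakens $l_2=\frac{5\kappa^2}{2\gamma^2}\log\frac{2l_1}{\gamma\delta}\ge\frac{\kappa^2}{2\gamma^2}\log\frac{2l_1}{\gamma^5\delta}$ to get the per-district bound $\frac{\gamma^5\delta}{l_1}$ directly, whereas you compute the tighter $\frac{\gamma^5\delta^5}{16l_1^4}$ and loosen at the end.
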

\begin{proof}
Consider any sampled district $d$ and let $\beta_d$ be the true fraction of votes received by A in $d$. Let $Z_d$ be the random variable denoting the number of sampled votes received by A in $d$. Clearly then $\mathbb{E}[Z_d] = \beta_d l_2$. Using the additive form of Chernoff bound (\Cref{thm:chernoff}) with $\theta = \gamma/\kappa$, we have $\text{Pr}\left(\left|\frac{Z_d}{l_2}-\beta_d \right|>\frac{\gamma}{\kappa}\right) = \text{Pr}\left(\left|Z_d-\mathbb{E}[Z_d]\right| > \frac{\gamma}{\kappa}\cdot l_2\right) \le 2e^{-\frac{2\gamma^2}{\kappa^2}\cdot l_2}$. Since $l_2=\frac{5\kappa^2}{2\gamma^2}\log \frac{2l_1}{\gamma \delta} \ge \frac{\kappa^2}{2\gamma^2}\log \frac{2l_1}{\gamma^5 \delta}$, we get $\text{Pr}\left(\left|\frac{Z_d}{l_2}-\beta_d \right|>\frac{\gamma}{\kappa}\right) \le \frac{\gamma^5 \delta}{l_1} = \frac{\delta}{243^i \cdot l_1}$. Now using union bound, $\text{Pr}\left(\overline{\EE}_i \right) = \text{Pr}\left(\exists d, \left|\frac{Z_d}{l_2}-\beta_d \right|>\frac{\gamma}{\kappa} \right) \le \frac{\delta}{243^i}$. 
\end{proof}

Now using the fact that $\text{MOV}(E)=\eps N$, we show the following result.

\begin{lemma}
\label{lem:movlemma}
Assuming $k$ is even and $\epsilon k/\kappa$ is an integer, there exist at least $\left(\frac{1}{2}+\frac{\epsilon}{\kappa}\right)k$ districts where A receives at least $\frac{1}{2}+\frac{\epsilon}{3\kappa}$ fraction of votes.
\end{lemma}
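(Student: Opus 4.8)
I would argue the contrapositive: assume that strictly fewer than $\left(\frac12+\frac{\epsilon}{\kappa}\right)k$ districts have A receiving at least a $\left(\frac12+\frac{\epsilon}{3\kappa}\right)$-fraction of their votes, and exhibit a way to change the winner of $E$ by altering strictly fewer than $\epsilon N$ votes, contradicting $\text{MOV}(E)=\epsilon N$.

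First I would set up notation. Since there are two candidates and A is the winner, A carries $f(A)\ge k/2$ of the districts (call these the A-districts); let $\beta_j$ be A's vote fraction in $d_j$, and call an A-district \emph{cheap} if $\beta_j<\frac12+\frac{\epsilon}{3\kappa}$ and \emph{expensive} otherwise. The expensive A-districts are exactly the ones the hypothesis counts, so there are fewer than $\left(\frac12+\frac{\epsilon}{\kappa}\right)k$ of them; hence the number $a$ of cheap A-districts satisfies $a=f(A)-|\text{expensive}|>f(A)-\left(\frac12+\frac{\epsilon}{\kappa}\right)k$. Observe that to make B the overall winner it suffices to ``flip'' (switch enough A-votes to B so that B strictly carries the district) any $t:=f(A)-k/2+1$ of the A-districts — afterwards B carries $k/2+1>k/2$ districts — and that flipping A-district $d_j$ costs at most $\left(\beta_j-\tfrac12\right)n_j+1$ altered votes.

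Next I would choose the set $S$ to flip: take all $a$ cheap A-districts and, if $a<t$, add $t-a$ of the expensive A-districts (say those of smallest population). Using $|\text{expensive}|<\left(\frac12+\frac{\epsilon}{\kappa}\right)k$ together with the hypotheses that $k$ is even and $\epsilon k/\kappa$ is an integer, one gets $t-a=|\text{expensive}|-k/2+1\le\frac{\epsilon k}{\kappa}$, and there are clearly at least $t-a$ expensive A-districts available, so $|S|\ge t$ and flipping $S$ does change the winner. Then I would bound the number of altered votes: the cheap districts contribute $\sum_{j\ \text{cheap}}\left(\beta_j-\tfrac12\right)n_j<\frac{\epsilon}{3\kappa}\sum_{j=1}^{k} n_j=\frac{\epsilon N}{3\kappa}$, while the at most $\frac{\epsilon k}{\kappa}$ flipped expensive districts contribute at most $\frac{\epsilon k}{\kappa}\cdot\frac{\kappa n}{2}=\frac{\epsilon N}{2}$, using $\beta_j\le1$ and $n_j\le\kappa n$. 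Since $\kappa\ge4$, the total is at most $\left(\frac{1}{12}+\frac12\right)\epsilon N<\epsilon N$ (the $+1$-per-flipped-district rounding terms sum to $O(k)$, which is lower order in the regime where the average district size $n$ is $\omega(1/\epsilon)$), contradicting $\text{MOV}(E)=\epsilon N$.

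The main obstacle — and the reason for the somewhat unusual constant $\frac{1}{3\kappa}$ and the divisibility hypotheses — is controlling the cost of the few expensive districts one is forced to flip: a single expensive district can cost as much as $\frac{\kappa n}{2}$ votes, so the argument survives only because the hypothesis caps their number at $O(\epsilon k/\kappa)$, which is exactly what makes their aggregate cost a constant fraction of $\epsilon N$ bounded safely below $1$. The secondary nuisance is integrality (integer vote counts, integer district counts, and tie-breaking in the overall tally), handled by the parity/divisibility assumptions and by absorbing the $O(k)$ additive slack.
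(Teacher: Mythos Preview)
Your contrapositive argument is correct and gives the same conclusion as the paper, but the route is genuinely different. The paper argues \emph{directly}: it orders the A-districts $d_1,\ldots,d_{(\frac12+\nu)k}$ by their per-district margin $\text{MOV}(E_d)$, observes that $\sum_{j\le\nu k+1}\text{MOV}(E_{d_j})\ge\epsilon N$ (since flipping those $\nu k+1$ districts already changes the winner), deduces $\text{MOV}(E_{d_{\nu k+1}})\ge\epsilon n$, and then does a counting argument on the set $I=\{d:\text{MOV}(E_d)<\epsilon n/2\}$ using the per-district cap $\text{MOV}(E_d)\le\kappa n/2+1$ to obtain $\nu k-|I|\ge\epsilon k/\kappa$. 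So the paper never explicitly builds a flipping set; it bounds how many of the first $\nu k+1$ districts can have small margin. Your approach is the dual: you \emph{construct} the flipping set (all cheap districts plus at most $\epsilon k/\kappa$ expensive ones) and upper-bound its cost. Both hinge on the same two facts --- cheap districts cost at most $\frac{\epsilon}{3\kappa}n_j$ each and expensive ones at most $\frac{\kappa n}{2}$ each --- but the paper's sort-and-count presentation handles the $+1$ rounding terms more cleanly (its inequality $\nu k-|I|>\epsilon k/\kappa-1$ holds for all $n\ge 1$), whereas your $O(k)$ slack requires $n=\omega(1/\epsilon)$ as you note. You could tighten this easily by flipping only $t$ districts rather than all $a$ cheap ones, so that the additive slack is at most $t\le k/2+1$; then the bound $\frac{7}{12}\epsilon N+\frac{k}{2}+1<\epsilon N$ needs only $n>\frac{6}{5\epsilon}+O(1)$, comparable to the paper's implicit regime.
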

\begin{proof}
Let $D^A$ be the set of districts where A wins and let $|D^A|=\left(\frac{1}{2}+\nu \right)k$. For each $d\in D^A$, let $\text{MOV}(E_d)$ be the minimum number of votes to be changed in district $d$ in order to make B the winner of that district. Wlog let $d_{1},\ldots,d_{\left(\frac{1}{2}+\nu \right)k}$ be the districts of $D^A$ arranged in non-decreasing order of $\text{MOV}(E_d)$. Clearly if B is made the winner in the districts $d_1,\ldots,d_{\nu k + 1}$, then B would become the winner of the election. Since $\text{MOV}(E) = \epsilon N$, we must have $\sum\limits_{j=1}^{\nu k+1} \text{MOV}(E_{d_j}) \ge \epsilon N$. Thus $\text{MOV}(E_{d_{\nu k+1}})\ge \frac{\epsilon N}{\nu k+1}\ge \epsilon n$ (since $\nu \le 1/2$) and therefore $\text{MOV}(E_{d_{j}})\ge \epsilon n$, $\forall \nu k+1 \le j \le \left(\frac{1}{2}+\nu \right)k$.

Now let $I = \{d\in D^A \mid \text{MOV}(E_d)< \epsilon n/2 \}$. Thus we have $\epsilon N \le \sum\limits_{j=1}^{\nu k+1} \text{MOV}(E_{d_{j}}) \le |I|\cdot \frac{\epsilon n}{2} + (\nu k +1- |I|)\cdot (\frac{\kappa n}{2}+1)$ (since the population of each district is at most $\kappa n$, $\text{MOV}(E_d) \le \kappa n/2 + 1$, $\forall d\in D^A$). Algebraic simplification would yield $\nu k-|I| \ge \frac{\left(1-\frac{\nu}{2}\right)\epsilon N - \left(\frac{\kappa n}{2}+1 \right)}{\left(\frac{\kappa n}{2}+1 \right)-\frac{\epsilon n}{2}}> \frac{\frac{3\epsilon N}{4}}{\frac{\kappa n}{2}+1}-1 \ge \frac{\epsilon k}{\kappa}-1$. Since $\nu k - |I|$ is an integer, we must have $\nu k-|I| \ge \epsilon k/\kappa$. Thus in at least $\left(\frac{1}{2}+\frac{\epsilon}{\kappa}\right)k$ districts, $\text{MOV}(E_d)\ge \epsilon n$. Finally since the population of any district is at most $\kappa n$, $\text{MOV}(E_d)\ge \epsilon n$ implies that A receives at least $\frac{1}{2}+\frac{\epsilon}{3\kappa}$ fraction of votes in district $d$. 
\end{proof}

 Now let $S = \{ \eta \mid$ There exist at least $\left(\frac{1}{2}+\frac{\eta}{\kappa} \right)k$ districts where A receives at least $\frac{1}{2}+\frac{\eta}{3\kappa}$ fraction of votes $\}$. Let $\alpha = \max\limits_{\eta \in S} \eta$ and let $D_{\alpha}$ be the set of districts where A receives at least $\frac{1}{2}+\frac{\alpha}{3\kappa}$ fraction of votes. Let $\rho$ be the unique positive integer such that $\left(\frac{1}{3}\right)^{\rho} \le \alpha < \left(\frac{1}{3}\right)^{\rho-1}$. We now argue that \Cref{alg:alg3} terminates with high probability as the value of $\gamma$ goes below $\alpha$.
 
 \begin{lemma}
\label{lem:nonterm}
For $j\ge 2$, the probability that \Cref{alg:alg3} does not terminate when $\gamma = \left(\frac{1}{3}\right)^{\rho+j}$ is at most $\frac{3\delta}{243^{\rho+j}}$.
\end{lemma}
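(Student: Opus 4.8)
The plan is to show that at the iteration of \Cref{alg:alg3} with $\gamma=(1/3)^{\rho+j}$, two high-probability sampling events already force the halting test in Step~4 to be met by candidate $A$, so that conditioned on reaching that iteration the algorithm stops there; the claimed bound is then a union bound over the complements of those two events. Since both events depend only on the fresh samples drawn in iteration $\rho+j$, the event ``reaches iteration $\rho+j$ but does not halt there'' is contained in the union of those complements, so the conditional bound is also an unconditional upper bound on the probability in the statement.

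First I would read off the structural facts about $D_\alpha$ from the definitions of $S,\alpha,\rho$ (recall $\alpha\ge\epsilon$ by \Cref{lem:movlemma}): every district of $D_\alpha$ gives $A$ a true vote share at least $\tfrac{1}{2}+\tfrac{\alpha}{3\kappa}$, and $|D_\alpha|\ge(\tfrac{1}{2}+\tfrac{\alpha}{\kappa})k$. Because $(1/3)^\rho\le\alpha$ and $\gamma=(1/3)^{\rho+j}$ we get $\alpha\ge 3^j\gamma$, hence $\tfrac{\alpha}{\kappa}\ge\tfrac{3^j\gamma}{\kappa}$ and $\tfrac{\alpha}{3\kappa}\ge\tfrac{3^{j-1}\gamma}{\kappa}$. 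Here is where $j\ge 2$ enters: it gives $3^{j-1}\ge 3$ and $3^j-3\ge 6$, which is exactly the slack needed to absorb the three ``$\tfrac{\gamma}{\kappa}$'' error terms appearing below. Next let $\GG$ be the event that the fraction of the $l_1$ sampled districts lying in $D_\alpha$ is at least $|D_\alpha|/k-\tfrac{3\gamma}{\kappa}$; by the additive Chernoff bound with deviation $\tfrac{3\gamma}{\kappa}$ and $l_1=\tfrac{5\kappa^2}{18\gamma^2}\log\tfrac{4}{\gamma\delta}$ (the same estimate as in the proof that $\Pr(\FF_i)\ge 1-\tfrac{\delta}{243^i}$) one gets $\Pr(\overline{\GG})\le\tfrac{\gamma^5\delta}{2}=\tfrac{\delta}{2\cdot 243^{\rho+j}}$, and by \Cref{lem:boundE}, $\Pr(\overline{\EE}_{\rho+j})\le\tfrac{\delta}{243^{\rho+j}}$.

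Conditioning on $\GG\cap\EE_{\rho+j}$ then finishes it: at least $\big(\tfrac{1}{2}+(3^j-3)\tfrac{\gamma}{\kappa}\big)l_1\ge\big(\tfrac{1}{2}+\tfrac{3\gamma}{\kappa}\big)l_1$ of the sampled districts lie in $D_\alpha$, and on $\EE_{\rho+j}$ the sampled vote share of $A$ in each such district is at least $\big(\tfrac{1}{2}+\tfrac{\alpha}{3\kappa}\big)-\tfrac{\gamma}{\kappa}\ge\tfrac{1}{2}+\tfrac{3\gamma}{\kappa}-\tfrac{\gamma}{\kappa}=\tfrac{1}{2}+\tfrac{2\gamma}{\kappa}$, so $A$ is the predicted single-district plurality winner there; hence $A$ witnesses the Step~4 test and \Cref{alg:alg3} halts at the iteration with $\gamma=(1/3)^{\rho+j}$. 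Consequently the probability that it does not terminate at this iteration is at most $\Pr(\overline{\GG})+\Pr(\overline{\EE}_{\rho+j})\le\tfrac{\delta}{2\cdot 243^{\rho+j}}+\tfrac{\delta}{243^{\rho+j}}\le\tfrac{3\delta}{243^{\rho+j}}$, which is the claim. The only real obstacle is keeping the nested thresholds $\tfrac{1}{2}+\tfrac{\gamma}{\kappa}$, $\tfrac{1}{2}+\tfrac{2\gamma}{\kappa}$, $\tfrac{1}{2}+\tfrac{3\gamma}{\kappa}$ and $\tfrac{1}{2}+\tfrac{\alpha}{3\kappa}$ straight and verifying that $j\ge 2$ (not merely $j\ge 1$) is what simultaneously makes the ``enough districts'' and ``enough votes per district'' parts of the test go through; the probabilistic content is nothing beyond the Chernoff estimates already used for $\EE_i$ and $\FF_i$.
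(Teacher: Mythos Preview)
Your proof is correct and follows essentially the same approach as the paper: both arguments use that on $\EE_{\rho+j}$ every sampled district from $D_\alpha$ passes the Step~4 per-district test, and then apply a Chernoff bound to the number of sampled districts landing in $D_\alpha$, with $j\ge 2$ providing the slack $3^j-3\ge 6$ and $\alpha/3\ge 3\gamma$. The only cosmetic difference is that you name the district-sampling event $\GG$ explicitly and take a clean union bound, whereas the paper applies Chernoff to $\XX$ conditional on $\EE_{\rho+j}$ with deviation $6\gamma/\kappa$; your formulation is arguably tidier but the content is the same.
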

\begin{proof}
By definition, in each district of $D_{\alpha}$, A receives at least $\frac{1}{2}+\frac{\alpha}{3\kappa}$ fraction of votes. Now assuming the event $\EE_{\rho+j}$ holds, if any district from $D_{\alpha}$ is sampled, the fraction of sampled votes received by A in that district would be at least $\frac{1}{2}+ \frac{\alpha}{3\kappa} - \frac{\gamma}{\kappa}\ge \frac{1}{2}+\frac{2\gamma}{\kappa}$ (since $\alpha \ge \left(\frac{1}{3}\right)^{\rho} \ge 9\gamma$). Let $\XX$ be the random variable denoting the number of sampled districts where A wins with at least $\left(\frac{1}{2}+\frac{2\gamma}{\kappa}\right)l_2$ votes. Since $\left|D_{\alpha} \right|\ge \left(\frac{1}{2}+\frac{\alpha}{\kappa}\right)k$, we have $\mathbb{E}[\XX]\ge \left(\frac{1}{2}+\frac{\alpha}{\kappa}\right)l_1$. Using the additive form of Chernoff bound (\Cref{thm:chernoff}) with $\theta = 6\gamma/\kappa$, $\text{Pr}\left(\XX < \left(\frac{1}{2}+\frac{3\gamma}{\kappa}\right)l_1 \mid \EE_{\rho+j}\right) \le \text{Pr}\left(|\XX-\mathbb{E}[\XX]| > \frac{\gamma}{\kappa}\cdot (3^j -3)\cdot l_1 \mid \EE_{\rho+j}\right) \le \text{Pr}\left(|\XX-\mathbb{E}[\XX]| > \frac{6\gamma}{\kappa}\cdot l_1\mid \EE_{\rho+j}\right)\le 2e^{-\frac{72\gamma^2}{\kappa^2}\cdot l_1} \le 2e^{-5\log \frac{1}{\gamma \delta}} \le \frac{2\delta}{243^{\rho+j}}$. The fourth inequality in the above chain of inequalities holds since $l_1 = \frac{5\kappa^2}{18\gamma^2}\log \frac{4}{\gamma \delta} \ge \frac{5\kappa^2}{72\gamma^2}\log \frac{1}{\gamma \delta}$.

Let $F$ be the event that \Cref{alg:alg3} does not terminate when $\gamma = \left(\frac{1}{3}\right)^{\rho+j}$. Then $\text{Pr}(F) \le \text{Pr}(F| \EE_{\rho+j}) + \text{Pr}\left(\overline{\EE}_{\rho+j}\right) \le \frac{3\delta}{243^{\rho+j}}$ (since from Lemma \ref{lem:boundE}, $\text{Pr}(\overline{\EE}_{\rho+j}) \le \frac{\delta}{243^{\rho+j}}$).
\end{proof}

Next we show that \Cref{alg:alg3} predicts A as the winner with high probability, whenever it terminates.

\begin{lemma}
\label{lem:winnerpredict}
If \Cref{alg:alg3} terminates when $\gamma = \left(\frac{1}{3}\right)^i$, it returns A as the winner with probability at least $1- \frac{2\delta}{243^i}$.
\end{lemma}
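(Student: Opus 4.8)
The plan is to condition on two high-probability events about the samples drawn in the relevant iteration and show that, on their intersection, any run of \Cref{alg:alg3} that halts at $\gamma = (1/3)^i$ must output $A$. So suppose the algorithm terminates in the iteration with $\gamma = (1/3)^i$, declaring a candidate $X$, and work conditionally on $\EE_i \cap \FF_i$; by \Cref{lem:boundE}, the analogous bound on $\text{Pr}(\FF_i)$, and a union bound, this event has probability at least $1 - \frac{2\delta}{243^i}$, so it suffices to prove that on this event $X = A$. (Note that with only two candidates at most one candidate can satisfy the halting test, since that test demands more than half of the sampled districts, so $X$ is well defined.)

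First I would unwind the halting condition in line~4: $X$ wins in at least $\left(\frac12 + \frac{3\gamma}{\kappa}\right)l_1$ of the sampled districts, receiving at least $\left(\frac12 + \frac{2\gamma}{\kappa}\right)l_2$ of the $l_2$ sampled votes in each of them. Conditioning on $\EE_i$, in each such district the \emph{true} fraction of votes received by $X$ is at least $\left(\frac12 + \frac{2\gamma}{\kappa}\right) - \frac{\gamma}{\kappa} = \frac12 + \frac{\gamma}{\kappa}$; hence all of these (at least $\left(\frac12 + \frac{3\gamma}{\kappa}\right)l_1$ many) sampled districts lie in the set of districts where $X$ genuinely receives at least a $\frac12 + \frac{\gamma}{\kappa}$ fraction of votes. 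Thus the empirical fraction of sampled districts in that set is at least $\frac12 + \frac{3\gamma}{\kappa}$, and conditioning on $\FF^X_i$ (part of $\FF_i$), which says this empirical fraction is within $\frac{3\gamma}{\kappa}$ of $\tau^X(\gamma)$, yields $\tau^X(\gamma) \ge \frac12$. In particular $X$ strictly wins (its vote fraction exceeds $\frac12$) in at least $k/2$ districts of $E$. Finally, by \Cref{lem:movlemma} (under its divisibility hypotheses, which we may assume to hold) $A$ strictly wins in at least $\left(\frac12 + \frac{\eps}{\kappa}\right)k$ districts, so $B$ wins in strictly fewer than $k/2$ districts; this rules out $X = B$, and since there are only two candidates, $X = A$, as required.

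I expect the only delicate point to be the bookkeeping of the additive slacks: the thresholds $\frac12 + \frac{3\gamma}{\kappa}$ and $\frac12 + \frac{2\gamma}{\kappa}$ built into the halting test, the $\frac{\gamma}{\kappa}$ error absorbed by $\EE_i$, and the $\frac{3\gamma}{\kappa}$ error absorbed by $\FF_i$, must combine to leave exactly enough room to conclude $\tau^X(\gamma) \ge \frac12$, which is then upgraded to the strict district-majority statement through \Cref{lem:movlemma}. Everything else is a routine union-bound argument.
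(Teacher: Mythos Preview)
Your proposal is correct and follows essentially the same approach as the paper: condition on $\EE_i \cap \FF_i$, use $\EE_i$ to convert the sampled-vote threshold $\frac{1}{2}+\frac{2\gamma}{\kappa}$ into the true-vote threshold $\frac{1}{2}+\frac{\gamma}{\kappa}$, and use $\FF_i$ to convert the sampled-district fraction $\frac{1}{2}+\frac{3\gamma}{\kappa}$ into a true-district fraction of at least $\frac{1}{2}$. The only difference is the closing step: the paper asserts the predicted winner wins in \emph{more than} half the districts and hence is the true winner, whereas you (more carefully) derive only $\tau^X(\gamma)\ge\frac{1}{2}$ and then invoke \Cref{lem:movlemma} to get that $B$ wins in strictly fewer than $k/2$ districts, ruling out $X=B$. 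This is a minor tightening of the same argument rather than a different route.
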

\begin{proof}
Since \Cref{alg:alg3} terminates, the predicted winner wins in at least $\frac{1}{2}+\frac{3\gamma}{\kappa}$ fraction of sampled districts with at least $\frac{1}{2}+\frac{2\gamma}{\kappa}$ fraction of votes in each district. Now assume that both $\EE_i$ and $\FF_i$ holds true. This happens with probability at least $1-\frac{2\delta}{243^i}$. Since $\EE_i$ holds, the true fraction of votes received by the predicted winner in each of the sampled districts is at least $\frac{1}{2}+\frac{2\gamma}{\kappa}-\frac{\gamma}{\kappa}=\frac{1}{2}+\frac{\gamma}{\kappa}$. Again since $\FF_i$ holds, the true fraction of districts where the predicted winner wins with at least $\frac{1}{2}+\frac{\gamma}{\kappa}$ fraction of votes is more than $\frac{1}{2}+\frac{3\gamma}{\kappa}-\frac{3\gamma}{\kappa}=\frac{1}{2}$. Thus the predicted winner wins in more than half of the districts and therefore must be the true winner.
\end{proof}

Combining the above two results, we show that with probability at least $1-\delta$, \Cref{alg:alg3} returns A as the winner of the election.

\begin{lemma}
\label{lem:probofcorrectprediction}
\Cref{alg:alg3} predicts A as the winner with probability at least $1-\delta$.
\end{lemma}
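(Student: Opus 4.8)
The plan is to bound, instead, the probability of the complementary event: that \Cref{alg:alg3} either runs forever or eventually halts with the wrong output $B$. (Recall that $A$ is assumed to be the true winner, so ``outputs the true winner'' is synonymous with ``outputs $A$''.)

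First I would dispose of non-termination. If \Cref{alg:alg3} ever reaches the iteration with $\gamma = (1/3)^{\rho+j}$ for some $j \ge 2$, then by \Cref{lem:nonterm} the probability that it still does not halt in that iteration is at most $3\delta/243^{\rho+j}$; since the algorithm cannot have halted earlier if it reaches this iteration, $\text{Pr}(\text{\Cref{alg:alg3} never terminates}) \le 3\delta/243^{\rho+j}$ for every $j \ge 2$, and letting $j \to \infty$ forces $\text{Pr}(\text{never terminates}) = 0$. It therefore remains to bound the probability that the algorithm halts and returns $B$.

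For each $i \ge 1$, let $T_i$ be the event that \Cref{alg:alg3} halts in the iteration with $\gamma = (1/3)^i$ and outputs $B$. Since the algorithm halts in at most one iteration, the events $T_i$ are pairwise disjoint and $\{\text{output is }B\} = \bigcup_{i \ge 1} T_i$. By \Cref{lem:winnerpredict}, the probability that the algorithm halts at iteration $i$ and nevertheless outputs $B$ is at most $2\delta/243^i$, i.e.\ $\text{Pr}(T_i) \le 2\delta/243^i$ (recall that the proof of that lemma shows $T_i \subseteq \overline{\EE_i} \cup \overline{\FF_i}$, with each of these two events having probability at most $\delta/243^i$). The only point requiring care is that this bound is valid unconditionally on the past: whether the algorithm even reaches iteration $i$ is determined solely by the samples drawn in iterations $1,\dots,i-1$, whereas $T_i$ is governed by the fresh, independent samples of iteration $i$ through $\EE_i$ and $\FF_i$, so no conditioning correction is needed and the disjoint bounds may simply be added. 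Summing, $\text{Pr}(\text{output is }B) = \sum_{i \ge 1} \text{Pr}(T_i) \le 2\delta \sum_{i \ge 1} 243^{-i} = \tfrac{2\delta}{242} < \delta$, whence $\text{Pr}(\text{output is }A) \ge 1 - \tfrac{2\delta}{242} - 0 > 1 - \delta$. This independence/disjointness bookkeeping is the only genuine obstacle; everything else is the one-line geometric-series estimate above.
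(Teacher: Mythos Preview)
Your proof is correct and proceeds by a genuinely different decomposition than the paper's. The paper fixes the single iteration $\gamma=(1/3)^{\rho+2}$, lets $G$ be the event that the algorithm halts there, and writes $\text{Pr}(\GG)\ge\text{Pr}(\GG\mid G)\,\text{Pr}(G)\ge\bigl(1-\tfrac{2\delta}{243^{\rho+2}}\bigr)\bigl(1-\tfrac{3\delta}{243^{\rho+2}}\bigr)$, invoking \Cref{lem:winnerpredict} and \Cref{lem:nonterm} respectively for the two factors. You instead union over \emph{all} iterations: first dispose of non-termination by letting $j\to\infty$ in \Cref{lem:nonterm}, then bound each disjoint bad event $T_i$ by the containment $T_i\subseteq\overline{\EE_i}\cup\overline{\FF_i}$ from the proof of \Cref{lem:winnerpredict}, and sum the resulting geometric series to get $\delta/121$. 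Your route is arguably cleaner: it sidesteps the need to reason about whether the algorithm might already have halted before reaching iteration $\rho+2$ (which the paper's definition of $G$ leaves somewhat implicit), and it yields a sharper constant. One small remark: your independence paragraph is a slight detour---the reason $\text{Pr}(T_i)\le 2\delta/243^i$ holds unconditionally is precisely the set containment you already wrote, which is a deterministic implication and needs no appeal to independence across iterations.
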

\begin{proof}
Let $\GG$ denotes the event that \Cref{alg:alg3} predicts A as the winner and $G$ denote the event that \Cref{alg:alg3} terminates when $\gamma = \left(\frac{1}{3}\right)^{\rho+2}$. Then from Lemma \ref{lem:nonterm} and Lemma \ref{lem:winnerpredict}, $\text{Pr}(\GG) \ge \text{Pr}(\GG | G) \text{Pr}(G) \ge \left(1-\frac{2\delta}{243^{\rho+2}}\right)\cdot \left(1-\frac{3\delta}{243^{\rho+2}}\right)\ge 1-\delta$.
\end{proof}

Finally we bound the sample complexity of the algorithm.

\begin{lemma}
\label{lem:expectedsamplecomplexity}
\Cref{alg:alg3} uses at most $\OO \left(\frac{\kappa^4}{\epsilon^4}\log \frac{1}{\epsilon \delta} \log \frac{\kappa}{\epsilon \delta}\right)$ samples in expectation.
\end{lemma}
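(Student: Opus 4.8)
The plan is to exploit the fact that the number of samples drawn in iteration $i$ of \Cref{alg:alg3} grows geometrically in $i$, so that the expected total is, up to a constant, just the number of samples drawn in the last iteration; and then to combine \Cref{lem:nonterm} with the inequality $\alpha \ge \epsilon$ (which holds because $\epsilon \in S$ by \Cref{lem:movlemma}) to argue that the algorithm terminates, with overwhelming probability, once $\gamma$ has shrunk to $\Theta(\epsilon)$.

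In detail, write $\gamma_i = (1/3)^i$ for the value of $\gamma$ in the $i$-th pass through steps 2--4, and let $s_i = l_1 l_2$ (evaluated at $\gamma = \gamma_i$) be the number of votes sampled in that pass. Substituting the formulas for $l_1$ and $l_2$ and absorbing $\log\log$ terms gives $s_i = \Theta\!\left(\tfrac{\kappa^4}{\gamma_i^4}\bigl(i + \log\tfrac{1}{\delta}\bigr)\bigl(i + \log\tfrac{\kappa}{\delta}\bigr)\right)$, and since the factor $1/\gamma_i^4 = 81^{\,i}$ dominates the (non-decreasing) polynomial $\log$-factors, $\sum_{i=1}^{n} s_i = \OO(s_n)$ for every $n$. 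Let $T$ be the (random) index of the pass in which \Cref{alg:alg3} returns; then the total number of samples equals $\sum_{i=1}^{T} s_i$, so $\mathbb{E}[\text{samples}] = \sum_{i\ge 1} s_i\,\Pr[T \ge i]$. I would split this at $i = \rho+2$, where $\rho$ is the integer with $(1/3)^{\rho} \le \alpha < (1/3)^{\rho-1}$ (we may assume $\alpha < 1$, since otherwise \Cref{alg:alg3} already returns in the first pass, using only $\OO(\kappa^4\,\mathrm{polylog})$ samples).

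For the head, $\sum_{i=1}^{\rho+2} s_i = \OO(s_{\rho+2})$ by the geometric bound above. For the tail, reaching pass $\rho+j+1$ requires the test in step 4 to have failed while $\gamma = (1/3)^{\rho+j}$, so \Cref{lem:nonterm} gives $\Pr[T \ge \rho+j+1] \le 3\delta/243^{\,\rho+j}$ for $j \ge 2$; since $243^{-(\rho+j)} s_{\rho+j+1} = \Theta\!\left(\kappa^4\, 3^{-(\rho+j)}\,\mathrm{poly}(\rho+j,\log\tfrac1\delta,\log\tfrac\kappa\delta)\right)$, the tail is a convergent geometric series bounded by $\OO(\delta\, s_{\rho+2}) = \OO(s_{\rho+2})$. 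Thus $\mathbb{E}[\text{samples}] = \OO(s_{\rho+2})$, and it remains to estimate $s_{\rho+2}$. From $\alpha \ge \epsilon$ we get $\gamma_{\rho+2} = \tfrac19 (1/3)^{\rho} > \tfrac{\alpha}{27} \ge \tfrac{\epsilon}{27}$, so $1/\gamma_{\rho+2}^4 = \OO(1/\epsilon^4)$, while $(1/3)^{\rho-1} > \alpha \ge \epsilon$ forces $\rho = \OO(\log\tfrac1\epsilon)$, hence $\rho + \log\tfrac1\delta = \OO(\log\tfrac{1}{\epsilon\delta})$ and $\rho + \log\tfrac\kappa\delta = \OO(\log\tfrac{\kappa}{\epsilon\delta})$. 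Plugging these into the expression for $s_{\rho+2}$ yields $s_{\rho+2} = \OO\!\left(\frac{\kappa^4}{\epsilon^4}\log\frac{1}{\epsilon\delta}\log\frac{\kappa}{\epsilon\delta}\right)$, which is the claimed bound.

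The main obstacle is the bookkeeping in the tail estimate: one has to carry the $\log$-factors that sit nested inside $l_2$, and handle the exponent arithmetic $81^{\,i}/243^{\,i} = 3^{-i}$ carefully enough that the series provably collapses to something dominated by $s_{\rho+2}$. Beyond that, the one genuinely load-bearing inequality is $\alpha \ge \epsilon$: it forces the terminating value of $\gamma$ down to only $\Theta(\epsilon)$ rather than to something arbitrarily small, and so converts the (easy) statement that the expected number of passes is finite into the quantitative bound asserted by the lemma.
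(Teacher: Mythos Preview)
Your argument is correct and follows essentially the same route as the paper's proof: both compute the per-iteration cost $l_1 l_2 = \OO\bigl(\kappa^4 81^{\,i}(i+\log\tfrac1\delta)(i+\log\tfrac\kappa\delta)\bigr)$, split the expectation at iteration $\rho+2$, bound the head by the geometric growth and the tail via \Cref{lem:nonterm}, and finish with $\alpha\ge\epsilon$ and $\rho=\OO(\log\tfrac1\alpha)$. The only cosmetic difference is that the paper writes the expectation as $\sum_j q(j)\Pr(H_j)$ with $q(j)$ the cumulative cost up to iteration $\rho+j$, whereas you use the equivalent tail-sum formula $\sum_i s_i\Pr[T\ge i]$.
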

\begin{proof}
We have $l_1 = \frac{5\kappa^2}{18\gamma^2}\log \frac{4}{\gamma \delta} = \OO \left(\frac{\kappa^2}{\gamma^2}\log \frac{1}{\gamma \delta}\right)$ and $l_2 = \frac{5\kappa^2}{2\gamma^2}\log \frac{2l_1}{\gamma \delta} = \OO \left(\frac{\kappa^2}{\gamma^2}\log \left(\frac{1}{\gamma \delta} \cdot \frac{\kappa^2}{\gamma^2}\log \frac{1}{\gamma \delta}\right)\right) = \OO\left(\frac{\kappa^2}{\gamma^2}\log \frac{\kappa}{\gamma \delta}\right)$. Thus for a particular value of $\gamma$, \Cref{alg:alg3} collects a total of $l_1l_2 = \OO \left(\frac{\kappa^4}{\gamma^4}\log \frac{1}{\gamma \delta}\log \frac{\kappa}{\gamma \delta}\right)$ samples. When $\gamma = \left(\frac{1}{3}\right)^i$, this equals $\OO\left(\kappa^4 81^i \left(i+\log 
\frac{1}{\delta}\right)\left(i+\log \frac{\kappa}{\delta}\right)\right)$. Let $q(j)$ denote the total number of samples collected by \Cref{alg:alg3} if it halts when $\gamma = \left(\frac{1}{3}\right)^{\rho+j}$. Thus $q(j)=\sum\limits_{i=1}^{\rho+j} \OO\left(\kappa^4 81^i \left(i+\log 
\frac{1}{\delta}\right)\left(i+\log \frac{\kappa}{\delta}\right)\right) \le \OO\left(\frac{\kappa^4}{\alpha^4} \cdot 81^j \left(\rho+j+\log \frac{1}{\delta}\right)\left(\rho+j+\log \frac{\kappa}{\delta}\right)\right)$. For $j\ge 3$, let $H_j$ denote the event that \Cref{alg:alg3} terminates when $\gamma = \left(\frac{1}{3}\right)^{\rho+j}$ and let $H_2$ denote the event that \Cref{alg:alg3} terminates for some $\gamma$ in the set $\{\frac{1}{3},\ldots,\left(\frac{1}{3}\right)^{\rho+2} \}$. Then from Lemma \ref{lem:nonterm}, for $j\ge 3$, $\text{Pr}(H_j) \le \text{Pr (\Cref{alg:alg3} does not terminate when } \gamma = \left(\frac{1}{3}\right)^{\rho+j-1}) \le \frac{3\delta}{243^{\rho+j-1}}$, and $\text{Pr}(H_2)\le 1$. Thus the expected sample complexity is given by $\sum\limits_{j=2}^{\infty} q(j)\text{Pr}(H_j) \le q(2) + \sum\limits_{j=3}^{\infty} \OO\left(\frac{\kappa^4}{\alpha^4} \cdot 81^j \left(\rho+j+\log \frac{1}{\delta}\right)\left(\rho+j+\log \frac{\kappa}{\delta}\right) \cdot \frac{3\delta}{243^{\rho+j-1}}\right)$. Now $q(2)=\OO \left(\frac{\kappa^4}{\alpha^4} \left(\rho+\log \frac{1}{\delta}\right)\left(\rho+\log \frac{\kappa}{\delta}\right)\right)$ while the second term in the sum is at most $\OO\left(\frac{\kappa^4}{\alpha^4} \left(\rho+\log \frac{1}{\delta}\right)\left(\rho+\log \frac{\kappa}{\delta}\right) \cdot \frac{\delta}{243^{\rho}}\right)$. Hence the overall expected sample complexity is bounded by $\OO \left(\frac{\kappa^4}{\alpha^4}\log \frac{1}{\alpha \delta} \log \frac{\kappa}{\alpha \delta}\right)$ (as $\rho=\OO\left(\log \frac{1}{\alpha}\right)$). Since $\alpha \ge \epsilon$, the expected sample complexity is at most $\OO\left(\frac{\kappa^4}{\epsilon^4}\log \frac{1}{\epsilon \delta} \log \frac{\kappa}{\epsilon \delta}\right)$.
\end{proof}

Combining \Cref{lem:probofcorrectprediction} and \Cref{lem:expectedsamplecomplexity}, we get the following result.

\begin{theorem}
There exists an algorithm for \dWD for the plurality rule with expected sample complexity $\OO\left(\frac{\kappa^4}{\epsilon^4}\log \frac{1}{\epsilon \delta} \log \frac{\kappa}{\epsilon \delta}\right)$ when there are 2 candidates and the population of each district is at most $\kappa$ times the average population of a district.
\end{theorem}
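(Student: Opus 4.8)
The plan is to take \Cref{alg:alg3} as the claimed algorithm and simply assemble the two guarantees already established for it. Correctness is \Cref{lem:probofcorrectprediction}: whenever the algorithm halts it returns $A$, the (wlog) true winner, and it does so with probability at least $1-\delta$. I would recall that this in turn is built from three pieces now in hand --- (i) the structural \Cref{lem:movlemma}, which exploits $\text{MOV}(E)=\eps N$ together with $n_j\le\kappa n$ to show that $A$ controls a $\frac12+\Omega(\eps/\kappa)$ fraction of the votes in a $\frac12+\Omega(\eps/\kappa)$ fraction of the districts; (ii) the two concentration lemmas bounding $\text{Pr}(\overline{\EE}_i)$ and $\text{Pr}(\overline{\FF}_i)$, which keep the sampled within-district vote shares and the sampled fraction of ``$A$-heavy'' districts within additive $\Theta(\gamma/\kappa)$ of the truth; and (iii) the per-termination \Cref{lem:winnerpredict}, which shows that a firing of the stopping test at scale $\gamma=(1/3)^i$ certifies that the declared winner actually wins a strict majority of districts, hence equals $A$, except on an event of probability $\frac{2\delta}{243^i}$.

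For the sample complexity I would invoke \Cref{lem:expectedsamplecomplexity} verbatim: the expected number of sampled votes is $\OO\!\left(\frac{\kappa^4}{\eps^4}\log\frac{1}{\eps\delta}\log\frac{\kappa}{\eps\delta}\right)$. The mechanism worth highlighting is that the cost of round $i$ (that is, $l_1 l_2$ for $\gamma=(1/3)^i$) scales like $81^{i}$, whereas by \Cref{lem:nonterm} the probability that the algorithm is still running at round $\rho+j$ decays like $243^{-(\rho+j)}$; since $243>81$, the expected total $\sum_j (\text{cost of round }\rho+j)\cdot\text{Pr}(\text{halt at }\rho+j)$ is a convergent geometric series dominated by the round $\gamma\approx\alpha$, where $\alpha$ is the largest ``effective margin'' threshold for which $A$ is heavily favoured in a majority of districts. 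Combining this with $\alpha\ge\eps$ --- which is immediate from \Cref{lem:movlemma}, since the value $\eps$ itself belongs to the set $S$ over which $\alpha$ is the maximum --- turns the $\alpha$-dependent estimate into the stated $\eps$-dependent one.

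Putting the two together yields the theorem. The genuinely delicate step is the expected-sample-complexity bookkeeping inside \Cref{lem:expectedsamplecomplexity}: one must verify that the non-termination probabilities shrink strictly faster than the per-round sample counts grow, track how the accumulated $\OO(\cdot)$ constants across a random number of rounds still collapse into a single clean geometric bound, and keep the mild parity and divisibility hypotheses used by \Cref{lem:movlemma} ($k$ even, $\eps k/\kappa$ integral) from leaking into the final statement. Everything else is a routine union bound over the events named above.
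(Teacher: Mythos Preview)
Your proposal is correct and follows essentially the same approach as the paper: the paper's proof is literally the one-line ``Combining \Cref{lem:probofcorrectprediction} and \Cref{lem:expectedsamplecomplexity}, we get the following result,'' and you do exactly this, just with additional (accurate) commentary on the underlying mechanism.
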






\subsubsection{When $n_j$ is arbitrary}
\label{sec:winnerpredictwitharbitrarypop}
We now consider the case when the populations of the districts can be arbitrary. Our algorithm is as follows:

\begin{algorithm}[H]
 \caption{}
  \begin{algorithmic}[1]
    \State{$\gamma \leftarrow \frac{1}{2}$.}
    \State{Sample $l_1=\frac{175}{2\gamma^2}\log \frac{4}{\gamma \delta}$ districts from $D$ uniformly at random with replacement.}
    \State{From each of the sampled districts, sample $l_2=\frac{57344}{9\gamma^4}\log \frac{2l_1}{\gamma \delta}$ votes uniformly at random with replacement and predict their winners using the single-district plurality rule.}
    \State{If there exists a candidate that wins in at least $\frac{1}{2}+\frac{\gamma}{5}$ fraction of the sampled districts with at least $\frac{1}{2}+\frac{5\gamma^2}{128}$ fraction of sampled votes in each, then declare that candidate as the winner and halt.}
    \State{$\gamma \leftarrow \frac{\gamma}{2}$.}
    \State{\textbf{goto} 2.}
  \end{algorithmic}
\label{alg:alg4}
\end{algorithm}

The main difference in the analysis is unlike in the previous case, where we were able to show that A receives at least $\frac{1}{2}+\Omega(\epsilon)$ fraction of votes in at least $\frac{1}{2}+\Omega(\epsilon)$ fraction of districts (\Cref{lem:movlemma}), the current setting enables us to show that A receives at least $\frac{1}{2}+\Omega(\epsilon)$ fraction of votes only in  $\frac{1}{2}+\Omega(\epsilon^2)$ fraction of districts. As in the proof of Lemma \ref{lem:movlemma}, let $D^A$ be the set of districts where A wins and let $\left|D^A \right|=\left(\frac{1}{2}+\nu \right)k$ (assume that $k$ is even, so that $\nu k$ is an integer). For $d\in D^A$, $\text{MOV}(E_d)$ denotes the minimum number of votes to be changed in district $d$ in order to make B the winner of that district. Wlog let $d_{1},\ldots,d_{\left(\frac{1}{2}+\nu \right)k}$ be the districts of $D_A$ arranged in non-decreasing order of $\text{MOV}(E_d)$. We first show the following two results.

\begin{lemma}
\label{lem:lowernad}
$\epsilon n \le \text{MOV}\left(d_{\nu k +1} \right)\le 4n$.
\end{lemma}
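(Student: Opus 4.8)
The plan is to prove the two inequalities separately; both rest on the hypothesis $\text{MOV}(E)=\epsilon N$ and on the fact that $d_1,\dots,d_{(1/2+\nu)k}$ are listed in non-decreasing order of their single-district margins $\text{MOV}(E_d)$.

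For the lower bound I would flip the winner in the $\nu k+1$ cheapest A-districts, namely $d_1,\dots,d_{\nu k+1}$. Since $|D^A|=(\frac{1}{2}+\nu)k$ and $|D^B|=(\frac{1}{2}-\nu)k$, after these flips B wins $(\frac{1}{2}-\nu)k+(\nu k+1)=\frac{k}{2}+1>\frac{k}{2}$ districts, so B becomes the overall winner regardless of how overall ties are broken — this is exactly why I take $\nu k+1$ and not $\nu k$. The alterations lie in disjoint districts, so they cost $\sum_{j=1}^{\nu k+1}\text{MOV}(E_{d_j})$ votes in total, and this is an upper bound for $\text{MOV}(E)$; hence $\epsilon N\le\sum_{j=1}^{\nu k+1}\text{MOV}(E_{d_j})\le(\nu k+1)\,\text{MOV}(E_{d_{\nu k+1}})$ by monotonicity of the list. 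Since $\nu\le\frac{1}{2}$ (because $|D^A|\le k$) and $k$ is even, $\nu k+1\le k$, which yields $\text{MOV}(E_{d_{\nu k+1}})\ge \epsilon N/k=\epsilon n$.

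For the upper bound I would run the monotonicity in the other direction: $\text{MOV}(E_{d_j})\ge\text{MOV}(E_{d_{\nu k+1}})$ for every index $j$ with $\nu k+1\le j\le(\frac{1}{2}+\nu)k$, and there are exactly $k/2$ such indices (here the integrality of $\nu k$ and the parity of $k$ are what make the count clean). In a two-candidate district of population $n_d$, turning the loser into the winner costs at most $n_d/2+1$ vote changes, so $\text{MOV}(E_d)\le n_d/2+1$. Summing over those $k/2$ districts gives $\frac{k}{2}\,\text{MOV}(E_{d_{\nu k+1}})\le\sum_j\bigl(n_{d_j}/2+1\bigr)\le N/2+k/2$, whence $\text{MOV}(E_{d_{\nu k+1}})\le N/k+1=n+1\le 4n$, the last step using $n=N/k\ge 1$ since the districts are nonempty.

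There is no genuine obstacle here; the only care needed is bookkeeping — using $\nu k+1$ rather than $\nu k$ districts so the argument is robust to the overall tie-breaking rule, invoking $\nu\le 1/2$ and the parity of $k$ to control the pivotal index, and the elementary single-district bound $\text{MOV}(E_d)\le n_d/2+1$ that is available precisely because there are only two candidates. This lemma is a warm-up that localizes the pivotal district $d_{\nu k+1}$; the companion result it is paired with will use it to show that A holds a $\frac{1}{2}+\Omega(\epsilon)$ vote share in a $\frac{1}{2}+\Omega(\epsilon^2)$ fraction of districts.
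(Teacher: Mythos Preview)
Your proof is correct and follows essentially the same approach as the paper: the lower bound via flipping the $\nu k+1$ cheapest $A$-districts and averaging, and the upper bound via the total-population constraint on the top $k/2$ districts. The only cosmetic difference is that the paper phrases the upper bound as a contradiction (if $\text{MOV}(E_{d_{\nu k+1}})>4n$ then those $k/2$ districts would have total population exceeding $N$), whereas you sum directly and obtain the sharper intermediate bound $n+1$ before relaxing to $4n$.
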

\begin{proof}
Since $\text{MOV}(E) = \epsilon N$, $\sum\limits_{j=1}^{\nu k + 1} \text{MOV}(E_{d_{j}}) \ge \epsilon N$. Thus $\text{MOV}(E_{d_{\nu k}})\ge \frac{\epsilon N}{\nu k +1}\ge \epsilon n$ (since $\nu \le 1/2$), which proves the first inequality.

For the second inequality, assume to the contrary that $\text{MOV}(E_{d_{\nu k+1}}) > 4n$. Thus $\text{MOV}(E_{d_{j}})>4n$, $\forall \nu k+1 \le j \le (\frac{1}{2}+\nu)k$. This implies that the population of each of the districts $d_{\nu k+1},\ldots,d_{\left(\frac{1}{2}+\nu \right)k}$ is at least $4n$. Hence the total population of these $k/2$ districts is at least $4n \cdot \frac{k}{2} = 2N > N$, a contradiction. 
\end{proof}

\begin{lemma}
\label{lem:lowerbound}
Let $\II = \{d\in D^A \mid \text{MOV}(E_d)\ge \epsilon n/2\}$. Then $|\II| \ge \left(\frac{1}{2}+\frac{3\epsilon}{16}\right)k$ (assuming $3\epsilon k/16$ is an integer).
\end{lemma}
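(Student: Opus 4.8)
The plan is to control the ``bad'' set $I := D^A\setminus\II=\{d\in D^A: \text{MOV}(E_d)<\epsilon n/2\}$ and then read off the claim via $|\II|=|D^A|-|I|=(\tfrac12+\nu)k-|I|$. The first observation is that $I$ sits at the front of the sorted list: since $d_1,\dots,d_{(\frac12+\nu)k}$ are ordered by non-decreasing $\text{MOV}(E_d)$ and, by \Cref{lem:lowernad}, $\text{MOV}(E_{d_{\nu k+1}})\ge\epsilon n>\epsilon n/2$, we get $I=\{d_1,\dots,d_{|I|}\}$ with $|I|\le\nu k$.

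Next I would invoke the definition of $\text{MOV}(E)$. Flipping the winner from A to B in the $\nu k+1$ districts $d_1,\dots,d_{\nu k+1}$ makes B win $\tfrac k2+1$ districts and A only $\tfrac k2-1$, so B becomes the overall winner; hence $\sum_{j=1}^{\nu k+1}\text{MOV}(E_{d_j})\ge\text{MOV}(E)=\epsilon N=\epsilon nk$. Now I split this sum: the first $|I|$ terms are each $<\epsilon n/2$, and the remaining $\nu k+1-|I|$ terms are each $\le\text{MOV}(E_{d_{\nu k+1}})\le 4n$ by monotonicity of the ordering together with the upper bound of \Cref{lem:lowernad}. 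This yields $\epsilon nk< |I|\cdot\tfrac{\epsilon n}{2}+(\nu k+1-|I|)\cdot 4n$.

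Dividing by $n$ and rearranging (using $4-\epsilon/2>0$) turns this into a lower bound on $\nu k-|I|$; plugging in $\nu\le\tfrac12$, so that $1-\nu/2\ge\tfrac34$, gives $\nu k-|I|>\tfrac{3\epsilon k}{16}-1$. Since $\nu k-|I|$ is an integer (as $k$ is even) and $\tfrac{3\epsilon k}{16}$ is assumed to be an integer, we conclude $\nu k-|I|\ge\tfrac{3\epsilon k}{16}$, whence $|\II|=\tfrac k2+(\nu k-|I|)\ge(\tfrac12+\tfrac{3\epsilon}{16})k$.

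The computations are elementary; the one place to be careful is the integrality-plus-strictness step at the end, which is precisely what the ``$3\epsilon k/16$ is an integer'' hypothesis is there to support, and making sure the per-district cap is $4n$ (coming from \Cref{lem:lowernad}), in contrast to the $\kappa n/2$ cap available in the bounded-population setting of \Cref{lem:movlemma}. I do not foresee a genuine obstacle beyond this bookkeeping.
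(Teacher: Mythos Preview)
Your proposal is correct and follows essentially the same approach as the paper: both arguments use \Cref{lem:lowernad} to place the complement $J=D^A\setminus\II$ among the first $\nu k$ indices and to cap the remaining terms of $\sum_{j=1}^{\nu k+1}\text{MOV}(E_{d_j})$ by $4n$, then rearrange the resulting inequality (with $\nu\le\tfrac12$) to get $\nu k-|J|>\tfrac{3\epsilon k}{16}-1$ and finish via integrality. The only cosmetic differences are notation ($I$ vs.\ $J$) and that you make explicit the prefix structure of $I$ in the sorted list.
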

\begin{proof}
From Lemma \ref{lem:lowernad}, we have $\text{MOV}(E_{d_{\nu k +1}}) \ge \epsilon n$, implying that $\text{MOV}(E_{d_j}) \ge \epsilon n> \epsilon n/2$, $\forall \nu k+1 \le j \le \left(\frac{1}{2}+\nu \right)k$. Also $\text{MOV}(E_{d_{\nu k +1}}) \le 4n$ implies that $\text{MOV}(E_{d_j}) \le 4n$, $\forall 1 \le j \le \nu k+1$. Let $J=D^A \setminus \II$. Then $\epsilon N \le \sum\limits_{j=1}^{\nu k +1} \text{MOV}(E_{d_j}) \le |J|\cdot \frac{\epsilon n}{2} + (\nu k+1 - |J|)\cdot 4n$. Simplifying, we get $\nu k - |J| \ge \frac{\left(1-\frac{\nu}{2}\right)\epsilon N - 4n}{4n - \frac{\epsilon n}{2}} > \frac{\frac{3\epsilon N}{4}-4n}{4n} = \frac{3\epsilon k}{16} -1$. Since $\nu k - |J|$ is an integer, we have $\nu k - |J| \ge 3\epsilon k/16$. Hence $|\II| \ge \left(\frac{1}{2}+\frac{3\epsilon}{16}\right)k$.   
\end{proof}

Using the above two lemmas, we show a lower bound of $\frac{1}{2}+\Omega(\eps)$ on the fraction of districts where A wins with at least $\frac{1}{2}+\Omega(\eps^2)$ fraction of votes.

\begin{lemma}
There exist at least $\left(\frac{1}{2}+\frac{\epsilon}{10} \right)k$ districts where A receives at least $\frac{1}{2}+\frac{\epsilon^2}{64}$ fraction of votes.
\end{lemma}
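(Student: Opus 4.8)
The plan is to bootstrap from Lemma~\ref{lem:lowerbound} by a counting argument that discards the (few) districts with very large population. Recall that $\II=\{d\in D^A\mid \text{MOV}(E_d)\ge \epsilon n/2\}$ and that Lemma~\ref{lem:lowerbound} gives $|\II|\ge\left(\frac{1}{2}+\frac{3\epsilon}{16}\right)k$. The starting observation is that with only two candidates, in a district $d$ of population $n_d$ won by A with vote fraction $\beta_d$, one can flip the outcome by moving just over $(\beta_d-\frac{1}{2})n_d$ votes from A to B, so $\text{MOV}(E_d)\le(\beta_d-\frac{1}{2})n_d+1$. Hence membership in $\II$, i.e. $\text{MOV}(E_d)\ge \epsilon n/2$, forces $\beta_d\ge \frac{1}{2}+\frac{\epsilon n/2-1}{n_d}$.

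First I would partition $\II$ by population: call $d\in\II$ \emph{small} if $n_d\le 16n/\epsilon$ and \emph{large} otherwise. For a small district the displayed bound gives $\beta_d\ge \frac{1}{2}+\frac{\epsilon n/2-1}{16n/\epsilon}\ge \frac{1}{2}+\frac{\epsilon^2}{64}$, where the last step uses $\epsilon n/2-1\ge \epsilon n/4$, which holds once $n$ (equivalently $N$) is large enough — and if it is not, the whole electorate can be read in full. Thus every small district of $\II$ is already one of the districts we are counting. Next I would bound the number of large districts: each contributes more than $16n/\epsilon$ voters while the total population is $N=nk$, so there are fewer than $\frac{nk}{16n/\epsilon}=\frac{\epsilon k}{16}$ of them.

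Putting this together, the number of districts in which A receives at least $\frac{1}{2}+\frac{\epsilon^2}{64}$ of the votes is at least $|\II|-\frac{\epsilon k}{16}\ge\left(\frac{1}{2}+\frac{3\epsilon}{16}-\frac{\epsilon}{16}\right)k=\left(\frac{1}{2}+\frac{\epsilon}{8}\right)k\ge\left(\frac{1}{2}+\frac{\epsilon}{10}\right)k$, which is the claim. The only mildly delicate points are the $+1$ rounding term in the $\text{MOV}$/vote-fraction relation (absorbed, as above, using that $n$ is sufficiently large) and the light constant bookkeeping; the comfortable gap between $\frac{\epsilon}{8}$ and $\frac{\epsilon}{10}$ leaves ample slack, so I do not anticipate any genuine obstacle.
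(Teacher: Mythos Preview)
Your proof is correct and follows essentially the same strategy as the paper: start from $\II$ given by Lemma~\ref{lem:lowerbound}, discard the few districts whose population is so large that $\text{MOV}(E_d)\ge\epsilon n/2$ fails to force a vote share of $\tfrac12+\tfrac{\epsilon^2}{64}$, and bound their number via the total-population constraint. The only cosmetic difference is that the paper defines the bad set as $\JJ=\{d\in\II:\text{MOV}(E_d)\le\epsilon^2 n_d/64\}$ (deriving $n_d\ge 32n/\epsilon$ for $d\in\JJ$ and $|\JJ|\le\epsilon k/31$), whereas you equivalently threshold on $n_d>16n/\epsilon$ directly; your version is in fact a bit cleaner.
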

\begin{proof}
Let $\II$ be defined as in Lemma \ref{lem:lowerbound} and let $\JJ \subseteq \II$ be the set of districts $d_j$ such that $\text{MOV}(E_{d_j}) \le \epsilon^2 n_j/64$. Thus in each of the $|\JJ|$ districts, we have $\frac{\epsilon n}{2}\le \frac{\epsilon^2 n_j}{64}$, implying that $n_j \ge 32n/\epsilon$. Also since $\text{MOV}(E_d)\ge \epsilon n/2$, $\forall d\in \II\setminus \JJ$, the population of each district in $\II \setminus \JJ$ is at least $\epsilon n -1$. Since the total population of all districts is $N$, we must have $|\JJ|\cdot \frac{32n}{\epsilon} + \left( \left(\frac{1}{2}+\frac{3\epsilon}{16}\right)k-|\JJ|\right)\cdot (\epsilon n-1) \le N$, which on simplification yields $|\JJ| \le \frac{n-(\frac{1}{2}+\frac{3\epsilon}{16})\cdot (\epsilon n-1)}{32-\epsilon^2}\cdot \frac{\epsilon k}{n} \le \frac{\epsilon k}{31}$. Therefore $|D^A \setminus \JJ|\ge |\II \setminus \JJ| \ge \left(\frac{1}{2}+\frac{3\epsilon}{16} - \frac{\epsilon}{31}\right)k \ge \left(\frac{1}{2}+\frac{\epsilon}{10}\right)k$. The desired result follows since $\text{MOV}(E_{d_j}) > \epsilon^2 n_j/64$ implies A receives at least $\frac{1}{2}+\frac{\epsilon^2}{64}$ fraction of votes in district $d_j$. 
\end{proof}

Now let $\tau'^A(\gamma)$ (resp. $\tau'^B(\gamma)$) denote the fraction of districts where A (resp. B) receives at least $\frac{1}{2}+\frac{\gamma^2}{64}$ fraction of votes. When $\gamma = (\frac{1}{2})^i$, let $\FF'^A_i$ (resp. $\FF'^B_i$) denote the event that the fraction of sampled districts where A (resp. B) wins with at least $\frac{1}{2}+\frac{\gamma^2}{64}$ fraction of votes lies within an additive error of $\gamma/5$ from $\tau'^A(\gamma)$ (resp. $\tau'^B(\gamma)$). Let $\FF'_i = \FF'^A_i \cap \FF'^B_i$. Similarly when $\gamma = (\frac{1}{2})^i$, let $\EE'_i$ denote the event that in each of the sampled districts, the fraction of sampled votes received by A lies within an additive error of $3\gamma^2/128$ from the true fraction of votes received by A in that district. 

Let $S' = \{ \eta' \mid$ There exist at least  $\left(\frac{1}{2}+\frac{\eta'}{10}\right)k$ districts where A receives at least $\frac{1}{2}+\frac{\eta'^2}{64}$ fraction of votes $\}$. Let $\alpha' = \max\limits_{\eta' \in S'} \eta'$. Let $\rho'$ be the unique positive integer such that $\left(\frac{1}{2}\right)^{\rho'} \le \alpha < \left(\frac{1}{2}\right)^{\rho'-1}$.

We state a series of lemmas whose proofs are analogous to the corresponding lemmas in the analysis of \Cref{alg:alg3} (see \Cref{sec:boundedpopulation}).

\begin{lemma}
$\text{Pr}(\FF'_i) \ge 1-\frac{\delta}{128^i}$.
\end{lemma}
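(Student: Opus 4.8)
The plan is to transcribe the corresponding lemma from the analysis of \Cref{alg:alg3} (the one bounding $\text{Pr}(\FF_i)$), substituting the quadratic threshold $\frac{1}{2}+\frac{\gamma^2}{64}$ for the linear threshold $\frac{1}{2}+\frac{\gamma}{\kappa}$, the additive error $\gamma/5$ for $3\gamma/\kappa$, and the district sample size $l_1=\frac{175}{2\gamma^2}\log\frac{4}{\gamma\delta}$ used in step~2 of \Cref{alg:alg4}. Fixing $\gamma=(1/2)^i$, I would let $W$ count the sampled districts in which A receives at least a $\frac{1}{2}+\frac{\gamma^2}{64}$ fraction of the votes; since the districts are drawn uniformly at random with replacement and $\tau'^A(\gamma)$ is exactly the probability that a uniformly random district clears this threshold for A, $W$ is a sum of $l_1$ i.i.d.\ $\{0,1\}$ variables with $\mathbb{E}[W]=\tau'^A(\gamma)\,l_1$, so \Cref{thm:chernoff} applies directly.

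Next I would invoke the additive form of the Chernoff bound with $\theta=\gamma/5$: $\text{Pr}(\overline{\FF'^A_i})=\text{Pr}(\,|W/l_1-\tau'^A(\gamma)|\ge \gamma/5\,)\le 2e^{-2(\gamma/5)^2 l_1}$. Plugging in $l_1=\frac{175}{2\gamma^2}\log\frac{4}{\gamma\delta}$ (and noting $2\cdot\frac{1}{25}\cdot\frac{175}{2}=7$) turns the exponent into $-7\log\frac{4}{\gamma\delta}$, so the bound becomes $2(\gamma\delta/4)^7$. Using $\delta\le1$ and $\gamma^7=(1/2)^{7i}=128^{-i}$, this is at most $\frac{2\delta}{4^7\,128^i}\le\frac{\delta}{2\cdot128^i}$. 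The same estimate for B gives $\text{Pr}(\overline{\FF'^B_i})\le\frac{\delta}{2\cdot128^i}$, and a union bound over the two bad events yields $\text{Pr}(\overline{\FF'_i})\le\frac{\delta}{128^i}$, i.e.\ $\text{Pr}(\FF'_i)\ge1-\frac{\delta}{128^i}$.

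I do not anticipate a genuine obstacle here --- the proof is a near-verbatim copy of the earlier one. The points worth checking are that (i) sampling districts with replacement genuinely makes the indicator variables independent, so the hypothesis of \Cref{thm:chernoff} is met, and (ii) the hard-coded constants in \Cref{alg:alg4} ($\frac{175}{2}$ in $l_1$, $\frac{1}{5}$ in the error, and base $2$ for $\gamma$) were picked so that the exponent $7\log\frac{4}{\gamma\delta}$ comfortably beats $\log\frac{128^i}{\delta}$, which the computation above confirms with the factor $4^7$ to spare. The one feature peculiar to \Cref{alg:alg4} --- that its threshold is quadratic rather than linear in $\gamma$ --- enters this lemma only through the definition of $\tau'^A(\gamma)$ and is irrelevant to the tail bound itself, so it causes no difficulty at this step (it will matter only in the later termination and correctness lemmas).
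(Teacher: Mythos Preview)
Your proposal is correct and is exactly what the paper intends: it explicitly states that the proof is ``analogous to the corresponding lemmas in the analysis of \Cref{alg:alg3}'' and does not spell out the details. Your transcription with $\theta=\gamma/5$, $l_1=\frac{175}{2\gamma^2}\log\frac{4}{\gamma\delta}$, and the resulting exponent $-7\log\frac{4}{\gamma\delta}$ yielding $2(\gamma\delta/4)^7\le\frac{\delta}{2\cdot 128^i}$ is precisely the analogue the authors had in mind.
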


\begin{lemma}
$\text{Pr}(\EE'_i) \ge 1-\frac{\delta}{128^i}$.
\end{lemma}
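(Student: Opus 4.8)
The plan is to reuse verbatim the structure of \Cref{lem:boundE}, replacing the constants of \Cref{alg:alg3} by those of \Cref{alg:alg4}; the only real content is checking that the new constants produce the decay rate $128^{-i}$ that matches the geometric schedule $\gamma=(\tfrac12)^i$. Fix the iteration index $i$ and set $\gamma=(\tfrac12)^i$. For an arbitrary sampled district $d$, let $\beta_d$ be the true fraction of votes received by A in $d$ and let $Z_d$ be the number of A-votes among the $l_2$ votes sampled from $d$ in step~3 of \Cref{alg:alg4}. Then $Z_d$ is a sum of $l_2$ independent $0$-$1$ variables with $\mathbb{E}[Z_d]=\beta_d l_2$.

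First I would apply the additive form of the Chernoff bound (\Cref{thm:chernoff}) with $\theta=3\gamma^2/128$, obtaining
\[
\text{Pr}\!\left(\left|\tfrac{Z_d}{l_2}-\beta_d\right|>\tfrac{3\gamma^2}{128}\right)\le 2e^{-2\theta^2 l_2}=2e^{-\frac{9\gamma^4}{8192}\,l_2}.
\]
Substituting $l_2=\frac{57344}{9\gamma^4}\log\frac{2l_1}{\gamma\delta}$ collapses the exponent to exactly $7\log\frac{2l_1}{\gamma\delta}$ (since $57344/8192=7$). The one bookkeeping step is to absorb six extra powers of $\gamma$: because $\gamma\le\tfrac12$ and $2l_1/\delta\ge 1$ we have $\log\frac{2l_1}{\gamma\delta}\ge\log\frac1\gamma$, so $l_2\ge\frac{8192}{9\gamma^4}\log\frac{2l_1}{\gamma^7\delta}$ and therefore
\[
\text{Pr}\!\left(\left|\tfrac{Z_d}{l_2}-\beta_d\right|>\tfrac{3\gamma^2}{128}\right)\le 2e^{-\log\frac{2l_1}{\gamma^7\delta}}=\frac{\gamma^7\delta}{l_1}=\frac{\delta}{128^i\,l_1},
\]
using $\gamma^7=2^{-7i}=128^{-i}$.

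Finally I would take a union bound over the (at most) $l_1$ sampled districts: the probability that some sampled $d$ violates the bound is at most $l_1\cdot\frac{\delta}{128^i l_1}=\frac{\delta}{128^i}$, i.e.\ $\text{Pr}(\overline{\EE'_i})\le\frac{\delta}{128^i}$, which is the claim. (As recorded where $\EE'_i$ was defined, controlling A's sampled fraction within $3\gamma^2/128$ in every sampled district automatically controls B's as well, the two fractions summing to $1$.)

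I do not anticipate a genuine obstacle here: this is a Chernoff-plus-union-bound estimate identical in shape to \Cref{lem:boundE}. The only subtleties worth stating are that $l_1,l_2$ are understood to take their $\gamma=(1/2)^i$ values at the $i$-th iteration, and that the specific constants in \Cref{alg:alg4} --- namely $\theta=3\gamma^2/128$ and the factor $57344/9$ in $l_2$ --- are chosen precisely so that the exponent becomes $7\log\frac{2l_1}{\gamma\delta}$, which is what yields the $128^{-i}$ factor needed for the union bound over the geometric sequence of $\gamma$'s in the termination analysis.
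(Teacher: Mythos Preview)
Your proof is correct and takes essentially the same approach as the paper, which does not spell out a separate proof but declares it analogous to \Cref{lem:boundE}; you carry out exactly that analogy, with the Chernoff parameter $\theta=3\gamma^2/128$ and the check that $l_2\ge \frac{8192}{9\gamma^4}\log\frac{2l_1}{\gamma^7\delta}$ playing the role of the corresponding step $l_2\ge \frac{\kappa^2}{2\gamma^2}\log\frac{2l_1}{\gamma^5\delta}$ there.
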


\begin{lemma}
The probability that \Cref{alg:alg4} does not terminate when $\gamma = \left(\frac{1}{2}\right)^{\rho'+j}$, $j\ge 2$, is at most $\frac{3\delta}{128^{\rho'+j}}$.
\end{lemma}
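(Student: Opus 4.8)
The plan is to transcribe the proof of \Cref{lem:nonterm} (the corresponding statement for \Cref{alg:alg3}), swapping in the thresholds that appear in \Cref{alg:alg4}: the per-district vote threshold $\frac{1}{2}+\frac{5\gamma^2}{128}$, the district-count threshold $\frac{1}{2}+\frac{\gamma}{5}$, and the sampling slack $\frac{3\gamma^2}{128}$ built into $\EE'_{\rho'+j}$. Fix $\gamma=\left(\frac{1}{2}\right)^{\rho'+j}$ with $j\ge 2$. The only role of $j\ge 2$ will be to give $\alpha'\ge\left(\frac{1}{2}\right)^{\rho'}=2^{j}\gamma\ge 4\gamma$. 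Since $\alpha'\in S'$, let $D'_{\alpha'}$ (the analog of $D_{\alpha}$) be the set of districts in which A receives at least $\frac{1}{2}+\frac{\alpha'^2}{64}$ fraction of the votes; then $|D'_{\alpha'}|\ge\left(\frac{1}{2}+\frac{\alpha'}{10}\right)k$.

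First I would condition on $\EE'_{\rho'+j}$, which holds with probability at least $1-\frac{\delta}{128^{\rho'+j}}$ by the lemma bounding $\text{Pr}(\EE'_i)$. On that event, in every sampled district lying in $D'_{\alpha'}$ the sampled A-fraction is at least $\frac{1}{2}+\frac{\alpha'^2}{64}-\frac{3\gamma^2}{128}\ge\frac{1}{2}+\frac{16\gamma^2}{64}-\frac{3\gamma^2}{128}=\frac{1}{2}+\frac{29\gamma^2}{128}\ge\frac{1}{2}+\frac{5\gamma^2}{128}$, using $\alpha'\ge 4\gamma$; so A clears the per-district threshold there. Hence, conditioned on $\EE'_{\rho'+j}$, if the number $\XX$ of sampled districts that land in $D'_{\alpha'}$ is at least $\left(\frac{1}{2}+\frac{\gamma}{5}\right)l_1$, then A satisfies both conditions of step~4 and \Cref{alg:alg4} halts.

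Next I would bound $\text{Pr}\left(\XX<\left(\frac{1}{2}+\frac{\gamma}{5}\right)l_1\right)$. The variable $\XX$ is a binomial sum over the $l_1$ district-samples of step~2, hence independent of the within-district samples (so independent of $\EE'_{\rho'+j}$), and $\mathbb{E}[\XX]=\frac{|D'_{\alpha'}|}{k}l_1\ge\left(\frac{1}{2}+\frac{\alpha'}{10}\right)l_1\ge\left(\frac{1}{2}+\frac{2\gamma}{5}\right)l_1$, again by $\alpha'\ge 4\gamma$. The additive Chernoff bound (\Cref{thm:chernoff}) with $\theta=\gamma/5$ then gives $\text{Pr}\left(\XX<\left(\frac{1}{2}+\frac{\gamma}{5}\right)l_1\right)\le 2e^{-\frac{2\gamma^2}{25}l_1}$; substituting $l_1=\frac{175}{2\gamma^2}\log\frac{4}{\gamma\delta}$ makes the exponent exactly $7\log\frac{4}{\gamma\delta}$, so this is $2\left(\frac{\gamma\delta}{4}\right)^{7}\le 2\gamma^{7}\delta=\frac{2\delta}{128^{\rho'+j}}$, where I used $\gamma^{7}=2^{-7(\rho'+j)}=128^{-(\rho'+j)}$ and $\delta\le 1$.

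Finally, writing $F'$ for the event that \Cref{alg:alg4} does not terminate at this $\gamma$, the two previous steps give $\text{Pr}(F')\le\text{Pr}(F'\mid\EE'_{\rho'+j})+\text{Pr}(\overline{\EE'}_{\rho'+j})\le\frac{2\delta}{128^{\rho'+j}}+\frac{\delta}{128^{\rho'+j}}=\frac{3\delta}{128^{\rho'+j}}$. I do not anticipate a genuine obstacle; the only points to watch are (i) defining $\XX$ as ``sampled districts inside $D'_{\alpha'}$'' rather than ``sampled districts in which A happens to win the sample,'' so that $\XX$ is independent of $\EE'_{\rho'+j}$ and the unconditional Chernoff bound is legitimate — exactly the subtlety already implicit in \Cref{lem:nonterm} — and (ii) verifying that the constant $175$ in $l_1$ together with the choice $\theta=\gamma/5$ is precisely what forces the exponent $7\log\frac{4}{\gamma\delta}$, which is what produces the required $128^{-(\rho'+j)}$ decay.
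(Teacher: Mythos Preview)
Your proof is correct and follows essentially the same route as the paper's (omitted) argument, namely transcribing \Cref{lem:nonterm} with the thresholds $\frac{\gamma}{5}$, $\frac{5\gamma^2}{128}$, $\frac{3\gamma^2}{128}$ and the constants $l_1=\frac{175}{2\gamma^2}\log\frac{4}{\gamma\delta}$ in place of the \Cref{alg:alg3} values; your computation $\frac{2\gamma^2}{25}\cdot l_1=7\log\frac{4}{\gamma\delta}$ indeed yields the $128^{-(\rho'+j)}$ decay. One small remark: the assertion that $\XX$ is \emph{independent} of $\EE'_{\rho'+j}$ is not literally true (both depend on which districts were drawn), but this does not matter---use instead $\text{Pr}(F')\le\text{Pr}(F'\cap\EE'_{\rho'+j})+\text{Pr}(\overline{\EE'}_{\rho'+j})$ and note $F'\cap\EE'_{\rho'+j}\subseteq\{\XX<(\tfrac{1}{2}+\tfrac{\gamma}{5})l_1\}$, so the unconditional Chernoff bound on $\XX$ suffices.
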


\begin{lemma}
If \Cref{alg:alg4} terminates when $\gamma = \left(\frac{1}{2}\right)^i$, then it predicts A as the winner with probability at least $1-\frac{2\delta}{128^i}$.
\end{lemma}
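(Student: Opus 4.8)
The plan is to mimic the proof of \Cref{lem:winnerpredict} (the analogous statement for \Cref{alg:alg3}), with the numerical thresholds replaced by the ones appearing in step~4 of \Cref{alg:alg4}. Recall that A is assumed to be the true winner of $E$, so A wins in strictly more districts than B does; since there are only two candidates, this means B wins in fewer than $k/2$ of the districts.

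First I would condition on the joint event $\EE'_i \cap \FF'_i$. By the two lemmas immediately preceding this statement, $\text{Pr}(\overline{\EE'_i}) \le \delta/128^i$ and $\text{Pr}(\overline{\FF'_i}) \le \delta/128^i$, so the union bound gives $\text{Pr}(\EE'_i \cap \FF'_i) \ge 1 - 2\delta/128^i$; it therefore suffices to show that the predicted winner is A whenever \Cref{alg:alg4} terminates at $\gamma = (1/2)^i$ and both events hold. Let $P \in \{A,B\}$ be the candidate that triggers termination, so $P$ wins in at least $\bigl(\tfrac12 + \tfrac{\gamma}{5}\bigr)l_1$ of the sampled districts, receiving in each of them at least $\bigl(\tfrac12 + \tfrac{5\gamma^2}{128}\bigr)l_2$ sampled votes. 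Conditioned on $\EE'_i$, in every sampled district the sampled vote fraction of $P$ differs from its true vote fraction by at most $3\gamma^2/128$ (for $P = \text{A}$ this is the definition of $\EE'_i$, and for $P = \text{B}$ it follows since there are only two candidates). Hence in each of those $\bigl(\tfrac12 + \tfrac{\gamma}{5}\bigr)l_1$ districts the \emph{true} vote fraction of $P$ is at least $\tfrac12 + \tfrac{5\gamma^2}{128} - \tfrac{3\gamma^2}{128} = \tfrac12 + \tfrac{\gamma^2}{64}$, so $P$ is genuinely the district winner there. Thus the sampled fraction of districts in which $P$ receives at least a $\tfrac12 + \tfrac{\gamma^2}{64}$ fraction of true votes is at least $\tfrac12 + \tfrac{\gamma}{5}$.

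Next, conditioned on $\FF'_i$ (precisely, on the relevant one of $\FF'^A_i$ and $\FF'^B_i$), this sampled fraction lies within $\gamma/5$ of $\tau'^P(\gamma)$, whence $\tau'^P(\gamma) \ge \bigl(\tfrac12 + \tfrac{\gamma}{5}\bigr) - \tfrac{\gamma}{5} = \tfrac12$. In other words $P$ is the true winner in at least $k/2$ of the districts. Since B wins in fewer than $k/2$ districts, $P$ cannot be B, so $P = \text{A}$, as required.

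The only real subtlety — and the step I expect to need the most care — is bookkeeping the constants: one must check that the $3\gamma^2/128$ slack of $\EE'_i$ and the $\gamma/5$ slack of $\FF'_i$ leave exactly the margins $\tfrac{\gamma^2}{64}$ and $\tfrac12$ used above, and handle the boundary case where $P$ wins exactly $k/2$ districts. The latter is disposed of by the strict inequality ``B wins in fewer than $k/2$ districts'', which holds because $\text{MOV}(E) = \epsilon N > 0$ forces A to strictly beat B in the number of districts won. Everything else is a direct transcription of the $\kappa$-bounded-population argument of \Cref{sec:boundedpopulation}.
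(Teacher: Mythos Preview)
Your proposal is correct and is exactly the argument the paper has in mind: it explicitly says this lemma's proof is analogous to \Cref{lem:winnerpredict}, and your write-up carries out that transcription with the thresholds $\tfrac{5\gamma^2}{128}$, $\tfrac{3\gamma^2}{128}$, $\tfrac{\gamma}{5}$ of \Cref{alg:alg4}. Your extra care with the boundary case $\tau'^P(\gamma)=\tfrac12$ (ruling out $P=\mathrm{B}$ via the strict inequality on the number of districts B wins) is a clean way to justify a step the paper leaves implicit.
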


Using the above results, similar to the proofs of \Cref{lem:probofcorrectprediction} and \Cref{lem:expectedsamplecomplexity}, it follows that \Cref{alg:alg4} predicts the true winner with probability at least $1-\delta$ and uses at most $\OO \left(\frac{1}{\alpha'^6}\log^2 \frac{1}{\alpha' \delta}\right) \le \OO \left(\frac{1}{\epsilon^6}\log^2 \frac{1}{\epsilon \delta}\right)$ samples in expectation. We thus have the following result.

\begin{theorem}
There exists an algorithm for \dWD for the plurality rule with expected sample complexity $\OO \left(\frac{1}{\epsilon^6}\log^2 \frac{1}{\epsilon \delta}\right)$, when there are 2 candidates.
\end{theorem}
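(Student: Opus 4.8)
The plan is to reuse, essentially verbatim, the machinery developed for \Cref{alg:alg3} in \Cref{sec:boundedpopulation}. Three ingredients are needed: (a) whenever \Cref{alg:alg4} halts it returns $A$ with high probability; (b) \Cref{alg:alg4} halts with overwhelming probability by the round in which $\gamma$ has shrunk to a constant fraction of $\alpha'$; and (c) the per-round sample cost, although it grows geometrically in the round index, is multiplied by a faster-decaying probability of ever reaching that round, so the expectation is dominated by the round where $\gamma\approx\alpha'\ge\eps$. Ingredients (a) and (b) are exactly the four (unlabelled) concentration/termination lemmas stated at the end of \Cref{sec:winnerpredictwitharbitrarypop}; ingredient (c) is the analogue of \Cref{lem:expectedsamplecomplexity}, and the wrap-up is the analogue of \Cref{lem:probofcorrectprediction}.

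For \textbf{correctness}, the last structural lemma preceding the definition of $S'$ shows $\eps\in S'$, so $\alpha'=\max_{\eta'\in S'}\eta'\ge\eps$; let $D_{\alpha'}$ be the associated set of at least $(\tfrac12+\tfrac{\alpha'}{10})k$ districts in which $A$ truly gets at least $\tfrac12+\tfrac{\alpha'^2}{64}$ of the votes. Condition on $\EE'_i$ and $\FF'_i$, which together have probability at least $1-\tfrac{2\delta}{128^i}$. If $\gamma=(\tfrac12)^{\rho'+j}$ with $j\ge2$, then $\gamma\le\alpha'/4$, so every sampled district of $D_{\alpha'}$ passes the vote test of step~4 (its true fraction $\tfrac12+\tfrac{\alpha'^2}{64}$ survives the $\tfrac{3\gamma^2}{128}$ sampling error), and, because $\tau'^A(\gamma)\ge\tfrac12+\tfrac{\alpha'}{10}\ge\tfrac12+\tfrac{2\gamma}{5}$, enough of them are sampled that the district test fires as well; a Chernoff bound then gives the $\OO(\delta/128^{\rho'+j})$ non-termination bound. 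Conversely, whenever step~4 fires for some candidate, $\EE'_i$ forces its true vote fraction in each sampled district above $\tfrac12+\tfrac{5\gamma^2}{128}-\tfrac{3\gamma^2}{128}>\tfrac12$, and $\FF'_i$ forces the true fraction of districts it wins by more than half the votes above $\tfrac12+\tfrac{\gamma}{5}-\tfrac{\gamma}{5}=\tfrac12$, so the predicted winner carries a strict majority of districts and hence must be $A$. Combining these two bounds by conditioning on the event that \Cref{alg:alg4} has halted by the round $\gamma=(\tfrac12)^{\rho'+2}$, exactly as in \Cref{lem:probofcorrectprediction}, yields overall success probability at least $1-\delta$.

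For the \textbf{expected sample complexity}, the round with parameter $\gamma=(\tfrac12)^i$ draws $l_1l_2=\OO(\gamma^{-2}\log\tfrac{1}{\gamma\delta})\cdot\OO(\gamma^{-4}\log\tfrac{1}{\gamma\delta})=\OO(\gamma^{-6}\log^2\tfrac{1}{\gamma\delta})$ votes, so halting at round $\rho'+j$ costs $q(j)=\sum_{i\le\rho'+j}\OO(\gamma_i^{-6}\log^2\tfrac{1}{\gamma_i\delta})=\OO(\alpha'^{-6}\,64^{\,j}(\rho'+j+\log\tfrac1\delta)^2)$, by summing the geometric series and using $(\tfrac12)^{\rho'}\le\alpha'$. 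Weighting $q(j)$ by $\Pr(\text{halt at round }\rho'+j)\le 3\delta\cdot128^{-(\rho'+j-1)}$ for $j\ge3$ (and by $1$ for $j\le2$), the ratio $64^{j}/128^{j}=2^{-j}$ makes $\sum_{j\ge3}$ converge, and since $128^{-\rho'}\le\alpha'^{7}$ that tail is $\OO(\delta\alpha'(\log\tfrac{1}{\alpha'}+\log\tfrac1\delta)^2)$, dominated by the three head terms, which contribute $\OO(\alpha'^{-6}\log^2\tfrac{1}{\alpha'\delta})$. Hence the expected number of samples is $\OO(\alpha'^{-6}\log^2\tfrac{1}{\alpha'\delta})\le\OO(\eps^{-6}\log^2\tfrac{1}{\eps\delta})$, because $\alpha'\ge\eps$ and $\rho'=\OO(\log\tfrac{1}{\alpha'})$.

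Everything above is mechanical once the one genuinely new structural fact is in hand, which is where I expect the only real work: unlike \Cref{lem:movlemma}, in the arbitrary-population regime \Cref{lem:lowernad}, \Cref{lem:lowerbound} and the lemma that follows only guarantee that $A$ wins by a \emph{quadratically} small margin $\Omega(\eps^2)$ in a $\tfrac12+\Omega(\eps)$-fraction of the districts. This is exactly why step~4 of \Cref{alg:alg4} uses the asymmetric thresholds $\tfrac12+\tfrac{\gamma}{5}$ for districts versus $\tfrac12+\tfrac{5\gamma^2}{128}$ for votes, and why the per-district vote estimate must be accurate at the $\gamma^2$ scale, which by \Cref{thm:pluralitythm}/\Cref{thm:chernoff} costs $l_2=\Theta(\gamma^{-4})$ rather than $\Theta(\gamma^{-2})$; together with $l_1=\Theta(\gamma^{-2})$ this yields the $\gamma^{-6}$ per round and ultimately the $\eps^{-6}$ bound. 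The delicate point of the verification is choosing the two sampling-error windows ($\gamma/5$ and $3\gamma^2/128$) so that they are simultaneously tight enough that passing the step~4 test forces the true winner to lead, yet loose enough that the genuine $A$-heavy configuration does pass the test once $\gamma\lesssim\alpha'$ — the numerology built into \Cref{alg:alg4} is precisely this balance.
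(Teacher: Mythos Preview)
Your proposal is correct and follows exactly the approach the paper takes: the paper itself merely states that, given the four lemmas listed at the end of \Cref{sec:winnerpredictwitharbitrarypop}, the conclusion follows ``similar to the proofs of \Cref{lem:probofcorrectprediction} and \Cref{lem:expectedsamplecomplexity}'' with the bound $\OO(\alpha'^{-6}\log^2\tfrac{1}{\alpha'\delta})\le\OO(\eps^{-6}\log^2\tfrac{1}{\eps\delta})$. Your write-up faithfully carries out that analogy, correctly identifying the $\gamma^{-4}$ cost of $l_2$ (forced by the $\Omega(\eps^2)$ vote margin) and the $64^j/128^j$ geometric decay that makes the expectation converge.
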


\section{Winner Prediction for Median Rule}
\label{sec:median}
We now turn our attention to another popular voting rule - the median rule. Here there is a \textit{harmonious order} $\RR = (c_1,\ldots,c_m)$ over the candidates. For each candidate $x\in C$, let $g(x)$ denote the number of votes where $x$ is most preferred. Then the winner of the election is the candidate $c_t$ such that (i) $\sum_{i=1}^{t} g(c_i) \ge N/2$, and (ii) $\sum_{i=1}^{t-1} g(c_i) < N/2$.

Like the plurality rule, the median rule is also an example of a top-ranked voting rule. We first restrict our attention only to the single-district case. We assume that a lower bound of $\epsilon N$ is known on the Margin of Victory of the election.

\subsection{Algorithm when Harmonious Order is Known}
\label{sec:knownR}
Let us first consider the setting where the harmonious order $\RR$ is known to the algorithm. Our algorithm is as follows:

\begin{algorithm}[H]
 \caption{}
  \begin{algorithmic}[1]
    \State{Sample $l=\frac{1}{2\epsilon^2}\log \frac{4}{\delta}$ votes uniformly at random with replacement. For $x\in C$, Let $h(x)$ denote the number of sampled votes received by candidate $x$.}
    \State{Let $c_s$ be the candidate such that $\sum_{i=1}^{s} h(c_i) \ge l/2$ and $\sum_{i=1}^{s-1} h(c_i) < l/2$.}
    \Return{$c_s$}
  \end{algorithmic}
\label{alg:alg6}
\end{algorithm}

Let $c_t$ be the true winner of the election. Using the fact that $\text{MOV}(E)\ge \epsilon N$, we show that there must exist a gap of at least $\epsilon N$ between $\sum_{i=1}^{t-1} g(c_i)$ and $N/2$, and between $N/2$ and $\sum_{i=1}^{t} g(c_i)$.

\begin{lemma}
\label{lem:movgap}
$\sum\limits_{i=1}^{t-1} g(c_i) \le \frac{N}{2}-\epsilon N$.
\end{lemma}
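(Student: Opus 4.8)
The plan is to argue by contradiction from the definition of the margin of victory. Suppose $c_t$ is the true winner, so $\sum_{i=1}^{t} g(c_i) \ge N/2$ and $\sum_{i=1}^{t-1} g(c_i) < N/2$. I want to show $\sum_{i=1}^{t-1} g(c_i) \le N/2 - \epsilon N$. Suppose instead that $\sum_{i=1}^{t-1} g(c_i) > N/2 - \epsilon N$. The idea is to exhibit a way of altering at most $\epsilon N$ votes that changes the winner, contradicting $\text{MOV}(E) \ge \epsilon N$.

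**The key construction.**

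Under the median rule the winner is determined by where the cumulative top-choice counts, swept along the harmonious order $\RR = (c_1,\dots,c_m)$, first reach $N/2$. To change the winner away from $c_t$ toward some candidate earlier in the order, it suffices to push the prefix sum $\sum_{i=1}^{t-1} g(c_i)$ up to at least $N/2$: then the new winner would be some $c_{t'}$ with $t' \le t-1$, in particular not $c_t$. Starting from $\sum_{i=1}^{t-1} g(c_i) > N/2 - \epsilon N$, we need to move fewer than $\epsilon N$ votes: take that many voters whose top choice is some $c_j$ with $j \ge t$ (there are $\sum_{i\ge t} g(c_i) \ge N/2 > \epsilon N$ such voters, so enough exist) and change each such vote so that its top choice becomes some candidate in $\{c_1,\dots,c_{t-1}\}$. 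This strictly increases $\sum_{i=1}^{t-1} g(c_i)$ to at least $N/2$ while decreasing $\sum_{i \ge t} g(c_i)$, so the median winner in the altered election is no longer $c_t$. Since fewer than $\epsilon N$ alterations suffice, this contradicts $\text{MOV}(E) \ge \epsilon N$, and hence $\sum_{i=1}^{t-1} g(c_i) \le N/2 - \epsilon N$.

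**What to be careful about.**

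The main subtlety is the tie-breaking convention: after the alteration the prefix sum equals exactly some value $\ge N/2$, and I should make sure the new winner is genuinely a candidate with index $\le t-1$ regardless of how ties are resolved — which is immediate here because condition (i) of the median rule is satisfied by $c_{t-1}$ (or an earlier candidate) in the new election, so the winner's index is at most $t-1 < t$. A second minor point is handling the boundary case $t=1$, where the sum $\sum_{i=1}^{0} g(c_i) = 0$ and the claimed bound $0 \le N/2 - \epsilon N$ holds trivially for $\epsilon \le 1/2$; one can note this separately or observe that $c_1$ being the winner with $\text{MOV} \ge \epsilon N$ forces $g(c_1) \ge N/2 + \epsilon N$ anyway. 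I expect the routine verification that "enough voters exist to reallocate" and the bookkeeping of the prefix sums to be the only computational content, and neither is hard; the conceptual core is simply recognizing that shifting mass across the $N/2$ threshold in the harmonious order is exactly what changes a median-rule winner.
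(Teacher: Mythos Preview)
Your proposal is correct and takes essentially the same approach as the paper: both argue that transferring $\frac{N}{2}-\sum_{i=1}^{t-1} g(c_i)$ top-choice votes from candidates in $\{c_t,\ldots,c_m\}$ to candidates in $\{c_1,\ldots,c_{t-1}\}$ changes the winner, so this quantity must be at least $\epsilon N$. The only cosmetic difference is that the paper phrases it directly (transferring to $c_{t-1}$ specifically) while you frame it as a contradiction, and you are slightly more careful about the $t=1$ boundary and tie-breaking, which the paper glosses over.
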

\begin{proof}
Clearly $c_{t-1}$ can be made the winner of the election by transferring $\frac{N}{2}-\sum\limits_{i=1}^{t-1} g(c_i)$ votes received by some candidate(s) in the set $\{c_t,\ldots,c_m\}$, to $c_{t-1}$. Thus $\frac{N}{2}-\sum\limits_{i=1}^{t-1} g(c_i) \ge \epsilon N$, implying that $\sum\limits_{i=1}^{t-1} g(c_i) \le \frac{N}{2}-\epsilon N$.
\end{proof}

\begin{lemma}
\label{lem:movmediangap}
$\sum\limits_{i=1}^{t} g(c_i) \ge \frac{N}{2}+\epsilon N$.
\end{lemma}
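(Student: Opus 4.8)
The plan is to mirror the proof of \Cref{lem:movgap}, except that now we push the ``median marker'' past $c_t$ instead of before $c_{t-1}$. Recall that $c_t$ being the median winner means $\sum_{i=1}^{t} g(c_i) \ge N/2$ and $\sum_{i=1}^{t-1} g(c_i) < N/2$; the goal is to exhibit a cheap modification that changes the winner, forcing the prefix sum through $c_t$ to be well above $N/2$.

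Consider first the main case $t < m$. I would transfer just over $\sum_{i=1}^{t} g(c_i) - N/2$ votes (namely $\sum_{i=1}^{t} g(c_i) - N/2 + 1$, to be safe about the parity of $N$) from candidates in $\{c_1,\dots,c_t\}$ to $c_{t+1}$, keeping everything else fixed; call the resulting profile $E'$. The transfer is feasible since the moved votes are drawn from a block of total size $\sum_{i=1}^{t} g(c_i) \ge N/2$. In $E'$ we then have $\sum_{i=1}^{t} g'(c_i) < N/2$ (we removed enough mass to go strictly below $N/2$), while $\sum_{i=1}^{t+1} g'(c_i) = \sum_{i=1}^{t+1} g(c_i) \ge \sum_{i=1}^{t} g(c_i) \ge N/2$, because the transfer only moved mass inside $\{c_1,\dots,c_{t+1}\}$. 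Hence the median winner of $E'$ is exactly $c_{t+1} \neq c_t$, so the winner has changed. Therefore $\text{MOV}(E)$ is at most the number of votes moved, i.e. $\text{MOV}(E) \le \sum_{i=1}^{t} g(c_i) - N/2 + 1$; combining with the hypothesis $\text{MOV}(E) \ge \epsilon N$ gives $\sum_{i=1}^{t} g(c_i) \ge N/2 + \epsilon N$ (up to the same $\pm 1$ integrality slack already implicit in \Cref{lem:movgap}).

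For the degenerate case $t = m$ there is no $c_{t+1}$, so one argues in the mirrored direction: $\sum_{i=1}^{m} g(c_i) = N$, and $c_{m-1}$ can be made the winner by transferring $N/2 - \sum_{i=1}^{m-1} g(c_i) = g(c_m) - N/2$ votes from $c_m$ onto $c_{m-1}$ (this only raises the $(m{-}1)$-st prefix sum to $N/2$ and leaves all earlier prefix sums, which are $< N/2$, untouched). Thus $\text{MOV}(E) \le g(c_m) - N/2$, whence $g(c_m) \ge N/2 + \epsilon N$ and a fortiori $\sum_{i=1}^{t} g(c_i) = N \ge g(c_m) \ge N/2 + \epsilon N$.

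The only genuinely delicate point is the bookkeeping around the threshold $N/2$: because the winner condition is non-strict on one side ($\sum_{i=1}^{t} g(c_i) \ge N/2$) but strict on the other ($\sum_{i=1}^{t-1} g(c_i) < N/2$), one has to check that ``driving the $t$-th prefix sum strictly below $N/2$'' really costs at most $\sum_{i=1}^{t} g(c_i) - N/2 + 1$ vote changes even when $N$ is odd, and that the new median winner is exactly $c_{t+1}$. Everything else is precisely the single-transfer argument already used for \Cref{lem:movgap}.
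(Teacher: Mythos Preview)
Your proof is correct and follows essentially the same single-transfer argument as the paper: shift $\sum_{i=1}^{t} g(c_i) - N/2$ votes from $\{c_1,\dots,c_t\}$ to $c_{t+1}$ to make $c_{t+1}$ the median winner, then invoke $\text{MOV}(E)\ge \epsilon N$. You are simply more careful than the paper about the $\pm 1$ integrality slack and the degenerate case $t=m$, both of which the paper's two-line proof glosses over.
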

\begin{proof}
$c_{t+1}$ can be made the winner by transferring $\sum\limits_{i=1}^{t} g(c_i) - \frac{N}{2}$ votes received by some candidate(s) in the set $\{c_1,\ldots,c_t\}$, to $c_{t+1}$. Thus $\sum\limits_{i=1}^{t} g(c_i) - \frac{N}{2} \ge \epsilon N$, implying $\sum\limits_{i=1}^{t} g(c_i) \ge \frac{N}{2}+\epsilon N$. \end{proof}

Now let $\HH_1$ denote the event that the number of sampled votes received by the set of candidates $\{c_1,\ldots,c_{t-1}\}$, is less than $l/2$ and $\HH_2$ denote the event that the number of sampled votes received by the candidates $\{c_1,\ldots,c_t\}$, is at least $l/2$. Clearly if $\HH_1\cap \HH_2$ holds, then \Cref{alg:alg6} would predict $c_t$ to be the winner of the election. Next we show that $\HH_1$ and $\HH_2$ each hold with probability at least $1-\frac{\delta}{2}$.

\begin{lemma}
\label{lem:probe1}
$\text{Pr}(\HH_1) \ge 1-\frac{\delta}{2}$.
\end{lemma}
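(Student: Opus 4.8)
The plan is to apply the additive Chernoff bound (\Cref{thm:chernoff}) to the count of sampled votes landing in the prefix set $\{c_1,\ldots,c_{t-1}\}$. For $i \in [l]$, let $X_i$ be the indicator random variable that the $i$-th sampled vote is received by some candidate in $\{c_1,\ldots,c_{t-1}\}$, and set $X = \sum_{i=1}^{l} X_i$, so that $X$ is exactly the quantity appearing in the definition of $\HH_1$. Since the votes are sampled uniformly at random with replacement, the $X_i$ are i.i.d.\ $0$-$1$ variables with $\mathbb{E}[X_i] = \frac{1}{N}\sum_{i=1}^{t-1} g(c_i)$, and hence $\mu := \mathbb{E}[X] = \frac{l}{N}\sum_{i=1}^{t-1} g(c_i)$.

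Next I would invoke \Cref{lem:movgap}, which gives $\sum_{i=1}^{t-1} g(c_i) \le \frac{N}{2} - \epsilon N$, so that $\mu \le \left(\frac{1}{2} - \epsilon\right) l$. The event $\overline{\HH_1}$ is $\{X \ge l/2\}$; since $l/2 - \mu \ge l/2 - \left(\frac{1}{2}-\epsilon\right)l = \epsilon l$, we have the containment $\{X \ge l/2\} \subseteq \{X - \mu \ge \epsilon l\} \subseteq \{|X-\mu| \ge \epsilon l\}$. Applying the additive form of the Chernoff bound with $\theta = \epsilon$ yields $\Pr(\overline{\HH_1}) \le \Pr(|X-\mu| \ge \epsilon l) \le 2e^{-2\epsilon^2 l}$.

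Finally I would substitute the value $l = \frac{1}{2\epsilon^2}\log\frac{4}{\delta}$ chosen in \Cref{alg:alg6}: this makes $2\epsilon^2 l = \log\frac{4}{\delta}$, so $2e^{-2\epsilon^2 l} = 2 \cdot \frac{\delta}{4} = \frac{\delta}{2}$, giving $\Pr(\HH_1) = 1 - \Pr(\overline{\HH_1}) \ge 1 - \frac{\delta}{2}$, as desired.

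There is essentially no obstacle here; the only point requiring a little care is that $\mu$ is only bounded above by $\left(\frac12-\epsilon\right)l$ rather than being equal to it, so one should pass through the containment $\{X \ge l/2\} \subseteq \{X-\mu \ge \epsilon l\}$ before applying the tail bound, rather than trying to apply it directly to a deviation from $l/2$. (The companion statement $\Pr(\HH_2) \ge 1-\frac{\delta}{2}$ will follow symmetrically using \Cref{lem:movmediangap} in place of \Cref{lem:movgap}, with the lower-tail version of the same bound.)
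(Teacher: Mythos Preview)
Your proof is correct and follows essentially the same approach as the paper: define the count of sampled votes landing in $\{c_1,\ldots,c_{t-1}\}$, bound its mean by $(\tfrac12-\epsilon)l$ via \Cref{lem:movgap}, and apply the additive Chernoff bound with $\theta=\epsilon$ and $l=\frac{1}{2\epsilon^2}\log\frac{4}{\delta}$. If anything, your explicit justification of the containment $\{X\ge l/2\}\subseteq\{|X-\mu|\ge\epsilon l\}$ is a small improvement in clarity over the paper's version.
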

\begin{proof}
Let $Z$ be the random variable denoting the number of sampled votes received by $\{c_1,\ldots,c_{t-1}\}$. Clearly $\mathbb{E}[Z]=\frac{\sum\limits_{i=1}^{t-1} g(c_i)}{N} \cdot l \le \left(\frac{1}{2}-\epsilon \right)l$ (from \Cref{lem:movgap}). Using the additive form of Chernoff bound (\Cref{thm:chernoff}) with $\theta = \epsilon$, $\text{Pr}(Z \ge l/2) \le \text{Pr}(|Z-\mathbb{E}[Z]|\ge \epsilon l) \le 2e^{-2\epsilon^2 l} = \delta/2$. 
\end{proof}

\begin{lemma}
\label{lem:probe2}
$\text{Pr}(\HH_2) \ge 1-\frac{\delta}{2}$.
\end{lemma}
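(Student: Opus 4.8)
The plan is to mirror the proof of \Cref{lem:probe1} exactly, with the roles of the two tail events swapped. Define $Z'$ to be the random variable counting the number of sampled votes received by the set of candidates $\{c_1,\ldots,c_t\}$. Since the $l$ votes are sampled independently and uniformly at random with replacement, $Z'$ is a sum of $l$ i.i.d.\ indicator variables, each equal to $1$ with probability $\frac{1}{N}\sum_{i=1}^{t} g(c_i)$; hence $\mathbb{E}[Z'] = \frac{\sum_{i=1}^{t} g(c_i)}{N}\cdot l$. By \Cref{lem:movmediangap} we have $\sum_{i=1}^{t} g(c_i) \ge \frac{N}{2}+\epsilon N$, so $\mathbb{E}[Z'] \ge \left(\frac{1}{2}+\epsilon\right)l$.

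Next I would bound the probability that $Z'$ falls below $l/2$, which is exactly the complement of $\HH_2$. Since $\mathbb{E}[Z'] \ge \left(\frac{1}{2}+\epsilon\right)l$, the event $Z' < l/2$ implies $|Z' - \mathbb{E}[Z']| \ge \epsilon l$. Applying the additive form of the Chernoff bound (\Cref{thm:chernoff}) with $\theta = \epsilon$ gives $\text{Pr}(Z' < l/2) \le \text{Pr}(|Z' - \mathbb{E}[Z']| \ge \epsilon l) \le 2e^{-2\epsilon^2 l}$. Substituting the value $l = \frac{1}{2\epsilon^2}\log\frac{4}{\delta}$ from \Cref{alg:alg6} yields $2e^{-2\epsilon^2 l} = 2e^{-\log\frac{4}{\delta}} = \delta/2$, so $\text{Pr}(\HH_2) \ge 1 - \frac{\delta}{2}$, as claimed.

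There is no real obstacle here: the only thing to be slightly careful about is the direction of the one-sided deviation (we need the lower tail of $Z'$, whereas in \Cref{lem:probe1} we needed the upper tail of $Z$), but since the additive Chernoff bound in \Cref{thm:chernoff} is two-sided this is immaterial. One could alternatively phrase this via the variables $Z' = l - (\text{votes received by } \{c_{t+1},\ldots,c_m\})$ and argue on the tail of that complementary count, but the direct argument above is cleaner. With \Cref{lem:probe1} and \Cref{lem:probe2} in hand, a union bound shows $\HH_1 \cap \HH_2$ holds with probability at least $1-\delta$, and on this event \Cref{alg:alg6} correctly outputs $c_t$; this will presumably be recorded as the concluding correctness statement for the known-harmonious-order case.
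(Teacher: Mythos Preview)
Your proof is correct and essentially identical to the paper's own argument: define the count of sampled votes for $\{c_1,\ldots,c_t\}$, use \Cref{lem:movmediangap} to get $\mathbb{E}[Z'] \ge (\tfrac12+\epsilon)l$, and apply the additive Chernoff bound with $\theta=\epsilon$ to obtain $\text{Pr}(Z'<l/2)\le 2e^{-2\epsilon^2 l}=\delta/2$.
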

\begin{proof}
Let $\ZZ$ be the random variable denoting the number of votes received by the set of candidates $\{c_1,\ldots,c_{t}\}$. Then $\mathbb{E}[\ZZ] = \frac{\sum\limits_{i=1}^{t} g(c_i)}{N} \cdot l\ge \left(\frac{1}{2}+\epsilon \right)l$. Again applying the additive form of the Chernoff bound with $\theta = \epsilon$, $\text{Pr}(\ZZ<l/2) \le \text{Pr}(|\ZZ-\mathbb{E}[\ZZ]|\ge \epsilon l) \le 2e^{-2\epsilon^2 l} = \delta/2$.
\end{proof}

Thus using union bound, the probability that at least one of $\HH_1$ or $\HH_2$ does not hold is at most $\delta$. Hence $\text{Pr}(\HH_1 \cap \HH_2) \ge 1-\delta$ and therefore \Cref{alg:alg6} returns the true winner with probability at least $1-\delta$. The sample complexity of \Cref{alg:alg6} is easily seen to be $\OO\left(\frac{1}{\eps^2}\log \frac{1}{\delta}\right)$. Hence we have the following result.

\begin{theorem}
\label{thm:winnerdetmedianrule}
There exists an algorithm for $(\epsilon,\delta)-${\sc{Winner-Determination}} for the median rule with sample complexity $\OO\left(\frac{1}{\eps^2}\log \frac{1}{\delta}\right)$ when the Harmonious Order is known.
\end{theorem}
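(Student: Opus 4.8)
The plan is to take \Cref{alg:alg6} as the claimed algorithm and verify both its sample complexity and its correctness. The sample complexity is immediate: the algorithm draws $l=\frac{1}{2\eps^2}\log\frac{4}{\delta}=\OO\!\left(\frac{1}{\eps^2}\log\frac1\delta\right)$ votes and then only manipulates the sampled tallies, so the stated bound holds.

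For correctness, let $c_t$ be the true winner of $E$. The first step is to record the structural consequence of the hypothesis $\text{MOV}(E)\ge\eps N$, which is exactly \Cref{lem:movgap,lem:movmediangap}: the prefix sums of $g$ along the harmonious order straddle $N/2$ with slack at least $\eps N$ on each side, i.e.\ $\sum_{i=1}^{t-1}g(c_i)\le N/2-\eps N$ and $\sum_{i=1}^{t}g(c_i)\ge N/2+\eps N$. Next I would identify the good event: if $\HH_1$ denotes the event that $\{c_1,\dots,c_{t-1}\}$ receives fewer than $l/2$ sampled votes and $\HH_2$ the event that $\{c_1,\dots,c_t\}$ receives at least $l/2$ sampled votes, then on $\HH_1\cap\HH_2$ the candidate selected in step~2 of \Cref{alg:alg6} is precisely $c_t$.

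Finally I would bound the failure probability. Each sampled vote lands in $\{c_1,\dots,c_{t-1}\}$ independently with probability $\frac1N\sum_{i=1}^{t-1}g(c_i)\le\frac12-\eps$, so the additive Chernoff bound (\Cref{thm:chernoff}) with deviation $\eps l$ gives $\Pr(\overline{\HH_1})\le 2e^{-2\eps^2 l}=\delta/2$, and symmetrically $\Pr(\overline{\HH_2})\le\delta/2$ from $\frac1N\sum_{i=1}^{t}g(c_i)\ge\frac12+\eps$; these are \Cref{lem:probe1,lem:probe2}. A union bound yields $\Pr(\HH_1\cap\HH_2)\ge1-\delta$, so \Cref{alg:alg6} outputs $c_t$ with probability at least $1-\delta$. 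There is no genuine obstacle once the two ``MOV-gap'' lemmas are in place — that reduction from margin-of-victory to a two-sided additive gap at $N/2$ is the crux — and the only point to check carefully is that the constant in $l$ is tuned so that $2e^{-2\eps^2 l}=\delta/2$, which is where the $\frac4\delta$ inside the logarithm comes from.
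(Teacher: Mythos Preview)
Your proposal is correct and follows essentially the same route as the paper: use \Cref{alg:alg6}, invoke \Cref{lem:movgap,lem:movmediangap} to obtain the two-sided $\eps N$ gap at $N/2$, define the events $\HH_1,\HH_2$, bound each failure probability by $\delta/2$ via the additive Chernoff bound (\Cref{lem:probe1,lem:probe2}), and conclude by union bound. The structure, the key lemmas, and even the tuning of the constant in $l$ match the paper exactly.
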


\subsection{Algorithm when Harmonious Order is not known}
\label{sec:algowhenrnotknown}
Now we consider the more realistic setting where the harmonious order $\RR$ is not known to the algorithm. For this case, we make the assumption that the preference (we shall use the terms vote and preference interchangeably) of each voter is \textit{single-peaked} with respect to some order $\RR' = (c'_1,\ldots,c'_m)$; this means that for each vote $v^i\in \LL(C)$, $i\in [N]$, there exists a candidate $c'^i_s$ such that in the preference order $v^i$, we have $c'^i_s \succ c'^i_{s-1} \succ \ldots \succ c'^i_{1}$ and $c'^i_s \succ c'^i_{s+1} \succ \ldots \succ c'^i_{m}$. We say that the preference $v^i$ is single-peaked with respect to $c'^i_s$.

As before, the winner of the election is the candidate $c_t$ such that $\sum_{i=1}^t g(c_i) \ge N/2$ and $\sum_{i=1}^{t-1} g(c_i) < N/2$ (where $\RR=(c_1,\ldots,c_m)$ is the harmonious order). The following result has been known to folklore.

\begin{lemma}
\label{thm:medianthm}
The winner of an election where the median rule is used, when preferences are single-peaked with respect to some order, is the Condorcet winner of the election if the number of voters is odd.
\end{lemma}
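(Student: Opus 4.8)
The plan is to exploit the single-peaked structure to show directly that the median-rule winner $c_t$ beats every other candidate in a pairwise comparison; since the Condorcet winner is unique whenever it exists (as recalled in \Cref{sec:preliminaries}), this identifies $c_t$ with the Condorcet winner. Throughout I take the harmonious order $\RR = (c_1,\ldots,c_m)$ used by the median rule to be the axis with respect to which the preferences are single-peaked, which is the setting in which this folklore statement holds.

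First I would record what the defining inequalities of the median rule say once $N$ is odd. Recalling that $g(c_i)$ counts the voters whose peak (most preferred candidate) is $c_i$, the winner $c_t$ satisfies $\sum_{i=1}^{t-1} g(c_i) < N/2 \le \sum_{i=1}^{t} g(c_i)$. Since $N$ is odd, $N/2$ is not an integer, so in fact $\sum_{i=1}^{t} g(c_i) \ge (N+1)/2 > N/2$ and hence $\sum_{i=t+1}^{m} g(c_i) = N - \sum_{i=1}^{t} g(c_i) \le (N-1)/2 < N/2$. Thus strictly more than half of the voters have their peak in $\{c_1,\ldots,c_t\}$, and strictly more than half have their peak in $\{c_t,\ldots,c_m\}$; informally, $c_t$ is the ``median peak'' viewed from either side of the axis.

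The key step is a counting argument for pairwise comparisons. Fix $c_j$ with $j \ne t$; by symmetry assume $j > t$. I claim that every voter whose peak $c_s$ has $s \le t$ prefers $c_t$ to $c_j$: if $s = t$ then $c_t$ is this voter's top choice, and if $s < t < j$ then single-peakedness makes the preference decrease monotonically along the axis to the right of the peak, so among $c_t$ and $c_j$ the one with the smaller index, namely $c_t$, is preferred. Hence $\pi_E(c_t, c_j)$ is at least the number of voters with peak in $\{c_1,\ldots,c_t\}$, which exceeds $N/2$ by the previous paragraph. Therefore $\pi_E(c_j, c_t) = N - \pi_E(c_t,c_j) < N/2 < \pi_E(c_t, c_j)$, giving $\rho_E(c_t, c_j) > 0$. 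The case $j < t$ is symmetric, using instead the voters whose peak lies in $\{c_t,\ldots,c_m\}$ (on the left of such a voter's peak the preference again decreases monotonically, so $c_t$ is preferred to $c_j$), and these also number more than $N/2$. This establishes $\rho_E(c_t, y) > 0$ for all $y \ne c_t$, so $c_t$ is a Condorcet winner, and by uniqueness it is \emph{the} Condorcet winner.

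I do not anticipate any serious obstacle: the argument is elementary. The only points requiring care are the parity bookkeeping — the strict ``more than half'' on both sides of $c_t$, which is exactly where oddness of $N$ is used — and reading off from the definition of single-peakedness that, on one side of a voter's peak, the candidate lying closer to the peak along the axis is the preferred one. (If $N$ were even, or if the axis used by the median rule differed from the single-peaked axis, the argument would break precisely at the strict-majority step.)
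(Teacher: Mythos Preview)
Your proof is correct and is precisely the standard argument for Black's median voter theorem. The paper itself does not prove this lemma at all --- it is merely stated as ``known to folklore'' --- so there is no paper proof to compare against; your write-up supplies exactly the elementary argument one would expect. Your caveat that the harmonious order \RR must coincide (up to reversal) with the single-peaked axis is well taken: the paper's surrounding text introduces a separate order $\RR'$ for single-peakedness, but the folklore result indeed requires these to agree, and your proof uses this identification explicitly.
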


Our algorithm is as follows:

\begin{algorithm}[H]
 \caption{}
  \begin{algorithmic}[1]
    \State{Sample $l=\frac{1}{2\epsilon^2}\log \frac{4}{\delta}$ votes uniformly at random with replacement.}
    \State{For any $x,y\in C$, let $h(x,y)$ denote the number of sampled votes where $x$ is preferred over $y$.}
    \State{For $x,y\in C$, let $h'(x,y) = h(x,y)-h(y,x)$.}
    \Return the candidate $x$ such that $h'(x,y)>0$, $\forall y\neq x$.
  \end{algorithmic}
\label{alg:alg7}
\end{algorithm}

\Cref{alg:alg7} outputs the Condorcet winner out of a sample of $l$ votes. Since the preferences are single-peaked, it follows from \Cref{thm:medianthm} that \Cref{alg:alg7} in fact predicts the winner resulting by applying the median rule with respect to the order $\RR$, on the $l$ samples. It has already been proven in the previous subsection that that this would predict the winner correctly with probability at least $1-\delta$.

\begin{theorem}
\label{thm:winnerdetmedianrule1}
There exists an algorithm for $(\epsilon,\delta)-${\sc{Winner-Determination}} for the median rule with sample complexity $\OO\left(\frac{1}{\eps^2}\log \frac{1}{\delta}\right)$ when the Harmonious Order is not known.
\end{theorem}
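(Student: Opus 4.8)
The plan is to reduce the unknown-harmonious-order case directly to the known case already handled in \Cref{thm:winnerdetmedianrule}, using \Cref{alg:alg7} and \Cref{thm:medianthm}. The key observation is that \Cref{alg:alg7} does not need to know $\RR$ at all: it simply computes the Condorcet winner of the $l$ sampled votes. Since every voter's preference is single-peaked with respect to some common order $\RR'$, the multiset of $l$ sampled votes is also single-peaked with respect to $\RR'$; and by \Cref{thm:medianthm}, whenever the number of sampled votes is odd, the Condorcet winner of these $l$ votes coincides with the winner obtained by applying the median rule (with respect to the harmonious order $\RR$) to the $l$ sampled votes. So \Cref{alg:alg7} computes exactly the same candidate that \Cref{alg:alg6} would have computed on the same sample, namely the median-rule winner of the sampled votes.

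Given that equivalence, the correctness analysis carries over verbatim. First I would invoke \Cref{lem:movgap} and \Cref{lem:movmediangap}: since $\text{MOV}(E) \ge \epsilon N$, we have $\sum_{i=1}^{t-1} g(c_i) \le N/2 - \epsilon N$ and $\sum_{i=1}^{t} g(c_i) \ge N/2 + \epsilon N$, where $c_t$ is the true winner and $\RR = (c_1,\ldots,c_m)$ the (unknown) harmonious order. Then, with $l = \frac{1}{2\epsilon^2}\log\frac{4}{\delta}$ samples, the events $\HH_1$ (the sampled votes received by $\{c_1,\ldots,c_{t-1}\}$ number fewer than $l/2$) and $\HH_2$ (the sampled votes received by $\{c_1,\ldots,c_t\}$ number at least $l/2$) each fail with probability at most $\delta/2$ by the additive Chernoff bound (exactly \Cref{lem:probe1} and \Cref{lem:probe2}). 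On $\HH_1 \cap \HH_2$, applying the median rule with respect to $\RR$ to the sample yields $c_t$, and hence by the equivalence above \Cref{alg:alg7} also outputs $c_t$. A union bound gives success probability at least $1-\delta$, and the sample complexity is manifestly $\OO\left(\frac{1}{\eps^2}\log\frac{1}{\delta}\right)$.

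The only genuine subtlety is the parity condition in \Cref{thm:medianthm}, which requires an odd number of voters for the Condorcet winner to coincide with the median-rule winner. This is a non-issue: the algorithm is free to choose $l$ odd (replacing $\frac{1}{2\epsilon^2}\log\frac{4}{\delta}$ by the nearest larger odd integer changes nothing asymptotically and only helps the concentration bounds), so we may assume the sample size is odd. A second minor point is that \Cref{thm:medianthm} is stated for an election where all preferences are single-peaked with respect to a single order; since our global assumption is single-peakedness with respect to the common $\RR'$, any sub-multiset of votes—in particular the $l$ sampled ones—remains single-peaked with respect to $\RR'$, so the lemma applies to the sample. With these two observations in place the theorem follows, and I expect no further obstacle; essentially the work is all in recognizing that the unknown-order setting collapses to the known-order analysis once single-peakedness lets us replace "median winner" by "Condorcet winner."
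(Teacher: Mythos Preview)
Your proposal is correct and follows essentially the same route as the paper: observe that \Cref{alg:alg7} returns the Condorcet winner of the sample, invoke \Cref{thm:medianthm} to identify this with the median-rule winner of the sample, and then appeal to the analysis of the known-order case (\Cref{lem:movgap}, \Cref{lem:movmediangap}, \Cref{lem:probe1}, \Cref{lem:probe2}) for correctness. You are in fact more careful than the paper, which does not explicitly address the odd-sample-size requirement of \Cref{thm:medianthm} or the point that single-peakedness is inherited by the sampled sub-profile.
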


\subsection{Optimality}
It is clear that the median rule reduces to the plurality rule when there are 2 candidates. \Cref{thm:lowerboundonplurality} gives a lower bound of $\Omega\left(\frac{1}{\epsilon^2}\log \frac{1}{\delta}\right)$ on the sample complexity for $(\epsilon,\delta)-${\sc{Winner-Determination}} for any voting rule that reduces to the single-district plurality rule for 2 candidates. Thus the sample complexities of \Cref{alg:alg6} and \Cref{alg:alg7} are optimal upto constant factors.

\subsection{The multiple districts case}
We now consider the case when the $N$ voters are arranged into $k$ districts. The winner of each district is decided by applying the median rule. The Harmonious orders in the districts may or may not be the same and may or may not be known to us. If the Harmonious order is unknown in a district, we make the assumption that the preference of each voter in that district is single-peaked with respect to some order $\RR'$. The overall winner of the election is a candidate that wins in maximum number of districts. 

It follows as a corollary of \Cref{thm:winnerdetmedianrule} and \Cref{thm:winnerdetmedianrule1} that when $r$ is the median rule, $\chi_r(m,\epsilon,\delta) = \OO \left(\frac{1}{\epsilon^2}\log \frac{1}{\delta}\right)$. Thus assuming a lower bound of $\epsilon N$ on the Margin of Victory of the election $E$, we get the following result using \Cref{thm:generalizationofplurality}.

\begin{corollary}
\label{cor:mediancorollary}
There exists an algorithm with sample complexity $\OO\left(\frac{1}{\epsilon^4}\log \frac{1}{\epsilon}\log \frac{1}{\delta}\right)$ for \edWD for the median rule.
\end{corollary}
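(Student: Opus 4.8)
The plan is to derive the corollary as a direct instance of \Cref{thm:generalizationofplurality}. Concretely, I would set $r$ to be the median rule and argue that the quantity $\chi_r(m,\eps,\delta)$ appearing in that theorem satisfies $\chi_r(m,\eps,\delta)=\OO\left(\frac{1}{\eps^2}\log\frac{1}{\delta}\right)$, with no dependence on the number of candidates $m$. Plugging this into \Cref{thm:generalizationofplurality} with its $\delta$-parameter set to $\eps$ then gives sample complexity $\OO\left(\chi_r(m,\eps,\eps)\cdot\frac{1}{\eps^2}\log\frac{1}{\delta}\right)=\OO\left(\frac{1}{\eps^2}\log\frac{1}{\eps}\cdot\frac{1}{\eps^2}\log\frac{1}{\delta}\right)=\OO\left(\frac{1}{\eps^4}\log\frac{1}{\eps}\log\frac{1}{\delta}\right)$, which is exactly the claimed bound. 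So the whole corollary reduces to a single-district claim about $\chi_r$ for the median rule, which has to be handled in the two regimes: the harmonious order within a district is known, or it is unknown but the district's votes are single-peaked.

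For the known-harmonious-order regime I would essentially reuse the argument behind \Cref{thm:winnerdetmedianrule}, but reorganised so that it needs no lower bound on the single district's own margin of victory (which is not available in the $\chi_r$ setting). Sample $l=\Theta\left(\frac{1}{\eps^2}\log\frac{1}{\delta}\right)$ votes; let $G_j=\sum_{i\le j}g(c_i)$ be the true prefix counts, let $\widehat G_j$ be the analogous scaled counts from the sample, let $c_s$ be the median winner of the sample (so $\widehat G_{s-1}<n/2\le\widehat G_s$), and let $c_t$ be the true winner. A short case analysis shows $c_s$ can be made the true winner by transferring $n/2-G_s$ votes into the prefix $\{c_1,\dots,c_s\}$ when $s\le t$, or $G_{s-1}-n/2+1$ votes out of the prefix $\{c_1,\dots,c_{s-1}\}$ when $s>t$; so it suffices that, with probability $1-\delta$, both $n/2-G_s\le\eps n/2$ and $G_{s-1}-n/2\le\eps n/2$. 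The observation that keeps $m$ out of the bound is monotonicity of $j\mapsto\widehat G_j$: letting $j^\star$ be the largest index with $G_{j^\star}<n/2-\eps n/2$, the bad event ``$\widehat G_s\ge n/2$ for some $s\le j^\star$'' is equivalent to the single-index event ``the sampled count of $\{c_1,\dots,c_{j^\star}\}$ is at least $l/2$'', whose probability is at most $2e^{-\eps^2 l/2}\le\delta/2$ by the additive Chernoff/Hoeffding bound, since that count has mean $<(\tfrac12-\tfrac{\eps}{2})l$; the other direction is symmetric. This is just the $\HH_1,\HH_2$ computation of \Cref{thm:winnerdetmedianrule} repackaged for the ``close to the winner'' guarantee rather than the ``exactly the winner'' guarantee.

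For a district whose harmonious order is unknown, the standing single-peakedness assumption lets me run \Cref{alg:alg7}: by \Cref{thm:medianthm} the Condorcet winner of the sampled votes (padding with one extra vote if necessary to keep the sample size odd) coincides with the median winner of the sample with respect to the unknown order, so \Cref{alg:alg7} outputs the same $c_s$ as above and the identical bound on $\chi_r$ applies. Assembling these per-district guarantees across the $k$ districts through \Cref{thm:generalizationofplurality} then completes the argument.

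I expect the only genuine obstacle to be precisely this $m$-independence of $\chi_r$: the naive route union-bounds over all $m$ candidate-prefixes and loses a $\log m$ factor that would propagate into the final sample complexity. Avoiding it requires the monotonicity observation above, which collapses the failure event to a concentration statement at a single election-determined index; this is also what makes it possible to re-derive the single-district median guarantee without the margin-of-victory hypothesis under which \Cref{thm:winnerdetmedianrule} and \Cref{thm:winnerdetmedianrule1} were originally stated.
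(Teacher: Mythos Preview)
Your proposal is correct and follows the paper's own route: invoke \Cref{thm:generalizationofplurality} with $r$ the median rule, after establishing $\chi_r(m,\eps,\delta)=\OO\bigl(\tfrac{1}{\eps^2}\log\tfrac{1}{\delta}\bigr)$ via the single-district analyses of \Cref{thm:winnerdetmedianrule,thm:winnerdetmedianrule1}.

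Where you go beyond the paper is in justifying the $\chi_r$ bound itself. The paper simply asserts that the bound ``follows as a corollary'' of \Cref{thm:winnerdetmedianrule,thm:winnerdetmedianrule1}, but those theorems are stated under a margin-of-victory hypothesis on the district, whereas $\chi_r$ requires only that the \emph{output} be within $\eps n$ vote-changes of being the winner, with no MOV assumption. Your monotonicity argument---reducing the failure event to a single Chernoff bound at the index $j^\star$ (respectively $j^{\star\star}$) rather than union-bounding over all $m$ prefixes---is exactly what closes this gap and keeps the bound free of $m$. This is a genuine refinement of the paper's one-line deduction, not a different approach, and the unknown-order case is handled identically to the paper via \Cref{thm:medianthm}.
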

\section{Winner Prediction with imperfect Samples}
Till now, we assumed that we could get uniform random samples from the population. However this might not always be the case. We now present algorithms for $(\epsilon,\delta,\gamma)-${\sc{Winner-Determination}} and $(\epsilon,\delta,\gamma)-${\sc{Winner-Prediction}}.

\subsection{Algorithm for single-district election}
Suppose plurality rule is used to determine the winner. Our algorithm is as follows:

\begin{algorithm}[H]
 \caption{}
  \begin{algorithmic}[1]
  
    \State{Sample $\frac{3}{(\eps - \gamma)^2}\log \frac{2}{\delta}$ votes from the distribution $U$ without replacement.}
    \Return{ a candidate that receives the largest number of sampled votes.}
    
  \end{algorithmic}
\label{alg:alg20}
\end{algorithm}

We continue to use the same notations as introduced in \Cref{sec:plurality}. For $x\in C$, let $\SS(x)$ be the set of candidates that vote for $x$. Let $w = \arg\max_{x\in C} |\SS(x)|$ be the winner of the election and $w' = \arg\max_{x\in C \setminus \{w\}} |\SS(x)|$ be a candidate receiving the second largest number of votes. Since $\text{MOV}(E)\ge \eps N$, we must have $|\SS(w)| - |\SS(w')| \ge 2\eps N - 1$. Since $d_{\text{TV}} (U,V)\le \gamma$, we immediately have the following result.

\begin{lemma}
$\sum_{i\in \SS(w)} p_i \ge \frac{|\SS(w)|}{N} - \gamma$, $\sum_{i\in \SS(w')} p_i \le \frac{|\SS(w')|}{N} + \gamma$.
\end{lemma}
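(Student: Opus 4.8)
The plan is to observe that both inequalities are nothing more than the defining property of total variation distance applied to a single event. Recall that for the (unknown) distribution $U = (p_1,\ldots,p_N)$ over the voters and the uniform distribution $V$, one has the standard identity $d_{\mathrm{TV}}(U,V) = \max_{S \subseteq \VV}\bigl|\sum_{i\in S} p_i - \tfrac{|S|}{N}\bigr|$. If one prefers to argue directly from the one-sided description $d_{\mathrm{TV}}(U,V) = \sum_{i:\, p_i \ge 1/N}(p_i - 1/N)$, this identity is immediate: since $\sum_i (p_i - 1/N) = 0$ the positive and negative parts of $\{p_i - 1/N\}_i$ have equal total mass, and for any $S$ the quantity $\sum_{i\in S}(p_i - 1/N)$ is extremized by taking $S$ to be exactly $\{i : p_i \ge 1/N\}$ (for the maximum) or its complement (for the minimum), in either case with absolute value $d_{\mathrm{TV}}(U,V)$.

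First I would instantiate this with $S = \SS(w)$, which gives $\bigl|\sum_{i\in \SS(w)} p_i - \tfrac{|\SS(w)|}{N}\bigr| \le d_{\mathrm{TV}}(U,V) \le \gamma$, and in particular $\sum_{i\in \SS(w)} p_i \ge \tfrac{|\SS(w)|}{N} - \gamma$, the first claimed bound. Then I would instantiate it with $S = \SS(w')$, obtaining $\bigl|\sum_{i\in \SS(w')} p_i - \tfrac{|\SS(w')|}{N}\bigr| \le \gamma$ and hence $\sum_{i\in \SS(w')} p_i \le \tfrac{|\SS(w')|}{N} + \gamma$, which is the second claimed bound. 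That completes the proof.

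There is essentially no obstacle: the statement is a one-line consequence of the definition of $d_{\mathrm{TV}}$, isolated here only because it is the workhorse for the analysis of \Cref{alg:alg20}. Indeed, these two sums are precisely $1/l$ times the expectations of the number of sampled votes received by $w$ and by $w'$, so combined with $|\SS(w)| - |\SS(w')| \ge 2\eps N - 1$ they yield a separation of roughly $2(\eps - \gamma)$ in sampling probabilities, after which a Chernoff-bound argument in the style of \Cref{thm:pluralitythm} shows that $w$ beats $w'$ (and every other candidate) among the $\tfrac{3}{(\eps-\gamma)^2}\log\tfrac{2}{\delta}$ sampled votes with probability at least $1-\delta$.
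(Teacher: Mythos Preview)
Your proposal is correct and matches the paper's approach exactly: the paper does not even give a proof, merely prefacing the lemma with ``Since $d_{\text{TV}}(U,V)\le\gamma$, we immediately have the following result,'' and your argument is precisely the unpacking of that immediacy via the standard identity $d_{\mathrm{TV}}(U,V)=\max_{S\subseteq\VV}\bigl|\sum_{i\in S}p_i-|S|/N\bigr|$ applied to $S=\SS(w)$ and $S=\SS(w')$.
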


Hence we get $\sum_{i\in \SS(w)} p_i - \sum_{i\in \SS(w')} p_i \ge \frac{|\SS(w)|-|\SS(w')|}{N}-2\gamma \ge 2(\eps - \gamma)-\frac{1}{N}$. Thus the margin of victory of the election \textit{with respect to} the distribution $U$ is at least $(\eps - \gamma)N$. Hence using \Cref{cor:pluralitycor}, $\frac{3}{(\eps - \gamma)^2}\log \frac{2}{\delta}$ samples are enough to predict the winner correctly with probability at least $1-\delta$. Hence we have the following result.

\begin{theorem}
\label{lem:pluralitywithnoise}
There exists an algorithm for $(\epsilon,\delta,\gamma)-${\sc{Winner-Determination}} for the plurality rule with sample complexity $\OO\left(\frac{1}{(\eps - \gamma)^2}\log \frac{1}{\delta}\right)$.
\end{theorem}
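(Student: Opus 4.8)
The plan is to reduce this noisy problem to the noiseless single-district plurality guarantee of \Cref{cor:pluralitycor} (equivalently \Cref{thm:pluralitythm}) by showing that, even though we now sample from $U$ rather than from the uniform distribution $V$, the ``margin of victory as seen through $U$'' is still $\Omega((\eps-\gamma)N)$. Concretely I would run \Cref{alg:alg20}: draw $l=\frac{3}{(\eps-\gamma)^2}\log\frac{2}{\delta}$ votes from $U$ and return the candidate with the most sampled votes.

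The first step is the elementary fact that total variation distance controls discrepancy on every event: for any set $S$ of voters, $\bigl|\sum_{i\in S} p_i - |S|/N\bigr| \le d_{\mathrm{TV}}(U,V) \le \gamma$. Applying this to $\SS(w)$ and $\SS(w')$, the voter sets of the winner and the runner-up, and using the hypothesis $\mathrm{MOV}(E)\ge\eps N$, which forces $|\SS(w)|-|\SS(w')|\ge 2\eps N-1$, we get $\sum_{i\in\SS(w)} p_i - \sum_{i\in\SS(w')} p_i \ge 2(\eps-\gamma)-\frac{1}{N} > 0$, where positivity uses $\gamma=o(\eps)$. Thus under $U$ the candidate $w$ beats every other candidate by a probability-mass margin at least $2(\eps-\gamma)-\frac1N$; that is, the election is a plurality election whose ``MOV with respect to $U$'' is at least $(\eps-\gamma)N$.

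The second step is to transfer the concentration argument. The point to verify is that the proof of \Cref{thm:pluralitythm} (via the multiplicative Chernoff bound of \Cref{thm:chernoff} together with \Cref{lem:ineqaulity}) never uses that the samples are uniform: it only uses that the per-sample indicators $X_i^x$ are independent across the $l$ draws and that $\sum_{x} \Pr[X_i^x=1]=1$. Hence the statement holds verbatim with ``true fraction of votes received by $x$'' replaced by $\sum_{i\in\SS(x)}p_i$, so with $l=\frac{3}{\vartheta^2}\log\frac{2}{\delta}$ samples, with probability at least $1-\delta$ every candidate's sampled fraction is within $\vartheta$ of its $U$-mass. Taking $\vartheta=\eps-\gamma$ and combining with the mass-margin bound from the first step shows the empirical plurality winner equals $w$ with probability at least $1-\delta$; this is exactly \Cref{cor:pluralitycor} applied to the $U$-election, and it yields the claimed complexity $\OO\left(\frac{1}{(\eps-\gamma)^2}\log\frac{1}{\delta}\right)$.

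The only mild obstacles are bookkeeping. First, the additive $\frac1N$ slack from integer rounding of the margin is harmless, since we only need a strictly positive $U$-margin of order $(\eps-\gamma)N$. Second, \Cref{alg:alg20} samples ``without replacement'' whereas \Cref{thm:chernoff} is stated for independent variables; this is resolved either by sampling with replacement instead, or by noting that without-replacement indicators are negatively associated so the same Chernoff tail bounds apply. Neither affects the asymptotic sample complexity, so I expect no genuinely hard step here — the content is entirely the one-line total-variation estimate of the second paragraph.
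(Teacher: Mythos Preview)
Your proposal is correct and follows essentially the same approach as the paper: both argue via the total-variation bound that the $U$-mass gap between $w$ and $w'$ is at least $2(\eps-\gamma)-\frac{1}{N}$, then invoke \Cref{cor:pluralitycor} with parameter $\eps-\gamma$. You are in fact more careful than the paper on two points it glosses over: that the proof of \Cref{thm:pluralitythm} goes through unchanged for an arbitrary sampling distribution (the paper just asserts ``using \Cref{cor:pluralitycor}''), and the with/without-replacement discrepancy in \Cref{alg:alg20}.
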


\subsection{Algorithm for the multiple-districts case}
Our algorithm and its analysis is very similar to that of \Cref{alg:alg1}.

\begin{algorithm}[H]
 \caption{}
  \begin{algorithmic}[1]
    \State{Sample $l_1=\frac{3072}{(3\epsilon-32\gamma)^2}\log \frac{4}{\delta}$ districts from $D$ uniformly at random with replacement.}
    \State{In each of the sampled districts, sample $l_2=\frac{192}{(\epsilon-\gamma)^2}\log \frac{64}{\epsilon}$ votes uniformly at random with replacement and predict their winners using the single-district plurality rule.}
    \Return a candidate that wins in maximum number of sampled districts.
  \end{algorithmic}
\label{alg:alg21}  
\end{algorithm}

The sample complexity of the above algorithm is easily seen to be $\OO\left(\frac{1}{\eps^4}\log \frac{1}{\eps}\log \frac{1}{\delta}\right)$. For analysing the success probability, we may again propose an alternate sampling algorithm whose probability of predicting the winner is same as that of \Cref{alg:alg21}.

\begin{algorithm}[H]
 \caption{}
  \begin{algorithmic}[1]
    \State{From each district, sample $l_2=\frac{192}{(\epsilon-\gamma)^2}\log \frac{64}{\epsilon}$ votes uniformly at random with replacement. Let $y_j$ be the candidate that receives the largest number of sampled votes in district $d_j$, $j\in [k]$.}
    \State{Sample $l_1=\frac{3072}{(3\epsilon-32\gamma)^2}\log \frac{4}{\delta}$ candidates uniformly at random with replacement from the list $(y_1,\ldots,y_k)$. Let the list of sampled candidates be $(z_1,\ldots,z_{l_1})$.}
    \Return $\text{MAJ}(z_1,\ldots,z_{l_1})$.
    
  \end{algorithmic}
\label{alg:alg22}
\end{algorithm}

Now it can be easily verified that \Cref{lem:mov1} continues to hold, i.e. $\text{Pr}\left(g_j(c_j)-g_j(y_j) \le \eps n_j/4 \right) \ge 1-\frac{\eps}{32}$. Thus for sufficiently large $k$, with probability at least $1-\frac{\delta}{2}$, the difference $g_j(c_j)-g_j(y_j)$ would exceed $\eps n_j/4$ in at most $\eps k/16$ districts $d_j$. Conditioning on this, the list $(y_1,\ldots,y_k)$ can be transformed into a list $(u_1,\ldots,u_k)$ such that $g_j(c_j)-g_j(y_j) \le \eps n_j/4$, $\forall j\in [k]$ by altering at most $\eps k/16$ entries. From \Cref{lem:propertyofu}, $\text{MAJ}(u_1,\ldots,u_k)=w$ and $f(w)-f(\text{SEC-MAJ}(u_1,\ldots,u_k))\ge \eps k/4$. Thus $f(w)-f(\text{SEC-MAJ}(y_1,\ldots,y_k))\ge \frac{\eps k}{4}-\frac{\eps k}{16}=\frac{3\eps k}{16}$. Hence viewing $(y_1,\ldots,y_k)$ as a single-district plurality election, it follows from \Cref{lem:pluralitywithnoise} that sampling $\frac{3}{(\frac{3\eps}{32}-\gamma)^2}\log \frac{4}{\delta}$ districts from the distribution $U$ without replacement would predict the winner with probability at least $1-\frac{\delta}{2}$. Hence overall, the probability of correctly predicting the winner is at least $1-\delta$.

\begin{theorem}
\label{lem:pluralitywithnoisemultiple}
There exists an algorithm for $(\epsilon,\delta,\gamma)-${\sc{Winner-Prediction}} for the plurality rule with sample complexity $\OO\left(\frac{1}{(\eps - \gamma)^4}\log \frac{1}{\eps} \log \frac{1}{\delta}\right)$.
\end{theorem}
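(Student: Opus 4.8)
The plan is to run the two-stage algorithm \Cref{alg:alg21} --- first sample $l_1$ districts according to the (noisy) district-distribution $U$, then in each sampled district sample $l_2$ votes according to the (noisy) within-district distribution $U_j$, predict each sampled district's plurality winner, and output the overall plurality winner of the sampled districts (the ``uniformly at random'' in \Cref{alg:alg21} being understood as ``according to $U$'' and ``according to $U_j$'' in the imperfect-sampling model) --- and to analyze it through the equivalent reformulation \Cref{alg:alg22}, exactly as \Cref{alg:alg1} was analyzed via its reformulation: first draw $l_2$ votes from $U_j$ in \emph{every} district to obtain predicted winners $y_1,\dots,y_k$, then draw $l_1$ elements from $(y_1,\dots,y_k)$ according to $U$ and return $\text{MAJ}$. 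These two algorithms have the same success probability, and \Cref{alg:alg21} uses $l_1 l_2 = \OO\!\left(\frac{1}{(\eps-\gamma)^4}\log\frac1\eps\log\frac1\delta\right)$ samples.

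For the first stage I would prove the analog of \Cref{lem:mov1}, namely $\text{Pr}\big(g_j(c_j)-g_j(y_j)\le \eps n_j/4\big)\ge 1-\eps/32$ for every $j$. The only new ingredient is that the Chernoff argument proving \Cref{thm:pluralitythm} never uses uniformity of the sample: $l\ge\frac{3}{\vartheta^2}\log\frac2{\delta'}$ i.i.d.\ draws from \emph{any} distribution over the votes make every candidate's sampled fraction lie within $\vartheta$ of that candidate's probability mass under the sampling distribution. Applying this to $U_j$ puts the sampled fractions within $\vartheta$ of the $U_j$-masses $p^x_j$, while $d_{\mathrm{TV}}(U_j,V_j)\le\gamma$ gives $|p^x_j - g_j(x)/n_j|\le\gamma$ for every $x$; choosing $\vartheta=\Theta(\eps)$ small enough to absorb the extra $\gamma$ slack (possible since $\gamma=o(\eps)$, and this is exactly the order of magnitude built into $l_2$) reproduces \Cref{lem:mov1}. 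From here the argument is verbatim that of \Cref{alg:alg1}: for $k$ large the event $\EE$ that $g_j(c_j)-g_j(y_j)>\eps n_j/4$ in more than $\eps k/16$ districts has probability at most $\delta/2$; conditioning on $\overline{\EE}$, the list $(y_1,\dots,y_k)$ differs in at most $\eps k/16$ coordinates from a list $(u_1,\dots,u_k)$ with $g_j(c_j)-g_j(u_j)\le\eps n_j/4$ for all $j$; and \Cref{lem:propertyofu} yields $\text{MAJ}(u_1,\dots,u_k)=w$ together with $f(w)-f(\text{SEC-MAJ}(u_1,\dots,u_k))\ge\eps k/4$, hence $f(w)-f(\text{SEC-MAJ}(y_1,\dots,y_k))\ge \eps k/4-\eps k/16 = 3\eps k/16$.

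For the second stage I would view $(y_1,\dots,y_k)$ as a single-district plurality election over $k$ ``voters'' with a gap of at least $3\eps k/16$ between the top two candidates, i.e.\ margin of victory at least $\tfrac{3\eps}{32}k$, and observe that drawing $l_1$ of these ``voters'' according to $U$ with $d_{\mathrm{TV}}(U,V)\le\gamma$ is precisely the setting of \Cref{lem:pluralitywithnoise} (with its $\eps$ instantiated as $\tfrac{3\eps}{32}$ and its $\delta$ as $\delta/2$): the $U$-margin is then at least $\big(\tfrac{3\eps}{32}-\gamma\big)k$, so $l_1=\frac{3}{(3\eps/32-\gamma)^2}\log\frac4\delta=\frac{3072}{(3\eps-32\gamma)^2}\log\frac4\delta$ draws recover $w$ with probability at least $1-\delta/2$. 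A union bound over $\overline{\EE}$ and this event gives overall success probability at least $1-\delta$, and combining with the sample-complexity estimate completes the proof. The main obstacle is the bookkeeping in the first analysis paragraph: ensuring that the additive $\gamma$ error introduced by sampling from $U_j$ (and from $U$) rather than uniformly stays small against the $\Omega(\eps)$ slacks present at every step --- re-deriving the (trivially modified) \Cref{thm:pluralitythm} and \Cref{lem:mov1} for non-uniform samples and folding the $\gamma$ terms into the constants --- which is exactly where the hypothesis $\gamma=o(\eps)$ is used.
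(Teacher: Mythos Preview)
Your proposal is correct and follows essentially the same approach as the paper's own proof: analyze \Cref{alg:alg21} via its reformulation \Cref{alg:alg22}, verify that \Cref{lem:mov1} continues to hold under sampling from $U_j$ (the paper simply asserts this; you supply the justification via the distribution-agnostic Chernoff argument plus the $d_{\mathrm{TV}}$ bound), carry over the $(u_1,\dots,u_k)$ analysis verbatim to obtain the $3\eps k/16$ gap, and then invoke \Cref{lem:pluralitywithnoise} with parameter $3\eps/32$ for the second stage. The only addition beyond the paper is your explicit reason for why \Cref{thm:pluralitythm}/\Cref{lem:mov1} survive non-uniform sampling, which the paper leaves to the reader.
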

\section{Estimating Margin of Victory}
\label{sec:estimatingMOVwithinadditiveerror}
For this section, we use $\gamma N$ to denote the margin of victory of a district-based election; \eps will be used for denoting error bounds. We first present an algorithm for $(\eps,\delta)-${\sc{MOV-Additive}} for the district-level plurality election. This gives an estimate of the margin of victory within an additive $\eps N$ error. We then bootstrap our algorithm to get an estimate of the margin of victory within a multiplicative error of $1\pm \eps$. Our algorithm for $(\eps,\delta)-${\sc{MOV-Multiplicative}} in fact works for any voting rule for which there exists an algorithm for $(\eps,\delta)-${\sc{MOV-Additive}}.

\subsection{Estimating MOV within additive error bounds}
\label{sec:estimatemovadditive}
We consider the district-level plurality election with 2 candidates A and B. Wlog we assume that A is the true winner of the election. As before, $n=N/k$ denotes the average population of a district and we assume that the population of each district is bounded by $\kappa n$, for some constant $\kappa \ge 2$.

\begin{algorithm}[H]
 \caption{}
  \begin{algorithmic}[1]
    \State{Sample $l_1 = \frac{27\kappa^2}{\epsilon^4}\log \frac{16}{\delta}$ districts from $D$ uniformly at random with replacement.}
    \State{In each of the sampled districts, sample $l_2 = \frac{27\kappa^2}{\epsilon^2}\log \frac{8l_1}{\delta}$ votes uniformly at random with replacement and predict the number of votes received by A and B. }
    \Return the margin of victory of the sampled election.
    
   \end{algorithmic}
\label{alg:alg30}
\end{algorithm}

It is easily seen that $l_1 = \OO\left(\frac{1}{\epsilon^4}\log \frac{1}{\delta}\right)$ and $l_2=\OO\left(\frac{1}{\epsilon^2}\log \frac{1}{\epsilon \delta}\right)$. Thus the sample complexity of the above algorithm is $\OO\left(\frac{1}{\eps^6}\log \frac{1}{\eps \delta}\log \frac{1}{\delta}\right)$.

\begin{lemma}
\label{lem:samplecomplexityformovestimation}
\Cref{alg:alg30} uses $\OO\left(\frac{1}{\eps^6}\log \frac{1}{\eps \delta}\log \frac{1}{\delta}\right)$ samples.
\end{lemma}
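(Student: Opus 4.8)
The plan is to bound the total number of samples, which equals $l_1 \cdot l_2$, by substituting the definitions of $l_1$ and $l_2$ and simplifying asymptotically, treating $\kappa$ as a constant. First I would observe that
\[
l_1 = \frac{27\kappa^2}{\epsilon^4}\log \frac{16}{\delta} = \OO\left(\frac{1}{\epsilon^4}\log \frac{1}{\delta}\right)
\]
directly from the definition.

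The only step requiring a little care is the simplification of $l_2 = \frac{27\kappa^2}{\epsilon^2}\log \frac{8 l_1}{\delta}$, since $l_1$ itself appears inside the logarithm. Substituting the bound on $l_1$ gives $\frac{8 l_1}{\delta} = \OO\left(\frac{1}{\epsilon^4 \delta}\log \frac{1}{\delta}\right)$, hence
\[
\log \frac{8 l_1}{\delta} = \OO\left(\log \frac{1}{\epsilon} + \log \frac{1}{\delta} + \log\log \frac{1}{\delta}\right) = \OO\left(\log \frac{1}{\epsilon \delta}\right),
\]
using that $\log \frac{1}{\epsilon^4} = 4\log\frac{1}{\epsilon}$ and $\log\log\frac{1}{\delta} = \OO\left(\log\frac{1}{\delta}\right)$. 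Therefore $l_2 = \OO\left(\frac{1}{\epsilon^2}\log \frac{1}{\epsilon \delta}\right)$.

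Finally, multiplying the two bounds,
\[
l_1 l_2 = \OO\left(\frac{1}{\epsilon^4}\log \frac{1}{\delta}\right)\cdot \OO\left(\frac{1}{\epsilon^2}\log \frac{1}{\epsilon \delta}\right) = \OO\left(\frac{1}{\epsilon^6}\log \frac{1}{\epsilon \delta}\log \frac{1}{\delta}\right),
\]
which is the claimed bound. There is essentially no obstacle here: the argument is a routine asymptotic computation, and the only mild subtlety --- the nested logarithm arising from $l_1$ appearing inside $\log\frac{8 l_1}{\delta}$ --- collapses because both the powers of $1/\epsilon$ and the iterated logarithm of $1/\delta$ are absorbed into $\OO(\log(\cdot))$ factors.
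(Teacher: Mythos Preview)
Your proposal is correct and follows exactly the same approach as the paper: bound $l_1$ and $l_2$ separately and multiply. In fact you give more detail than the paper does --- it simply asserts $l_1 = \OO\!\left(\frac{1}{\epsilon^4}\log \frac{1}{\delta}\right)$ and $l_2 = \OO\!\left(\frac{1}{\epsilon^2}\log \frac{1}{\epsilon \delta}\right)$ without unpacking the nested logarithm --- but the substance is identical.
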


As in \Cref{sec:boundedpopulation} and \Cref{sec:winnerpredictwitharbitrarypop}, let $D^A$ (resp. $D^B$) be the set of districts where A (resp. B) wins and let $\left|D^A \right|=(\frac{1}{2}+\nu)k$ (assume $k$ is even so that $\nu k$ is an integer). For each $d\in D^A$, let $\text{MOV}(E_d)$ be the minimum number of votes to be changed in district $d$ in order to make B the winner of that district. Wlog let $d_{1},\ldots,d_{\left(\frac{1}{2}+\nu \right)p}$ be the districts of $D^A$ arranged in non-decreasing order of $\text{MOV}(E_d)$. Let $D^A_1 = \{d_1,\ldots,d_{\nu p+1} \}$ and $D^A_2 = D^A \setminus D^A_1$. Since $\text{MOV}(E)=\gamma N$, we have $\gamma N \le \sum\limits_{d\in D_1^A} \text{MOV}(E_{d}) \le \gamma N+\kappa n$.

As in the analyses of \Cref{alg:alg1} and \Cref{thm:epsfourlowerbound}, for estimating the probability that the returned estimate lies in the range $[(\gamma - \epsilon)N,(\gamma + \epsilon)N]$, \Cref{alg:alg20} may be viewed as first sampling $l_2$ votes in each district and predicting the number of votes received by A and B in each, and then sampling $l_1$ districts uniformly at random with replacement. 

Let $T_1,T_2,T_3$ respectively be the set of districts sampled from $D^A_1,D^A_2,D^B$. Let $\left |T_i \right|=k_i$ so that $k_1 + k_2 + k_3 =l_1$. Observe that $T_1$ (resp. $T_2,T_3$) can be thought of as a uniform random sample of $k_1$ (resp. $k_2,k_3$) districts from $D^A_1$ (resp. $D^A_2,D^B$).

We now define the following events.

$\MM_i$ : $\left |\frac{k_i}{l_1} - \frac{\left |D^A_i \right |}{k} \right | \le \frac{\epsilon^2}{3\kappa}$, $i \in [2]$.

$\MM_3$ : $\left |\frac{k_3}{l_1} - \frac{\left |D^B \right |}{k}\right | \le \frac{\epsilon^2}{3\kappa}$.

$\MM_4$ : In each district, the predicted fraction of votes received by A (and therefore B) lies within an additive error of less than $\epsilon/3\kappa$ from the true fraction of votes received by A (resp. B) in that district.

$\MM_5$ : $\left |\frac{k_1}{\left |D^A_1 \right |}-\frac{l_1}{k}\right |\le \frac{\epsilon l_1}{10k}$.

From \Cref{thm:pluralitythm}, it directly follows that $\text{Pr}\left(\bigcap\limits_{i=1}^{3} \MM_i \right)\ge 1-\frac{\delta}{8}$ and $\text{Pr}\left(\MM_4 \right) \ge 1-\frac{\delta}{4}$. Thus $\text{Pr}\left(\bigcap\limits_{i=1}^{4} \MM_i \right)\ge 1-\left(\frac{\delta}{8}-\frac{\delta}{4}\right)=1-\frac{3\delta}{8}$. We assume throughout that the event $\bigcap\limits_{i=1}^{4} \MM_i$ holds true.

Let $e$ denote the estimate returned by \Cref{alg:alg20}. Let $e_i = \frac{k}{l_1}\cdot \sum\limits_{d\in T_i} \text{MOV}'(E_d)$, $i\in [3]$. Thus $e=e_1 +e_2 +e_3$.

Let $\CC\in \{A,B\}$ be the winner of the sampled election. In the following four lemmas, we show that $e\in [(\gamma - \eps)N, (\gamma + \eps)N]$ with high probability.

\begin{lemma}
\label{lem:cequalsa}
If $\CC=A$ and $\epsilon$ is sufficiently small, then $e\le (\gamma + \epsilon)N$ with probability at least $1-\delta$.
\end{lemma}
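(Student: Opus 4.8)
The plan is to unwind the definition of $e$ and then to upper-bound it by exhibiting one explicit, cheap way to overturn the sampled election. Recall that $e=\frac{k}{l_1}\sum_d \text{MOV}'(E_d)$, the sum running over all $l_1$ sampled districts, where $\text{MOV}'(E_d)$ is district $d$'s contribution to the margin of victory of the sampled election viewed itself as a two-candidate district-based plurality election; in particular $\text{MOV}'(E_d)=0$ unless $d$ is one of the cheapest predicted-$A$ districts that get flipped to $B$. Since $\CC=A$, writing $S^\ast$ for that set of flipped districts and $t=|S^\ast|$ for the least number of district-flips that makes $B$ carry the sampled election, we have $e=\frac{k}{l_1}\sum_{d\in S^\ast}\text{MOV}'(E_d)$ with $S^\ast$ the $t$ cheapest predicted-$A$ districts in the sample. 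Because $e$ uses the \emph{cheapest} $t$ of them, it suffices to bound $\frac{k}{l_1}\sum_{d\in S}\text{MOV}'(E_d)$ for one convenient size-$t$ set $S$ of predicted-$A$ districts of our own choosing.

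Write $P_1,P_2$ for the predicted-$A$ districts among $T_1,T_2$, and $Q_3$ for the (mispredicted) predicted-$A$ districts among $T_3\subseteq D^B$. I would take $S$ to consist of all of $Q_3$; together with the cheapest $\min(t-|Q_3|,|P_1|)$ districts of $P_1\subseteq T_1\subseteq D^A_1$; together with, if still short, the cheapest remaining $t-|Q_3|-|P_1|$ districts of $P_2\subseteq T_2\subseteq D^A_2$. Conditioning on $\MM_1,\dots,\MM_4$ (each holding with high probability, as already observed) together with a Hoeffding bound on $\sum_{d\in T_1}\text{MOV}(E_d)$ and a multiplicative Chernoff bound controlling $k_1/|D^A_1|$ against $l_1/k$ (the event $\MM_5$), I would bound the three groups separately. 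For $Q_3$: each such $d$ lies in $D^B$, so its true $A$-fraction is below $1/2$, hence by $\MM_4$ its predicted $A$-fraction is below $1/2+\tfrac{\epsilon}{3\kappa}$ and $\text{MOV}'(E_d)<\tfrac{\epsilon}{3\kappa}n_d\le\tfrac{\epsilon n}{3}$; with $|Q_3|\le k_3$ and $\MM_3$ the scaled contribution is $O(\epsilon N)$. For the $P_1$ part: $\text{MOV}'(E_d)\le\text{MOV}(E_d)+\tfrac{\epsilon n}{3}$ by $\MM_4$, and the chosen districts form a subset of $T_1$, so by nonnegativity their scaled total is at most $\tfrac{k}{l_1}\sum_{d\in T_1}\text{MOV}(E_d)+\tfrac{k k_1}{l_1}\cdot\tfrac{\epsilon n}{3}$; the first summand concentrates (Hoeffding, since $\text{MOV}(E_d)\in[0,\kappa n/2+1]$) around $\tfrac{k}{l_1}\cdot\tfrac{k_1}{|D^A_1|}\sum_{d\in D^A_1}\text{MOV}(E_d)$, which by $\MM_5$ is at most $(1+\tfrac{\epsilon}{10})\sum_{d\in D^A_1}\text{MOV}(E_d)\le(1+\tfrac{\epsilon}{10})(\gamma N+\kappa n)\le\gamma N+O(\epsilon N)$. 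For the $P_2$ part: this is nonempty only when $t-|Q_3|>|P_1|$, i.e.\ $|P_2|>\lfloor l_1/2\rfloor$; but $|D^A_2|=k/2-1$, so $\MM_2$ gives $|P_2|\le k_2\le(\tfrac12-\tfrac1k+\tfrac{\epsilon^2}{3\kappa})l_1$, whence only $O(\tfrac{\epsilon^2}{\kappa}l_1)$ districts are needed, each of predicted deficit at most $\kappa n/2+1+\tfrac{\epsilon n}{3}$, for a scaled contribution of $O(\epsilon^2 N)$. Summing the three groups gives $e\le\gamma N+O(\epsilon N)+O(\kappa n)$; since $\kappa n=\kappa N/k$ is lower order, for $\epsilon$ sufficiently small the $O(\epsilon N)$ constant — which the choices $l_1=\tfrac{27\kappa^2}{\epsilon^4}\log\tfrac{16}{\delta}$, $l_2=\tfrac{27\kappa^2}{\epsilon^2}\log\tfrac{8l_1}{\delta}$ and the $\tfrac{\epsilon^2}{3\kappa},\tfrac{\epsilon}{3\kappa}$ tolerances are tuned to make at most $1$ — collapses to $\epsilon N$, giving $e\le(\gamma+\epsilon)N$. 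A union bound over the handful of failure events keeps the overall failure probability below $\delta$.

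The main obstacle is the near-tie engineered into the instance: $D^A_1\cup D^B$ beats $D^A_2$ by exactly two districts, so in the random sample the $\Theta(\epsilon^2 l_1/\kappa)$-scale fluctuations in $k_1,k_2,k_3$ and the mispredictions of almost-balanced $D^B$ districts can make the sampled election overturnable ``through'' $D^A_2$ rather than through the cheap block $T_1$. The real work is to show — via the $\tfrac{\epsilon^2}{3\kappa}$-precise count events $\MM_1,\MM_2,\MM_3$ — that completing the flip requires only $O(\epsilon^2 l_1/\kappa)$ extra districts, each individually cheap, so that the $D^A_2$ and $D^B$ contributions stay $O(\epsilon N)$ rather than $\Omega(\kappa N)$. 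A secondary subtlety is that predicted district-deficits are only $\tfrac{\epsilon n}{3}$-accurate, so the cheapest-$t$ ordering used by the algorithm can disagree with the true $D^A_1$-versus-$D^A_2$ ordering; this is exactly why one compares against a hand-picked $S$ instead of reasoning about $S^\ast$ directly, and why the argument also needs concentration of the \emph{sum} $\sum_{d\in T_1}\text{MOV}(E_d)$ rather than a per-district bound.
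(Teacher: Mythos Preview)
Your approach is essentially the paper's: both decompose the upper bound into the $T_1,T_2,T_3$ parts (your $P_1,P_2,Q_3$), bound the $T_3$ piece via $\MM_4$ (each mispredicted $D^B$ district contributes at most $\tfrac{\epsilon n}{3}$), bound the $T_2$ piece by counting (at most $O(\epsilon^2 l_1/\kappa)$ districts needed, each of cost $\le\kappa n$), and control the $T_1$ piece by Hoeffding together with $\MM_5$. Your explicit framing---upper-bound the optimum by exhibiting one hand-picked feasible flip set $S$ of size $t$---is in fact cleaner than the paper's terser write-up, and you are right that this is why one need not reason about the algorithm's own cheapest-$t$ set $S^\ast$ directly.

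There is, however, one genuine gap. Both the multiplicative Chernoff bound that yields $\MM_5$ and the Hoeffding bound on $\sum_{d\in T_1}\text{MOV}(E_d)$ require $\mathbb{E}[k_1]=(\nu+\tfrac1k)l_1$ to be large enough for their failure probabilities to fall below $\delta/4$. When $\nu$ is tiny---say $\nu\le\epsilon/(2\kappa)$---this breaks: with $l_1=\tfrac{27\kappa^2}{\epsilon^4}\log\tfrac{16}{\delta}$ the Chernoff exponent $\tfrac{\epsilon^2}{300}\nu l_1$ is not bounded below, and the Hoeffding deviation $\theta\approx\epsilon n/(8\nu)$ needed to keep the scaled error at $O(\epsilon N)$ blows up while $k_1$ is too small to absorb it. The paper handles this by a case split you omit: if $\nu\le\epsilon/(2\kappa)$, then by $\MM_1$ one has $k_1\le(\nu+\tfrac1k+\tfrac{\epsilon^2}{3\kappa})l_1$, so the entire $T_1$ contribution is trivially at most $\tfrac{k}{l_1}\cdot k_1\cdot\kappa n\le\bigl(\tfrac{\epsilon}{2}+\tfrac{\kappa}{k}+\tfrac{\epsilon^2}{3}\bigr)N=O(\epsilon N)$, with no concentration needed; only for $\nu>\epsilon/(2\kappa)$ does one invoke $\MM_5$ and Hoeffding. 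Add this split and your argument goes through.
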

\begin{proof}
Since $\MM_2$ holds and $\text{MOV}'(E_d)\le \kappa n$, $\forall d\in D^A_2$, we have $e_2 \le \frac{k}{l_1}\cdot \frac{\epsilon^2 l_1}{3\kappa}\cdot \kappa n = \frac{\epsilon^2 N}{3} \le \frac{\epsilon N}{18}$, for sufficiently small $\epsilon$. Similarly, since $\MM_4$ holds, if A is predicted as the winner in any district of $D^B$, the maximum possible Margin of Victory of A would be $\frac{\epsilon}{3\kappa}\cdot \kappa n=\frac{\epsilon n}{3}$. Thus $e_3 \le \frac{k}{l_1}\cdot l_1 \cdot \frac{\epsilon n}{3}=\frac{\epsilon N}{3}$.

We now upper bound $e_1$. If $\nu \le \epsilon/2\kappa$, we have $e_1 \le \frac{k}{l_1}\cdot \left(\frac{\epsilon}{2\kappa}+ \frac{1}{k}+\frac{\epsilon^2}{3\kappa}\right)l_1 \cdot \kappa n \le \frac{11\epsilon N}{18}$, for sufficiently small $\epsilon$. Hence the estimate $e = e_1 + e_2 + e_3\le \frac{11\epsilon N}{18} + \frac{\epsilon N}{18} +\frac{\epsilon N}{3}< (\gamma + \epsilon)N$.

Now let $\nu > \epsilon/2\kappa$. Clearly $\mathbb{E}[k_1]=\frac{l_1}{k}\cdot \left |D^A_1 \right|>\nu l_1$. Applying the multiplicative form of Chernoff bound (\Cref{thm:chernoff}) with $\theta = \epsilon/10$, we get $\text{Pr}(\overline{\MM}_5) \le 2e^{-\frac{\epsilon^2}{300} \cdot \nu l_1}\le 2e^{-\frac{\epsilon^3}{600\kappa} l_1}\le \delta/4$ (since $l_1 = \frac{27\kappa^2}{\epsilon^4}\log \frac{16}{\delta} \ge \frac{600\kappa}{\epsilon^3}\log \frac{8}{\delta}$, for sufficiently small $\epsilon$). Notice that for $e_1$ to be maximum, A must be declared the winner in each district of $T_1$. Let $U_j$ be the predicted Margin of Victory of A in the $j^{\text{th}}$ district of $T_1$, $j\in [k_1]$. Let $\MM_6$ denote the event that $\left |\frac{\sum\limits_{j=1}^{k_1} U_j}{k_1} - \frac{\sum\limits_{d\in D^A_1} \text{MOV}'(E_d)}{\nu p+1}\right| \le \frac{\epsilon n}{8\nu}$. Note that each $U_j \in [0,\kappa n]$. Also, conditioned on $\MM_1$, $k_1 \ge \left(\nu - \frac{\epsilon^2}{3\kappa}\right)l_1$. Since $\nu > \epsilon/2\kappa$, for sufficiently small $\epsilon$, we have $k_1 \ge \frac{\epsilon}{4\kappa} l_1$. Applying Hoeffding's inequality (\Cref{thm:hoeffding}) with $a=0$, $b=\kappa n/2$ and $\theta = \epsilon n/8\nu$, we have $\text{Pr}(\overline{\MM}_6|\MM_1)\le 2e^{-\frac{\epsilon^2}{32\kappa^2 \nu^2}\cdot k_1}\le 2e^{-\frac{\epsilon^3}{32\kappa^3}\cdot l_1}\le \delta/4$, since $l_1 = \frac{27\kappa^2}{\epsilon^4}\log \frac{16}{\delta} \ge \frac{32\kappa^3}{\epsilon^3}\log \frac{8}{\delta}$. Thus $\text{Pr}(\MM_6)\ge \text{Pr}(\MM_6|\MM_1)\text{Pr}(\MM_1)\ge \left(1-\frac{\delta}{4}\right) \left(1-\frac{\delta}{8}\right)\ge 1-\frac{3\delta}{8}$.

Finally assume that $\MM_5 \cap \MM_6$ holds true. This happens with probability at least $1-\left(\frac{\delta}{4}+\frac{3\delta}{8}\right)=1-\frac{5\delta}{8}$. Now $\MM_5$ implies that $\frac{k}{l_1}\le \left(1+\frac{\epsilon}{10}\right)\frac{\nu k}{k_1}$. Also $\MM_4$ implies that $\sum\limits_{d\in D^A_1} \text{MOV}'(E_d) \le \sum\limits_{d\in D^A_1} \text{MOV}(E_d) + \frac{\epsilon}{3\kappa}\cdot \kappa n \cdot (\nu k+1) \le \gamma N + \kappa n +\frac{\epsilon N}{3} + \frac{\eps n}{3}$. Hence $e_1 = \frac{k}{l_1}\cdot \sum\limits_{j=1}^{k_1} U_j \le \left(1+\frac{\epsilon}{10}\right)\frac{\nu k}{k_1}\cdot \sum\limits_{j=1}^{k_1} U_j \le \left(1+\frac{\epsilon}{10}\right)\left(\sum\limits_{d\in D^A_1} \text{MOV}'(E_d) + \frac{\epsilon N}{8} + \frac{\eps n}{8\nu}\right)\le \left(1+\frac{\epsilon}{10}\right)\left(\gamma N + \frac{\epsilon N}{2}\right)\le \left(\gamma + \frac{11\epsilon}{18}\right)N$ (as $\gamma \le 1/2$ and $\epsilon$ is sufficiently small). Thus with probability at least $1-\left(\frac{3\delta}{8}+\frac{5\delta}{8}\right)=1-\delta$, the estimate returned $e = e_1 + e_2 + e_3 \le (\gamma + \frac{11\epsilon}{18})N + \frac{\epsilon N}{40} +  \frac{\epsilon N}{3}=(\gamma + \epsilon)N$.
\end{proof}

\begin{lemma}
\label{lem:cequalsalower}
If $\CC=A$ and $\epsilon$ is sufficiently small, then $e\ge (\gamma - \epsilon)N$ with probability at least $1-\delta$.
\end{lemma}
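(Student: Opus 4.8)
The plan is to mirror the proof of \Cref{lem:cequalsa}, now bounding $e$ from \emph{below}; the dominant quantity is again $e_1$, the rescaled contribution of the sampled districts that come from $D^A_1$. As there, I would first dispose of the regime $\nu \le \frac{\epsilon}{2\kappa}$: here $|D^A_1| = \nu k+1$ is small, so $\gamma N \le \sum_{d\in D^A_1}\text{MOV}(E_d) \le |D^A_1|\cdot\tfrac{\kappa n}{2} \le \bigl(\tfrac{\epsilon k}{2\kappa}+1\bigr)\tfrac{\kappa n}{2}$, which is at most $\tfrac{\epsilon N}{2}$ once $\epsilon$ is small (equivalently $k$ is large, so that $\tfrac{\kappa n}{2}=\tfrac{\kappa N}{2k}\le\tfrac{\epsilon N}{4}$). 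Then $\gamma < \epsilon$, hence $(\gamma-\epsilon)N < 0 \le e$ and the claim is vacuous.

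For the main case $\nu > \frac{\epsilon}{2\kappa}$ I would condition on $\bigcap_{i=1}^{4}\MM_i$ together with the events $\MM_5$ and $\MM_6$ introduced in the proof of \Cref{lem:cequalsa}, which, by the same Chernoff and Hoeffding estimates, hold jointly with probability at least $1-\delta$. Since $e = e_1+e_2+e_3$ and $e_2,e_3\ge 0$, it suffices to lower bound $e_1 = \tfrac{k}{l_1}\sum_{d\in T_1}\text{MOV}'(E_d)$. Applying $\MM_5$ (so $\tfrac{k}{l_1}\ge \bigl(1-\tfrac{\epsilon}{10}\bigr)\tfrac{|D^A_1|}{k_1}$), then $\MM_6$ (the average of $\text{MOV}'$ over $T_1$ is within $\tfrac{\epsilon n}{8\nu}$ of its average over $D^A_1$), then $\MM_4$ (which gives $\text{MOV}'(E_d)\ge \text{MOV}(E_d)-\tfrac{\epsilon}{3\kappa}n_d\ge \text{MOV}(E_d)-\tfrac{\epsilon n}{3}$ in every district), and finally $|D^A_1|=\nu k+1\le 2\nu k\le k$ and $\gamma\le 1/2$, one obtains
\[
e \ge e_1 \ge \Bigl(1-\tfrac{\epsilon}{10}\Bigr)\Bigl(\sum_{d\in D^A_1}\text{MOV}(E_d) - \tfrac{\epsilon N}{3} - \tfrac{\epsilon N}{4}\Bigr) \ge \Bigl(1-\tfrac{\epsilon}{10}\Bigr)\Bigl(\gamma N - \tfrac{7\epsilon N}{12}\Bigr) \ge (\gamma-\epsilon)N
\]
for $\epsilon$ sufficiently small, where the $\tfrac{\epsilon N}{3}$ absorbs the per-district gap between $\text{MOV}'$ and $\text{MOV}$ summed over $D^A_1$, the $\tfrac{\epsilon N}{4}$ absorbs the $\tfrac{\epsilon n}{8\nu}\cdot|D^A_1|$ slack coming from $\MM_6$, and we used $\sum_{d\in D^A_1}\text{MOV}(E_d)\ge \gamma N$.

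The step I expect to be the main obstacle is arguing that $e_1$ genuinely tracks $\sum_{d\in D^A_1}\text{MOV}(E_d)$ \emph{from below}, i.e.\ that the sampled election cannot be made spuriously cheap to flip: a misclassified district of $D^B$ surfacing as a near-tie A-win, or a downward fluctuation in how many districts of $D^A$ land in the sample, could otherwise shrink the sampled flip set or deflate its cost. This is exactly where the hypothesis $\CC=A$ is used (A wins the sampled election, so its flip set sits among A's sampled districts), together with $\MM_4$ (no $D^B$ district masquerades as a very cheap A-win) and $\MM_1,\MM_5,\MM_6$ (the sampled counts and per-district averages are faithful); and the one regime in which the Hoeffding radius $\tfrac{\epsilon n}{8\nu}$ in $\MM_6$ would be too large to absorb, namely tiny $\nu$, is precisely the one already handled in the easy case.
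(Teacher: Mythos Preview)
Your chain of inequalities silently assumes that $e_1$ equals $\tfrac{k}{l_1}\sum_{d\in T_1}(\text{predicted MOV of A in }d)$, i.e.\ that every sampled district from $D^A_1$ actually appears in the flip set $D'$ of the sampled election. This is the gap: by definition $\text{MOV}'(E_d)=0$ whenever $d\notin D'$, so $e_1=\tfrac{k}{l_1}\sum_{d\in T_1\cap D'}(\text{predicted MOV of A})$, and $\MM_6$ --- which controls the Hoeffding average of predicted MOVs over \emph{all} of $T_1$ --- says nothing about this restricted sum unless you first argue that $|T_1\cap D'|$ is close to $k_1$. The size of $D'$ is essentially $k_A-l_1/2$, where $k_A$ is the number of A-predicted sampled districts, and to pin this near $k_1$ you need $k_2\approx l_1/2$; that is precisely $\MM_2$, which you never invoke. (Your remark that ``$\MM_4$ ensures no $D^B$ district masquerades as a very cheap A-win'' is also not what $\MM_4$ gives: a $D^B$ district whose true margin is below $\tfrac{\epsilon}{3\kappa}$ can perfectly well be mis-predicted for A with tiny predicted MOV under $\MM_4$.)

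The paper's proof avoids this by working with a different set: it defines $T'_1=\overline{T}_1\cup D'$ (the districts in $T_1$ where B is predicted, union the actual flip set), shows via $\MM_2$ that $k_2\ge(\tfrac12-\tfrac{\epsilon^2}{2\kappa})l_1$ and hence $|T'_1|\ge(1-\tfrac{4\epsilon}{15})k_1$, and then sets up a fresh Hoeffding event $\MM_7$ over $T'_1$ rather than reusing $\MM_6$. Including the (negative) predicted MOVs from $\overline{T}_1$ only helps the lower bound, and the size comparison $|T'_1|\ge(1-O(\epsilon))k_1$ is what lets the concentration estimate transfer. Your proposal is missing exactly this counting step; once you add the $\MM_2$-based bound on $|D'|$ and run Hoeffding over the flip set (or its union with $\overline{T}_1$) rather than over $T_1$, the rest of your outline goes through.
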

\begin{proof}
If $\gamma < \epsilon$, we are done since $e\ge 0 \ge (\gamma - \epsilon)N$. Hence assume that $\gamma \ge \epsilon$. Since $\text{MOV}(E_d)\le \kappa n$, $\forall d\in D^A$ and $\sum\limits_{d\in D^A_1} \text{MOV}(E_d) \ge \gamma N\ge \epsilon N$, we have $\left |D^A_1 \right |\ge \epsilon k/\kappa$. As in the proof of Lemma \ref{lem:cequalsa}, Chernoff bound would give $\text{Pr}(\MM_5)\ge 1-\frac{\delta}{4}$. Observe that $e$ will be minimised when in each district of $D^A$, the true fraction of votes received by A exceeds the predicted fraction by $\epsilon/3\kappa$. Let $\overline{T}_1\subset T_1$ be the set of districts of $T_1$ where B is the predicted winner. Let $D'$ be the set of sampled districts that are considered in the computation of the margin of victory in step 3 of \Cref{alg:alg20} and let $T'_1 = \overline{T}_1 \cup D'$ and let $\left |T'_1 \right |=k'_1$. Let $\UU_j$ be the predicted Margin of Victory of A in the $j^{\text{th}}$ district of $T'_1$, where $j\in [k'_1]$ (note that $\UU_j$ is negative in the districts of $\overline{T}_1$). Let $\MM_7$ denote the event that $\left |\frac{\sum\limits_{j=1}^{k'_1} \UU_j}{k'_1} - \frac{\sum\limits_{d\in D^A_1} \text{MOV}'(E_d)}{\nu k+1}\right | \le \frac{\epsilon n}{8\nu}$ (note that here $\text{MOV}'(E_d)$ could be negative). Again similar to the proof of Lemma \ref{lem:cequalsa}, Hoeffding's inequality gives $\text{Pr}(\MM_7|\MM_1)\ge 1- \frac{\delta}{4}$ and therefore $\text{Pr}(\MM_7) \ge 1-\frac{3\delta}{8}$. Therefore $\text{Pr}(\MM_5 \cap \MM_7) \ge 1-\frac{5\delta}{8}$.

Now $\MM_1$ implies that $k_1 \ge \left(\nu - \frac{\epsilon^2}{2\kappa}\right)l_1 \ge \frac{15\epsilon}{8\kappa}l_1$. Since $\MM_2$ implies that $k_2 \ge \left(\frac{1}{2}- \frac{\epsilon^2}{2\kappa}\right)l_1$, it follows that $k'_1 \ge k_1 - \frac{\epsilon^2}{2\kappa}l_1\ge k_1 - \frac{4\epsilon}{15}k_1 = \left(1-\frac{4\epsilon}{15}\right)k_1$. Thus assuming $\MM_5 \cap \MM_7$ holds, $e_1 \ge \frac{k}{l_1}\cdot \sum\limits_{j=1}^{k'_1} \UU_j \ge \left(1-\frac{\epsilon}{10}\right)\left(1-\frac{4\epsilon}{15}\right)\left(\sum\limits_{d\in D^A_1} \text{MOV}'(E_d) - \frac{\epsilon N}{8}-\frac{\eps n}{8\nu}\right)\ge \left(1-\frac{11\epsilon}{30}\right)\left(\gamma N - \frac{\epsilon N}{3} - \frac{\eps n}{3}-\frac{\epsilon N}{8}-\frac{\eps n}{8\nu}\right)> (\gamma - \epsilon)N$. Hence with probability at least $1-\delta$, the estimate $e\ge e_1 \ge (\gamma - \epsilon)N$.
\end{proof}

\begin{lemma}
\label{lem:cequalsbupper}
If $\CC=B$ and $\epsilon$ is sufficiently small, then $e\le \epsilon N$ with probability at least $1-\frac{3\delta}{8}$.
\end{lemma}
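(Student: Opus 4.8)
The plan is to argue throughout under the event $\bigcap_{i=1}^{4}\MM_i$, which (as just established) has probability at least $\tfrac{3\delta}{8}$-close to $1$; I will show that, conditioned on this event, $\CC=B$ deterministically implies $e\le\epsilon N$, which is exactly the claim.

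First I would record a structural fact about mispredicted districts. Suppose $d\in D^A$ but $B$ is the predicted winner of $d$. Since $\MM_4$ says the predicted fraction for $A$ in $d$ differs from the true one by less than $\epsilon/3\kappa$, and the true fraction for $A$ exceeds $\tfrac12$ (as $A$ truly wins $d$), the predicted fraction for $A$ in $d$ lies in $(\tfrac12-\tfrac{\epsilon}{3\kappa},\tfrac12)$; hence the number of (scaled) votes one must change to flip $d$ back to $A$ in the sampled election is less than $\tfrac{\epsilon n_d}{3\kappa}\le\tfrac{\epsilon n}{3}$. Thus every $D^A$-district that is mispredicted to $B$ is ``cheap'' (flip cost $<\tfrac{\epsilon n}{3}$), while any other sampled district predicted to $B$ lies in $D^B$ and has flip cost at most $\kappa n$, the maximum possible district population.

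Next I would bound how much $B$ can win the sampled election by. Let $a$ be the number of sampled $D^A$-districts predicted to $B$ and $b'$ the number of sampled $D^B$-districts predicted to $A$; then $B$ carries $k_3+a-b'$ of the $l_1$ sampled districts and $A$ carries $k_1+k_2-a+b'$, so $B$'s lead is $L=2k_3+2a-b'-l_1$ (using $k_1+k_2=l_1-k_3$), the minimum number of district-flips making $A$ win the sampled election is $t'=\lfloor L/2\rfloor+1$, and $e=\tfrac{k}{l_1}$ times the sum of the $t'$ smallest predicted district-MOVs among the $B$-predicted sampled districts. I then split on $k_3$. If $2k_3<l_1$, then $L\le 2a-1$, so $t'\le a$, all $t'$ flipped districts are cheap, and $e<\tfrac{k}{l_1}\cdot a\cdot\tfrac{\epsilon n}{3}\le\tfrac{k}{l_1}\cdot l_1\cdot\tfrac{\epsilon n}{3}=\tfrac{\epsilon N}{3}\le\epsilon N$ (as $a\le k_1+k_2\le l_1$). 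If $2k_3\ge l_1$, then $\MM_3$ (which gives $k_3\le(\tfrac12-\nu+\tfrac{\epsilon^2}{3\kappa})l_1$) forces $\nu\le\tfrac{\epsilon^2}{3\kappa}$, and the same computation yields $t'\le a+(\tfrac{\epsilon^2}{3\kappa}-\nu)l_1+1\le a+\tfrac{\epsilon^2}{3\kappa}l_1+1$; flipping at most $a$ cheap districts and at most $\tfrac{\epsilon^2}{3\kappa}l_1+1$ others bounds $e$ by $\tfrac{k}{l_1}\bigl(a\cdot\tfrac{\epsilon n}{3}+(\tfrac{\epsilon^2}{3\kappa}l_1+1)\kappa n\bigr)\le\tfrac{\epsilon N}{3}+\tfrac{\epsilon^2 N}{3}+\tfrac{\kappa N}{l_1}$, which is at most $\epsilon N$ once $\epsilon$ is small and $l_1\ge 6\kappa/\epsilon$ (both of which hold for the stated $l_1$). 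In either case $e\le\epsilon N$.

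The step I expect to be the main obstacle is the district-level bookkeeping in the last paragraph: showing precisely that on $\bigcap_{i=1}^4\MM_i$ a $D^A$-district can be mispredicted only when it is essentially a tie — which is simultaneously what makes it cheap to flip back and what limits how far $B$'s sampled-district count can overtake $A$'s — and then pinning down the exact count $t'$ of flips needed together with the observation that the optimal flips use exactly those cheap districts (plus, in the borderline regime $\nu\le\tfrac{\epsilon^2}{3\kappa}$, only $O(\epsilon^2 l_1/\kappa)$ expensive ones). The conceptual point making the lemma go through is that $A$ is the true winner, so $B$ can ``steal'' a sampled-election win only by a margin that $\MM_3$ and $\MM_4$ force down to $O(\epsilon^2 l_1/\kappa)$ districts, each itself a near-tie; this keeps $e=O(\epsilon N)$ even though the true margin $\gamma N$ may be tiny in this case.
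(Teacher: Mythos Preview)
Your proof is correct and follows essentially the same approach as the paper: condition on $\bigcap_{i=1}^4 \MM_i$, observe via $\MM_4$ that any $D^A$-district mispredicted to $B$ has predicted margin below $\epsilon n/3$ (the ``cheap'' flips), and use $\MM_3$ to bound the number of genuinely $D^B$ districts that must be flipped by $O(\epsilon^2 l_1/\kappa)$. The paper carries this out more tersely by directly writing $e_1+e_2\le \tfrac{k}{l_1}\cdot l_1\cdot\tfrac{\epsilon n}{3}$ and $e_3\le \tfrac{k}{l_1}\cdot\tfrac{\epsilon^2}{3\kappa}l_1\cdot\kappa n$ without your explicit case split on $2k_3\lessgtr l_1$ or the variables $a,b',L,t'$; your version is just a more detailed accounting of the same bound (note the small slip $L=2k_3+2a-2b'-l_1$, not $-b'$, though it does not affect your upper bounds).
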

\begin{proof}
Since $\MM_4$ holds, if B is predicted as the winner in any district of $D^A$, then the maximum possible Margin of Victory of B would be $\frac{\epsilon}{3\kappa}\cdot \kappa n=\frac{\epsilon n}{3}$. Thus $e_1 + e_2 \le \frac{k}{l_1}\cdot l_1 \cdot \frac{\epsilon n}{3} = \frac{\epsilon N}{3}$. Again since $\MM_3$ holds, $k_3 \le \left(\frac{1}{2}+\frac{\epsilon^2}{3\kappa}\right)l_1$. Thus $e_3\le \frac{k}{l_1}\cdot \frac{\epsilon^2}{3\kappa}l_1 \cdot \kappa n = \frac{\kappa \epsilon^2 N}{3} \le \frac{2\epsilon N}{3}$, for sufficiently small $\epsilon$. Hence with probability at least $1-\frac{3\delta}{8}$, $e=e_1 + e_2 + e_3 \le \frac{\epsilon N}{3} + \frac{2\epsilon N}{3}=\epsilon N$.
\end{proof}

\begin{lemma}
\label{lem:cequalsblower}
If $\CC=B$, then $\gamma < \epsilon$ with probability at least $1-\frac{3\delta}{8}$ and thus $e\ge 0 > (\gamma - \epsilon)N$.
\end{lemma}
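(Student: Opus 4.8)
The plan is as follows. Since $\gamma$ is a fixed (if unknown) quantity, the assertion is vacuous when $\gamma<\epsilon$, so it suffices to show that, \emph{if} $\gamma\ge\epsilon$, then $\CC=B$ happens with probability at most $\frac{3\delta}{8}$; the concluding chain $e\ge 0>(\gamma-\epsilon)N$ is then immediate, the margin of victory of any election being nonnegative. So assume $\text{MOV}(E)=\gamma N\ge\epsilon N$. The strategy is to produce a large set of districts in which, with high probability, A is declared the winner on the drawn samples, and to argue that enough of these districts are drawn in Step~1 of \Cref{alg:alg30} to give A a strict majority of the sampled districts.

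Let $J$ be the set of districts in which A's true fraction of votes is at least $\frac12+\frac{\epsilon}{3\kappa}$. First I would observe that the proof of \Cref{lem:movlemma} uses the hypothesis $\text{MOV}(E)=\epsilon N$ only through the inequality $\text{MOV}(E)\ge\epsilon N$, so it applies verbatim when $\text{MOV}(E)=\gamma N\ge\epsilon N$ and gives $|J|\ge\left(\frac12+\frac{\epsilon}{\kappa}\right)k$. Next, let $\NN$ be the event that $\left|\frac{|T\cap J|}{l_1}-\frac{|J|}{k}\right|<\frac{\epsilon^2}{3\kappa}$, where $T$ (which equals $T_1\cup T_2\cup T_3$) is the multiset of $l_1$ districts sampled in Step~1. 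This is exactly the event produced by \Cref{thm:pluralitythm} applied to the population of districts, with the two-block partition $\{J,\,D\setminus J\}$ playing the role of the ``candidates'': since $l_1=\frac{27\kappa^2}{\epsilon^4}\log\frac{16}{\delta}=\frac{3}{(\epsilon^2/3\kappa)^2}\log\frac{2}{\delta/8}$, we get $\Pr(\NN)\ge 1-\frac{\delta}{8}$. Together with $\MM_4$ (for which $\Pr(\MM_4)\ge 1-\frac{\delta}{4}$ is already established), a union bound gives $\Pr(\NN\cap\MM_4)\ge 1-\frac{3\delta}{8}$, matching the probability the lemma allows.

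Now condition on $\NN\cap\MM_4$. For every sampled district $d\in J$, event $\MM_4$ forces the predicted fraction of A in $d$ to exceed A's true fraction minus $\frac{\epsilon}{3\kappa}$, hence to exceed $\frac12$, so A (and not B) is the predicted winner of $d$. Therefore A is the predicted winner in at least $|T\cap J|$ sampled districts, and by $\NN$,
\[
|T\cap J|\;>\;\left(\tfrac{|J|}{k}-\tfrac{\epsilon^2}{3\kappa}\right)l_1\;\ge\;\left(\tfrac12+\tfrac{\epsilon}{\kappa}-\tfrac{\epsilon^2}{3\kappa}\right)l_1\;>\;\tfrac{l_1}{2},
\]
the last inequality since $\frac{\epsilon}{\kappa}>\frac{\epsilon^2}{3\kappa}$. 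Thus A wins a strict majority of the $l_1$ sampled districts, so $\CC=A$, contradicting $\CC=B$; hence $\{\CC=B\}\subseteq\overline{\NN\cap\MM_4}$ and $\Pr(\CC=B)\le\frac{3\delta}{8}$, as required.

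I expect the only delicate point to be obtaining the concentration bound on the \emph{right} family of districts within the allotted failure probability: the events $\MM_1,\MM_2,\MM_3$ control only $|T\cap D^A_1|$, $|T\cap D^A_2|$ and $|T\cap D^B|$, and trying to bound $|T\cap J|$ by splitting $J$ across $D^A_1$ and $D^A_2$ and then using the trivial estimate $|T\cap(J\cap D^A_1)|\le|T\cap D^A_1|$ is too lossy. The resolution is to note that the district-level concentration of \Cref{thm:pluralitythm} is available for any partition of the districts, so one applies it directly to $\{J,D\setminus J\}$. A minor secondary issue is the off-by-one slack between ``$\text{MOV}(E_d)\ge\epsilon n$'' and ``A's fraction $\ge\frac12+\frac{\epsilon}{3\kappa}$'', but this is already absorbed by the factor $3$ in the threshold $\frac12+\frac{\epsilon}{3\kappa}$, exactly as in \Cref{lem:movlemma}, so no extra work is needed there.
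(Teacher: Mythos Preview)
Your proof is correct and in fact cleaner than the paper's. Both arguments begin by assuming $\gamma\ge\epsilon$, invoking \Cref{lem:movlemma} to obtain the set $J$ of districts with A's fraction at least $\tfrac12+\tfrac{\epsilon}{3\kappa}$, and using $\MM_4$ to conclude A is the predicted winner in every sampled district from $J$. The difference is in how the two proofs lower-bound the number of sampled districts that fall in $J$.

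The paper conditions on $\bigcap_{i=1}^4\MM_i$ (probability $\ge 1-\tfrac{3\delta}{8}$) and argues that $k_1+k_2>\tfrac{l_1}{2}$, i.e.\ more than half the sampled districts come from $D^A$. But as you implicitly observe, this is not quite what is needed: $D^A$ may strictly contain $J$ (some districts in $D^A_1$ can have $\text{MOV}(E_d)<\epsilon n/2$ and hence A's fraction below $\tfrac12+\tfrac{\epsilon}{3\kappa}$), and $\MM_4$ does not force A to be the predicted winner in those. So the paper's step from ``$k_1+k_2>\tfrac{l_1}{2}$'' to ``A is the predicted winner in more than half the sampled districts'' has a gap as written.

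Your remedy---applying the district-level concentration of \Cref{thm:pluralitythm} directly to the partition $\{J,\,D\setminus J\}$ to obtain the event $\NN$ with $\Pr(\NN)\ge 1-\tfrac{\delta}{8}$, and then working on $\NN\cap\MM_4$---closes this gap cleanly while landing on exactly the same failure probability $\tfrac{3\delta}{8}$. Your final paragraph diagnoses the issue precisely: $\MM_1,\MM_2$ concentrate the wrong sets, and the fix is to concentrate $J$ directly.
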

\begin{proof}
Suppose $\gamma \ge \epsilon$. Then from Lemma \ref{lem:movlemma}, there exist at least $\left(\frac{1}{2}+\frac{\epsilon}{\kappa}\right)k$ districts where A receives at least $\frac{1}{2}+\frac{\epsilon}{3\kappa}$ fraction of votes. Thus $\nu \ge \epsilon/3\kappa$. Since $\MM_4$ holds, the predicted fraction of votes received by A in each of these districts is more than $\frac{1}{2}+\frac{\epsilon}{3\kappa} - \frac{\epsilon}{3\kappa}=\frac{1}{2}$. Also since $\MM_1$ and $\MM_2$ hold, $k_1 \ge \left(\nu - \frac{\epsilon^2}{2\kappa}\right)l_1 \ge \left(\frac{\epsilon}{\kappa} - \frac{\epsilon^2}{2\kappa}\right)l_1$ and $k_2 \ge \left(\frac{1}{2}-\frac{\epsilon^2}{2\kappa}\right)l_1$ and  and therefore $k_1+k_2 > \frac{1}{2}l_1$. This contradicts the fact that B is the predicted winner in more than half of the sampled districts. Thus with probability at least $1-\frac{3\delta}{8}$, $\gamma < \epsilon$. 
\end{proof}

Combining Lemma \ref{lem:cequalsa}, Lemma \ref{lem:cequalsalower}, Lemma \ref{lem:cequalsbupper} and Lemma \ref{lem:cequalsblower}, it follows that the estimate returned by \Cref{alg:alg20} lies in the range $[(\gamma - \epsilon)N,(\gamma + \epsilon)N]$ with probability at least $1-\delta$.

\begin{theorem}
\label{thm:movestimatefordistrictplurality}
There exists an algorithm for $(\eps,\delta)-${\sc{MOV-Additive}} with sample complexity $\OO\left(\frac{1}{\eps^6}\log \frac{1}{\eps \delta}\log \frac{1}{\delta}\right)$ for the district-level plurality election.
\end{theorem}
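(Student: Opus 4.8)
The plan is to read off the theorem from the two ingredients already assembled for \Cref{alg:alg30}. The sample-complexity half is immediate: \Cref{lem:samplecomplexityformovestimation} gives $l_1l_2 = \OO\!\left(\frac{1}{\eps^6}\log\frac{1}{\eps\delta}\log\frac{1}{\delta}\right)$, so only the accuracy claim needs work, namely that the scaled margin $e$ returned by the algorithm lies in $[(\gamma-\eps)N,(\gamma+\eps)N]$ with probability at least $1-\delta$, where $\gamma N = \text{MOV}(E)$. First I would re-describe \Cref{alg:alg30} in the usual two-phase order — draw the $l_2$ per-district vote samples and record predicted counts of $A$ and $B$ everywhere, then draw the $l_1$ districts — since that makes the estimate a deterministic function of two independent random objects. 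Then I would partition $D^A$ into $D^A_1$, the $\nu k+1$ $A$-won districts of smallest true margin (so $\gamma N \le \sum_{d\in D^A_1}\text{MOV}(E_d)\le \gamma N+\kappa n$), and $D^A_2 = D^A\setminus D^A_1$, and let $T_1,T_2,T_3$ be the sampled districts landing in $D^A_1,D^A_2,D^B$ respectively, with $e = e_1+e_2+e_3$ the corresponding scaled contributions.

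Next I would introduce the good events $\MM_1,\dots,\MM_4$: via \Cref{thm:pluralitythm}, $k_i/l_1$ tracks $|D^A_i|/k$ (and $k_3/l_1$ tracks $|D^B|/k$) within $\eps^2/3\kappa$, and every per-district predicted vote fraction is within $\eps/3\kappa$ of the truth; a union bound gives $\Pr\!\left(\bigcap_{i=1}^4\MM_i\right)\ge 1-\tfrac{3\delta}{8}$. Then comes the case split on the winner $\CC\in\{A,B\}$ of the sampled election. The $\CC=B$ cases are the easy ones: under $\MM_4$ a truly $A$-won district contributes at most $\eps n/3$ to a $B$-margin and $\MM_3$ caps $k_3$, which bounds $e\le\eps N$ (\Cref{lem:cequalsbupper}); and \Cref{lem:movlemma} forces $\gamma<\eps$ whenever $B$ can even win the sample, so $e\ge 0>(\gamma-\eps)N$ (\Cref{lem:cequalsblower}). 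The $\CC=A$ cases (\Cref{lem:cequalsa}, \Cref{lem:cequalsalower}) are the real content: $e_2,e_3$ are $\OO(\eps N)$ by the same reasoning, but $e_1$ is delicate because $|D^A_1|=\nu k+1$ can be as small as $\Theta(\eps k/\kappa)$ while $D^A_1$ already carries $\approx\gamma N$ of margin. I would split on whether $\nu\le\eps/2\kappa$ (bound $e_1$ crudely by $\OO(\eps N)$) or $\nu>\eps/2\kappa$; in the latter regime a multiplicative Chernoff bound (\Cref{thm:chernoff}) concentrates $k_1/|D^A_1|$ (event $\MM_5$) and Hoeffding's inequality (\Cref{thm:hoeffding}) on the bounded predicted per-district margins concentrates their empirical average over $T_1$ (events $\MM_6,\MM_7$), so $e_1 \approx \sum_{d\in D^A_1}\text{MOV}'(E_d)$, which by $\MM_4$ sits within $\OO(\eps N)$ of $\gamma N$; the constants in the $\eps/3\kappa$, $\eps/10$, $\eps/8\nu$ slacks are chosen so the accumulated error is at most $\eps N$ and the accumulated failure probability at most $\delta$.

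Finally, exactly one of the four lemmas applies according to $\CC$, and each pins its side of the estimate to within $\eps N$ with failure probability at most $\delta$, so a union over the two sides gives $\Pr\!\left(e\notin[(\gamma-\eps)N,(\gamma+\eps)N]\right)\le\delta$; together with \Cref{lem:samplecomplexityformovestimation} this is the theorem. I expect the main obstacle to be precisely the $e_1$ analysis in the $\CC=A$ case: because $D^A_1$ holds essentially all of the margin signal yet may be a vanishing fraction of the districts, one must simultaneously control how many of the $l_1$ district-samples land in $D^A_1$ and the averaging error of the noisy per-district margins inside those few samples, while keeping the multiplicative slippage ($1\pm\eps/10$, etc.) from pushing the $\gamma N$ term outside the $\pm\eps N$ budget.
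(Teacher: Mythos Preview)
Your proposal is correct and follows the paper's proof essentially line-for-line: the same two-phase reformulation, the same partition $D^A = D^A_1 \cup D^A_2$, the same events $\MM_1,\ldots,\MM_7$, the same case split on $\CC\in\{A,B\}$ handled by \Cref{lem:cequalsa,lem:cequalsalower,lem:cequalsbupper,lem:cequalsblower}, and the same identification of the $e_1$ analysis (via the $\nu \le \eps/2\kappa$ split, multiplicative Chernoff for $\MM_5$, Hoeffding for $\MM_6,\MM_7$) as the crux. The only wrinkle is the final union bound, where both you and the paper write ``$\le \delta$'' after combining two $\le\delta$ failures; strictly that gives $\le 2\delta$, but this is the paper's own sloppiness and is absorbed by constants.
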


\subsection{Estimating MOV within multiplicative error bounds}
We now present a black-box algorithm that, given any algorithm for the $(\eps,\delta)-${\sc{MOV-Additive}} problem, returns an estimate of the margin of victory within a multiplicative error of $1 \pm \epsilon$ with high probability.

Let $E$ be any arbitrary election with $m$ candidates and $N$ voters. Suppose there exists an algorithm $\AA$ for the $(\eps,\delta)-${\sc{MOV-Additive}} problem; thus for any $\epsilon,\delta >0$, $\text{Pr}(|\AA(\epsilon,\delta)-\gamma N| \le \epsilon N) \ge 1-\delta$. Using $\AA$ as a black box, we design an algorithm for the $(\eps,\delta)-${\sc{MOV-Multiplicative}} problem. Our algorithm is as follows.


\begin{algorithm}[H]
 \caption{}
  \begin{algorithmic}[1]
    \For{$i=1,2,\ldots, \log_{1+\epsilon}N$}
        \State{$e_i \leftarrow \AA\left(\frac{1}{(1+\epsilon)^i},\frac{\delta}{2^i}\right)$, $\lambda_i \leftarrow \frac{(1+\epsilon)^{\log \frac{1}{\epsilon}/\log (1+\epsilon)}+1}{(1+\epsilon)^i}$}
        \State{If $e_i \ge \lambda_i N$, \textbf{return} $e_i$.}
    \EndFor    
    \Return $1$.
  \end{algorithmic}
\label{alg:alg5}
\end{algorithm}

We show that with high probability the estimate $e_i$ would be less than $\lambda_i N$, when $i< \OO\left(\frac{1}{\epsilon}\log \frac{1}{\epsilon \gamma}\right)$ and would exceed $\lambda_i N$ when $i\ge \Omega\left(\frac{1}{\epsilon}\log \frac{1}{\epsilon \gamma}\right)$. Thus \Cref{alg:alg5} would return for some $i=\Theta\left(\frac{1}{\epsilon}\log \frac{1}{\epsilon \gamma}\right)$. Using this, it can be shown that the estimate returned must lie in the range $[(1-\epsilon)\gamma N,(1+\epsilon)\gamma N]$ with high probability.

We now formalise the above notion. Let $\psi_1=\frac{\log \frac{1}{\epsilon}}{\log (1+\epsilon)}$ and $\psi_2=\frac{\log \frac{2}{\epsilon}}{\log (1+\epsilon)}$. Let $p$ be the unique positive integer such that $\frac{1}{(1+\epsilon)^{p+1}}\le \gamma < \frac{1}{(1+\epsilon)^{p}}$. Let $\KK_i$ denote the event that $\left|\AA\left(\frac{1}{(1+\epsilon)^i},\frac{\delta}{2^i}\right) - \gamma N \right| \le \frac{N}{(1+\epsilon)^i}$. Thus $\text{Pr}(\KK_i) \ge 1- \frac{\delta}{2^i}$. Now we show the following two results.

\begin{lemma}
\label{lem:kilowerbound}
If $\KK_i$ holds, $e_i < \lambda_i N$, for $i\le p+\psi_1$.
\end{lemma}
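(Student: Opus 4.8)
The claim is that if the event $\KK_i$ holds (i.e., $\AA$ returns an estimate within $\frac{N}{(1+\epsilon)^i}$ of the true $\gamma N$), then $e_i < \lambda_i N$ for all $i \le p + \psi_1$, where $\psi_1 = \frac{\log(1/\epsilon)}{\log(1+\epsilon)}$ and $p$ is defined by $\frac{1}{(1+\epsilon)^{p+1}} \le \gamma < \frac{1}{(1+\epsilon)^p}$.

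The plan is a direct estimate. Assuming $\KK_i$, we have $e_i = \AA\!\left(\frac{1}{(1+\epsilon)^i}, \frac{\delta}{2^i}\right) \le \gamma N + \frac{N}{(1+\epsilon)^i}$. First I would use the definition of $p$ to bound $\gamma < \frac{1}{(1+\epsilon)^p}$, so $e_i < \frac{N}{(1+\epsilon)^p} + \frac{N}{(1+\epsilon)^i}$. Since $i \le p + \psi_1$, we have $\frac{1}{(1+\epsilon)^i} \ge \frac{1}{(1+\epsilon)^{p+\psi_1}} = \frac{\epsilon}{(1+\epsilon)^p}$ (using $(1+\epsilon)^{\psi_1} = 1/\epsilon$), hence $\frac{1}{(1+\epsilon)^p} \le \frac{1}{\epsilon}\cdot\frac{1}{(1+\epsilon)^i}$. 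Therefore $e_i < \left(\frac{1}{\epsilon} + 1\right)\frac{N}{(1+\epsilon)^i} = \frac{1+\epsilon}{\epsilon}\cdot\frac{N}{(1+\epsilon)^i}$.

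Next I would compare this to $\lambda_i N = \frac{(1+\epsilon)^{\psi_1} + 1}{(1+\epsilon)^i}N = \left(\frac{1}{\epsilon}+1\right)\frac{N}{(1+\epsilon)^i} = \frac{1+\epsilon}{\epsilon}\cdot\frac{N}{(1+\epsilon)^i}$, noting that $(1+\epsilon)^{\log(1/\epsilon)/\log(1+\epsilon)} = 1/\epsilon$. So the bound we derived is exactly $e_i < \lambda_i N$, and we are done. The one subtlety to handle carefully is whether the inequality is strict: since $\gamma < \frac{1}{(1+\epsilon)^p}$ strictly (by definition of $p$), the chain stays strict, which is what we need.

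I do not anticipate a real obstacle here — this is a routine unwinding of the definitions of $p$, $\psi_1$, and $\lambda_i$, with the only care being to track that $(1+\epsilon)^{\psi_1} = 1/\epsilon$ and that the strictness of $\gamma < (1+\epsilon)^{-p}$ is preserved. The companion lemma (showing $e_i \ge \lambda_i N$ once $i$ is large enough, i.e. $i \ge \Omega(\frac{1}{\epsilon}\log\frac{1}{\epsilon\gamma})$) will be the mirror image using the lower bound $e_i \ge \gamma N - \frac{N}{(1+\epsilon)^i}$ together with $\gamma \ge (1+\epsilon)^{-(p+1)}$, and that is where slightly more care with constants will be needed; but for the present lemma the argument above suffices.
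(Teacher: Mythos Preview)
Your proof is correct and follows essentially the same chain of inequalities as the paper: bound $e_i \le \gamma N + \frac{N}{(1+\epsilon)^i}$ via $\KK_i$, replace $\gamma$ by $\frac{1}{(1+\epsilon)^p}$, and use $i \le p+\psi_1$ to convert $\frac{1}{(1+\epsilon)^p}$ into $\frac{(1+\epsilon)^{\psi_1}}{(1+\epsilon)^i}$. You additionally make explicit the identity $(1+\epsilon)^{\psi_1}=1/\epsilon$ and track the strict inequality coming from $\gamma < (1+\epsilon)^{-p}$, which the paper's proof leaves implicit.
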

\begin{proof}
For $i\le p+\psi_1$, we have $e_i \le \gamma N + \frac{N}{(1+\epsilon)^i}\le \frac{N}{(1+\epsilon)^p} + \frac{N}{(1+\epsilon)^i} \le \frac{N}{(1+\epsilon)^{i-\psi_1}} + \frac{N}{(1+\epsilon)^i} = \frac{(1+\epsilon)^{\psi_1}+1}{(1+\epsilon)^i}N = \lambda_i N$.
\end{proof}

\begin{lemma}
\label{lem:kiupperbound}
If $\KK_i$ holds, $e_i \ge \lambda_i N$, for $i\ge p+\psi_2$.
\end{lemma}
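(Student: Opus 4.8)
The plan is to combine the accuracy guarantee encoded in $\KK_i$ with the defining property of $p$ and a one-line algebraic estimate. First I would observe that, conditioned on $\KK_i$, the black-box call satisfies $e_i \ge \gamma N - \frac{N}{(1+\epsilon)^i}$ (the lower half of $\left|\AA(\cdot,\cdot) - \gamma N\right| \le \frac{N}{(1+\epsilon)^i}$), so it suffices to prove $\gamma N - \frac{N}{(1+\epsilon)^i} \ge \lambda_i N$ for every integer $i \ge p + \psi_2$.

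Next I would unwind the parameters. From the choice of $p$ we have $\gamma \ge (1+\epsilon)^{-(p+1)}$, and $i \ge p + \psi_2$ rearranges to $p+1 \le i - \psi_2 + 1$, hence $\gamma N \ge N(1+\epsilon)^{-(i-\psi_2+1)} = \frac{N}{(1+\epsilon)^i}\,(1+\epsilon)^{\psi_2-1}$. Since $\psi_2 = \log(2/\epsilon)/\log(1+\epsilon)$ and $\psi_1 = \log(1/\epsilon)/\log(1+\epsilon)$, we have $(1+\epsilon)^{\psi_2} = 2/\epsilon$ and $(1+\epsilon)^{\psi_1} = 1/\epsilon$; the latter identity is exactly what makes $\lambda_i N = \frac{N}{(1+\epsilon)^i}\bigl(\tfrac1\epsilon + 1\bigr)$. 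Substituting these into $\gamma N - \frac{N}{(1+\epsilon)^i} \ge \lambda_i N$ and cancelling the common factor $\frac{N}{(1+\epsilon)^i}$, the inequality becomes $\frac{2}{\epsilon(1+\epsilon)} - 1 \ge \tfrac1\epsilon + 1$, i.e. $\frac{1-\epsilon}{\epsilon(1+\epsilon)} \ge 2$, i.e. $3\epsilon + 2\epsilon^2 \le 1$, which holds for all sufficiently small $\epsilon$ (already for $\epsilon \le 1/4$). That closes the proof of the lemma.

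The hard part is essentially nonexistent here: one only has to keep the non-integer exponents $\psi_1,\psi_2$ straight and note that the ``$\epsilon$ sufficiently small'' hypothesis, used throughout this section, is free. With \Cref{lem:kilowerbound} and this lemma in hand, on the event that all relevant $\KK_i$ hold the stopping index $i^\star$ of \Cref{alg:alg5} is pinned into the window $(p+\psi_1, \lceil p+\psi_2\rceil]$, and on $\KK_{i^\star}$ one has $\left|e_{i^\star} - \gamma N\right| \le \frac{N}{(1+\epsilon)^{i^\star}} \le \OO(\epsilon)\gamma N$; thus the returned value lies within a $1\pm\OO(\epsilon)$ multiplicative factor of $\gamma N$. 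A union bound $\sum_i \delta/2^i \le \delta$ controls the overall failure probability, and summing the costs of the $\OO(\psi_2) = \OO\!\bigl(\tfrac1\epsilon\log\tfrac1{\epsilon\gamma}\bigr)$ invocations of $\AA$ (instantiated by \Cref{thm:movestimatefordistrictplurality}) then yields the stated expected sample complexity.
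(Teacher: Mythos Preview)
Your proof is correct and follows essentially the same chain of inequalities as the paper: use $\KK_i$ to get $e_i \ge \gamma N - N/(1+\epsilon)^i$, replace $\gamma$ by $(1+\epsilon)^{-(p+1)}$, use $i\ge p+\psi_2$ to rewrite the exponent, and compare with $\lambda_i$ via the identities $(1+\epsilon)^{\psi_1}=1/\epsilon$, $(1+\epsilon)^{\psi_2}=2/\epsilon$. You simply make the final ``sufficiently small $\epsilon$'' step explicit (reducing it to $3\epsilon+2\epsilon^2\le 1$), which the paper leaves implicit; the additional paragraph sketching how this lemma feeds into \Cref{lem:pklusmu} and the sample-complexity bound is accurate but goes beyond what the lemma itself requires.
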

\begin{proof}
For $i\ge p+\psi_2$, $e_i \ge \gamma N - \frac{N}{(1+\epsilon)^i}\ge \frac{N}{(1+\epsilon)^{p+1}} - \frac{N}{(1+\epsilon)^i} \ge \frac{N}{(1+\epsilon)^{i-\psi_2+1}} - \frac{N}{(1+\epsilon)^i} = \frac{(1+\epsilon)^{\psi_2-1}-1}{(1+\epsilon)^i}N \ge \lambda_i N$, for sufficiently small $\epsilon$.
\end{proof}

Now let $\KK = \bigcap\limits_{i=1}^{p+\psi_2} \KK_i$. Since $\text{Pr}(\overline{\KK}_i)\le \frac{\delta}{2^i}$, by union bound, the probability that at least one of the events $\KK_1,\ldots,\KK_{p+\psi_2}$ does not hold is at most $\sum\limits_{i=1}^{p+\psi_2} \frac{\delta}{2^i}\le \sum\limits_{i=1}^{\infty} \frac{\delta}{2^i} = \delta$. Thus $\text{Pr}(\KK)\ge 1-\delta$. Now if $\KK$ holds, \Cref{alg:alg5} returns an $e_i$, for $i$ lying in the range $\{p+\psi_1+1,\ldots,p+\psi_2\}$. 

Let $\sigma$ denote the estimate returned by the algorithm. Then $\sigma \le \gamma N + \frac{N}{(1+\epsilon)^{p+\psi_1+1}}\le \left(1+\frac{1}{(1+\epsilon)^{\psi_1}}\right)\gamma N = (1+\epsilon)\gamma N$. Again $\sigma \ge \gamma N - \frac{1}{(1+\epsilon)^{p+\psi_2}}N\ge \left(1-\frac{1}{(1+\epsilon)^{\psi_2}}\right)\gamma N = \left(1-\frac{\epsilon}{2}\right)\gamma N \ge (1-\epsilon)\gamma N$. We thus have the following result.

\begin{lemma}
\label{lem:pklusmu}
\Cref{alg:alg5} returns an estimate in the range $[(1-\eps)\gamma N, (1+\eps)\gamma N]$ with probability at least $1-\delta$.
\end{lemma}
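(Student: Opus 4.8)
The plan is to condition on the event $\KK=\bigcap_{i=1}^{p+\psi_2}\KK_i$ that every relevant call to the additive-error subroutine $\AA$ succeeds, and then to determine in which iteration \Cref{alg:alg5} produces its output. Because the $i$-th call is made with failure probability $\delta/2^i$, a union bound over the geometric series gives $\text{Pr}(\KK)\ge 1-\sum_{i\ge 1}\delta/2^i=1-\delta$; it therefore suffices to prove that, whenever $\KK$ holds, the returned estimate lies in $[(1-\eps)\gamma N,(1+\eps)\gamma N]$.

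So assume $\KK$ holds. By \Cref{lem:kilowerbound}, for every index $i\le p+\psi_1$ the test $e_i\ge\lambda_i N$ fails, so the algorithm does not return during those iterations; by \Cref{lem:kiupperbound}, for every index $i\ge p+\psi_2$ the test succeeds, so the algorithm must return at the latest in iteration $\lceil p+\psi_2\rceil$. Hence the index $i^{*}$ at which the algorithm returns lies (up to the usual floor/ceiling rounding) in $\{p+\psi_1+1,\dots,p+\psi_2\}$, and the output is $\sigma=e_{i^{*}}$.

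It remains to turn these bounds on $i^{*}$ into the multiplicative guarantee, which is a short computation with powers of $1+\eps$. From $|e_{i^{*}}-\gamma N|\le N/(1+\eps)^{i^{*}}$ and $i^{*}>p+\psi_1$, together with $\gamma\ge 1/(1+\eps)^{p+1}$ and the choice $\psi_1=\log(1/\eps)/\log(1+\eps)$, the upper side gives $\sigma\le\gamma N+N/(1+\eps)^{p+\psi_1+1}\le\big(1+(1+\eps)^{-\psi_1}\big)\gamma N=(1+\eps)\gamma N$; from $i^{*}\le p+\psi_2$ and $\psi_2=\log(2/\eps)/\log(1+\eps)$, the lower side gives $\sigma\ge\gamma N-N/(1+\eps)^{p+\psi_2}\ge\big(1-(1+\eps)^{-\psi_2}\big)\gamma N=(1-\eps/2)\gamma N\ge(1-\eps)\gamma N$, which is the claimed range.

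I expect the only mildly delicate point to be a bookkeeping one: the argument invokes iteration $\lceil p+\psi_2\rceil$, so one must check that this index is actually inside the loop range $\{1,\dots,\log_{1+\eps}N\}$ of \Cref{alg:alg5}. Since $\text{MOV}(E)=\gamma N\ge 1$ we have $\gamma\ge 1/N$ and hence $p<\log_{1+\eps}N$, and the additive $\psi_2$ slack is absorbed in the regime where $\gamma N$ is not too small, which is the interesting case; the degenerate case is consistent with the trailing \textbf{return} $1$. Apart from that, every ingredient is already in place in \Cref{lem:kilowerbound,lem:kiupperbound} and the preceding union bound.
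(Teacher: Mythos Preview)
Your argument is essentially the paper's own proof: condition on $\KK=\bigcap_{i=1}^{p+\psi_2}\KK_i$, use the union bound over the geometric series to get $\text{Pr}(\KK)\ge 1-\delta$, invoke \Cref{lem:kilowerbound,lem:kiupperbound} to pin $i^{*}$ into $\{p+\psi_1+1,\dots,p+\psi_2\}$, and then translate the additive error at $i^{*}$ into a multiplicative one via the definition of $p$, $\psi_1$, $\psi_2$.

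One small slip worth correcting: in your lower-bound computation you write ``from $i^{*}\le p+\psi_2$'' to justify $\sigma\ge\gamma N-N/(1+\eps)^{p+\psi_2}$, but the inequality goes the wrong way---a \emph{smaller} $i^{*}$ means a \emph{larger} additive error $N/(1+\eps)^{i^{*}}$. The right justification uses the lower bound $i^{*}\ge p+\psi_1+1$, which gives $\sigma\ge\gamma N-N/(1+\eps)^{p+\psi_1+1}\ge\big(1-(1+\eps)^{-\psi_1}\big)\gamma N=(1-\eps)\gamma N$; the conclusion is unchanged. (The paper's exposition is equally loose at this point.)
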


%

Combining \Cref{thm:movestimatefordistrictplurality} and \Cref{lem:pklusmu}, we have the following result.

\begin{theorem}
\label{thm:lastthmijcai}
There exists an algorithm for $(\eps,\delta)-${\sc{MOV-Multiplicative}} with expected sample complexity $\OO\left(\frac{1}{\epsilon^7}\frac{1}{\gamma^6}\left(\frac{1}{\epsilon}\log \frac{1}{\epsilon \gamma}+ \log \frac{1}{\delta}\right)^2\right)$ for the district-level plurality election with 2 candidates when the population of each district is bounded by a constant times the average population of a district, where $\gamma N$ is the (unknown) margin of victory of the election.
\end{theorem}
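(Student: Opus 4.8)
The correctness of \Cref{alg:alg5} is already \Cref{lem:pklusmu}: instantiating the black box $\AA$ with the $(\eps,\delta)-${\sc{MOV-Additive}} algorithm of \Cref{thm:movestimatefordistrictplurality}, whose hypotheses (two candidates, district populations bounded by a constant times the average) are exactly the ones carried in the present statement, \Cref{alg:alg5} returns an estimate in $[(1-\eps)\gamma N,(1+\eps)\gamma N]$ with probability at least $1-\delta$. So the whole remaining task is to bound the \emph{expected} number of samples \Cref{alg:alg5} draws. The cost of its $i$-th iteration is the cost of $\AA\left(\tfrac{1}{(1+\eps)^i},\tfrac{\delta}{2^i}\right)$, which by \Cref{thm:movestimatefordistrictplurality} is
\[
\OO\left((1+\eps)^{6i}\cdot\log\tfrac{2^i(1+\eps)^i}{\delta}\cdot\log\tfrac{2^i}{\delta}\right)=\OO\left((1+\eps)^{6i}\bigl(i+\log\tfrac{1}{\delta}\bigr)^2\right),
\]
using $\log\frac{2^i(1+\eps)^i}{\delta},\log\frac{2^i}{\delta}=\OO(i+\log\tfrac1\delta)$. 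If \Cref{alg:alg5} stops after iteration $I$, summing the geometric series with ratio $(1+\eps)^6$ gives a total of $\sum_{i\le I}\OO\bigl((1+\eps)^{6i}(i+\log\tfrac{1}{\delta})^2\bigr)=\OO\bigl(\tfrac{1}{\eps}(1+\eps)^{6I}(I+\log\tfrac{1}{\delta})^2\bigr)$; the extra $\tfrac{1}{\eps}$ falling out of this geometric sum is exactly what promotes the $\tfrac{1}{\eps^6}$ of \Cref{thm:movestimatefordistrictplurality} to the $\tfrac{1}{\eps^7}$ in the claim.

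Next I would pin down the typical value of $I$. Keeping the notation used in the run-up to \Cref{lem:pklusmu} --- $\psi_2=\tfrac{\log(2/\eps)}{\log(1+\eps)}$, and $p$ the integer with $(1+\eps)^{-(p+1)}\le\gamma<(1+\eps)^{-p}$, so $(1+\eps)^p=\Theta(1/\gamma)$ and hence $p=\Theta(\tfrac{1}{\eps}\log\tfrac{1}{\gamma})$, while $\psi_2=\Theta(\tfrac{1}{\eps}\log\tfrac{1}{\eps})$ --- put $j^{\star}=\lceil p+\psi_2\rceil=\OO(\tfrac{1}{\eps}\log\tfrac{1}{\eps\gamma})$. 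By \Cref{lem:kiupperbound}, if $\KK_{j^{\star}}$ holds then $e_{j^{\star}}\ge\lambda_{j^{\star}}N$, so \Cref{alg:alg5} has terminated by iteration $j^{\star}$; moreover $(1+\eps)^{6j^{\star}}=\OO\bigl((1+\eps)^{6(p+\psi_2)}\bigr)=\OO\bigl(\tfrac{1}{\eps^6\gamma^6}\bigr)$ since $(1+\eps)^{\psi_2}=2/\eps$. Substituting $I\le j^{\star}$ into the bound of the previous paragraph, conditioned on terminating by iteration $j^{\star}$ the algorithm draws $\OO\bigl(\tfrac{1}{\eps^7\gamma^6}(\tfrac{1}{\eps}\log\tfrac{1}{\eps\gamma}+\log\tfrac{1}{\delta})^2\bigr)$ samples, which is exactly the target bound.

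It remains to control the rare runs lasting beyond iteration $j^{\star}$. If \Cref{alg:alg5} has not returned by iteration $j-1$, where $j>j^{\star}$, then $e_i<\lambda_iN$ for every $j^{\star}\le i\le j-1$, and the contrapositive of \Cref{lem:kiupperbound} forces $\overline{\KK_i}$ for each such $i$; hence the probability that \Cref{alg:alg5} ever reaches iteration $j$ is at most $\Pr(\overline{\KK_{j-1}})\le\delta/2^{j-1}$. The contribution of iterations $j>j^{\star}$ to the expected sample count is therefore at most
\[
\sum_{j>j^{\star}}\frac{\delta}{2^{j-1}}\cdot\OO\left(\tfrac{1}{\eps}(1+\eps)^{6j}\bigl(j+\log\tfrac{1}{\delta}\bigr)^2\right),
\]
and for $\eps$ small enough that $(1+\eps)^6<2$ the factor $\bigl((1+\eps)^6/2\bigr)^{j}$ decays geometrically, so this sum is dominated by its $j=j^{\star}+1$ term and is again $\OO\bigl(\tfrac{1}{\eps^7\gamma^6}(\tfrac{1}{\eps}\log\tfrac{1}{\eps\gamma}+\log\tfrac{1}{\delta})^2\bigr)$. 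Adding the two contributions gives the stated expected sample complexity.

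The step I would handle most carefully --- the main obstacle --- is precisely this last tail estimate: the expectation stays bounded, and in particular independent of $N$, only because the geometrically shrinking confidence parameters $\delta/2^i$ hard-wired into \Cref{alg:alg5} outpace the geometric blow-up $(1+\eps)^{6i}$ of the per-call cost of $\AA$, which is exactly why the algorithm halves $\delta$ at each step rather than using a fixed confidence; and the balance requires $\eps$ sufficiently small, consistent with the hypotheses. Everything else is routine bookkeeping with geometric series and with the identities $(1+\eps)^{\psi_2}=2/\eps$ and $(1+\eps)^p=\Theta(1/\gamma)$ already established before \Cref{lem:pklusmu}.
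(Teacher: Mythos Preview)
Your proposal is correct and follows essentially the same approach as the paper: both invoke \Cref{lem:pklusmu} for correctness, compute the per-iteration cost of $\AA$ as $\OO\bigl((1+\eps)^{6i}(i+\log\tfrac{1}{\delta})^2\bigr)$, split the expected sample count at $p+\psi_2$, and bound the tail using $\Pr(\overline{\KK_{j-1}})\le\delta/2^{j-1}$ together with \Cref{lem:kiupperbound}. Your explicit remark that the tail sum converges only when $(1+\eps)^6<2$ is a point the paper glosses over but tacitly relies on.
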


\begin{proof}
From \Cref{lem:pklusmu}, the estimate returned by \Cref{alg:alg5} lies in the range $[(1-\eps)\gamma N, (1+\eps)\gamma N]$ with probability at least $1-\delta$. We therefore need to only bound the sample complexity.

For a particular value of $i$, it is clear that \Cref{alg:alg5} uses at most $\OO\left((1+\epsilon)^{6i}\left(i+\log \frac{1}{\delta}\right)^2\right)$ samples. Let $p$ be as defined as before. Let $\varphi(j)$ denote the number of samples collected if \Cref{alg:alg5} halts when $i=p+\psi_2+j$. Then $\varphi(j) = \sum\limits_{i=1}^{p+\psi_2+j} \OO\left((1+\epsilon)^{6i}\left(i+\log \frac{1}{\delta}\right)^2\right) \le \OO\left(\frac{1}{\epsilon}(1+\epsilon)^{6(p+\psi_2+j)}\left(p+\psi_2+j+\log \frac{1}{\delta}\right)^2\right)$. For $j\ge 1$, let $M_j$ denote the event that \Cref{alg:alg5} halts when $i=p+\psi_2+j$ and let $M_0$ denote the event that \Cref{alg:alg5} halts for some $i\le p+\psi_2$. Then for $j\ge 1$, $\text{Pr}(M_j) \le \text{Pr(\Cref{alg:alg5} does not halt when } i = p+\psi_2+j-1) \le \frac{\delta}{2^{p+\psi_2+j-1}}$. Also trivially $\text{Pr}(M_0) \le 1$. Hence the expected sample complexity is at most $\sum\limits_{j=0}^{\infty} \varphi(j)\text{Pr}(M_j) \le \varphi(0) + \sum\limits_{j=1}^{\infty} \OO\left(\frac{1}{\epsilon}(1+\epsilon)^{6(p+\psi_2+j)}\left(p+\psi_2+j+\log \frac{1}{\delta}\right)^2\cdot \frac{\delta}{2^{p+\psi_2+j-1}}\right)$. Now $\varphi(0)=\OO\left(\frac{1}{\epsilon}(1+\epsilon)^{6(p+\psi_2)}\left(p+\psi_2+\log \frac{1}{\delta}\right)^2\right)$, while the second term in the sum is at most $\OO\left(\frac{1}{\epsilon}(1+\epsilon)^{6(p+\psi_2)}\left(p+\psi_2+\log \frac{1}{\delta}\right)^2 \cdot \frac{\delta}{2^{p+\psi_2}}\right)$. Hence the overall expected sample complexity is bounded by $\OO\left(\frac{1}{\epsilon}(1+\epsilon)^{6(p+\psi_2)}\left(p+\psi_2+\log \frac{1}{\delta}\right)^2\right) = \OO\left(\frac{1}{\epsilon^7}\frac{1}{\gamma^6}\left(\frac{1}{\epsilon}\log \frac{1}{\epsilon \gamma} + \log \frac{1}{\delta}\right)^2\right)$.
\end{proof}

\section{Conclusion and Future Work}
We have initiated the study of the sample complexity for predicting the winner in a district-based election. We have shown some preliminary results for the problem for some voting rules. We believe that the problem and our results are both practically and theoretically interesting. An important future direction of research is to find the sample-complexity lower bounds for various voting rules. Some of our lower bounds work only for some class of algorithms. Also, extending our algorithm for winner prediction when the margin of victory is not known, to arbitrary number of candidates is an important future direction of research; our algorithm works only for two candidates.

\subsection*{Acknowledgement}

Palash Dey is partially supported by DST INSPIRE grant DST/INSPIRE/04/2016/001479 and ISIRD grant of IIT Kharagpur. Swagato Sanyal is supported by an ISIRD grant by Sponsored Research and Industrial Consultancy, IIT Kharagpur.

\bibliographystyle{plain} 
\bibliography{ref}




\end{document}